\providecommand{\email}[1]{\href{mailto:#1}{\nolinkurl{#1}\xspace}}
\newcommand{\TODO}[1]{\textcolor{red}{[TODO\@ifnotempty{#1}{: #1}]}}
\newcommand{\fred}[1]{\textcolor{purple}{[sandeep\@ifnotempty{#1}{: #1}]}}
\title{Hardness and Algorithms for Robust and Sparse Optimization}
\author{
Eric Price\thanks{Department of Electrical and
Computer Engineering, The University of Texas at Austin. \email{ecprice@gmail.com}}
\and
Sandeep Silwal\thanks{Electrical Engineering and Computer Science Department, Massachusetts Institute of Technology.
\email{silwal@mit.edu}}
\and
Samson Zhou\thanks{Computer Science Department, Carnegie Mellon University.
\email{samsonzhou@gmail.com}}}
\date{\today}
\begin{document}
\maketitle 
\allowdisplaybreaks

\begin{abstract}
We explore algorithms and limitations for sparse optimization problems such as sparse linear regression and robust linear regression. The goal of the sparse linear regression problem is to identify a small number of key features, while the goal of the robust linear regression problem is to identify a small number of erroneous measurements. Specifically, the sparse linear regression problem seeks a $k$-sparse vector $x\in\mathbb{R}^d$ to minimize $\|Ax-b\|_2$, given an input matrix $A\in\mathbb{R}^{n\times d}$ and a target vector $b\in\mathbb{R}^n$, while the robust linear regression problem seeks a set $S$ that ignores at most $k$ rows and a vector $x$ to minimize $\|(Ax-b)_S\|_2$.

We first show bicriteria, NP-hardness of approximation for robust regression building on the work of \cite{ODonnellWZ15} which implies a similar result for sparse regression. We further show fine-grained hardness of robust regression through a reduction from the minimum-weight $k$-clique conjecture. On the positive side, we give an algorithm for robust regression that achieves arbitrarily accurate additive error and uses runtime that closely matches the lower bound from the fine-grained hardness result, as well as an algorithm for sparse regression with similar runtime. Both our upper and lower bounds rely on a general reduction from robust linear regression to sparse regression that we introduce. Our algorithms, inspired by the 3SUM problem, use approximate nearest neighbor data structures and may be of independent interest for solving sparse optimization problems. For instance, we demonstrate that our techniques can also be used for the well-studied sparse PCA problem.

\end{abstract}

\section{Introduction}
Sparsity is often a key feature embedded within large datasets and fundamentals tasks in data science and machine learning. 
For example, in the sparse linear regression or variable selection problem, the goal is to find a $k$-sparse vector $x\in\mathbb{R}^d$ to minimize $\|Ax-b\|_2$, given an input matrix $A\in\mathbb{R}^{n\times d}$ and a target vector $b\in\mathbb{R}^n$. 
The intuition is that the target vector can be effectively summarized as the linear combination of a small number of columns of $A$ that denote the important features of the data. 
Similarly, in the robust linear regression or constraint selection problem, the goal is to find a set $S$ that ignores at most $k$ rows and a vector $x\in\mathbb{R}^d$ to minimize $\|(Ax-b)_S\|_2$, given an input matrix $A\in\mathbb{R}^{n\times d}$ and a target vector $b\in\mathbb{R}^n$, where the notation $(\cdot)_S$ means we only measure the loss over coordinates in $S$. 
In other words, we suppose that up to $k$ entries of the target vector can be arbitrarily corrupted and thus ignored in the computation of the resulting empirical risk minimizer. 
In both the sparse and robust linear regression problems, the $L_2$ loss function can be naturally generalized to other loss functions which align with specific goals, such as truncating or mitigating the penalty beyond a certain threshold. 

Sparsity is a desirable attribute for model design in machine learning, statistics, estimation, and signal processing. 
Simpler models with a small number of variables not only provide more ease for interpetability, but also tend to have smaller generalization error~\cite{FosterKT15}. 
A common algorithmic approach for acquiring sparse vectors for the task of sparse linear regression is to use greedy algorithms that iteratively select features, e.g., stepwise selection, backward elimination, and least angle regression. 
Another common technique for inducing sparse solutions is to penalize the objective with a regularization function. 
For example, the well-known LASSO adds a penalty term to the regression objective that is proportional to the $L_1$ norm of the underlying minimizer $x$~\cite{tibshirani1996regression} while ridge regression~\cite{hoerl1970ridge} often adds a penalty that is proportional to the $L_2$ norm of $x$. 

\subsection{Our Results}
In this paper, we study algorithms and limitations for sparse regression and robust regression. 
In particular, we consider the following problems: 

\begin{problem}[Robust Regression]
\label{prob:robust:reg}
Given $A \in \R^{n \times d}, b \in \R^n$, a loss function $\calL:\R^*\to\R$, and integer $0 < k \le n$, find $T \subset [n]$ satisfying $|T|\le k$ and $x\in\R^n$ to minimize $\calL((Ax-b)_T)$, where $(Ax-b)_T$ denotes that we only measure the loss on the coordinates in $T$. The coordinates not in $T$ are called \emph{ignored}.
\end{problem}

\begin{problem}[Sparse Regression]
\label{prob:sparse:reg}
Given $A \in \R^{n \times d}, b \in \R^n$, a loss function $\calL:\R^*\to\R$, and integer $0 < k \le n$, find $x\in\R^n$ with $\|x\|_0\le k$ to minimize $\calL(Ax-b)$.
\end{problem}
\cite{ODonnellWZ15} showed the hardness of approximation for maximizing the number of satisfied linear equations; their result can be translated to show bicriteria robust regression is NP-hard, i.e., it is NP-hard to achieve a multiplicative factor approximation to the optimal loss while allowing a constant factor larger parameter for sparsity. 
We introduce a further reduction to show the NP-hardness of bicriteria sparse regression. 

\begin{theorem}
\label{thm:sparse:bicrit:hard}
Let $\calL:\mathbb{R}^n\to\mathbb{R}$ be any loss function such that $\calL(0^n)=0$ and $\calL(x)>0$ for $x\neq 0^n$. 
Given a matrix $X \in \R^{a \times b}$, vector $c \in \R^{a}$, and a sparsity parameter $t$, let $OPT = \min_w\calL(Xw-c)$ where $\|w\|_0 = t$.
Then for any $C_1 > 1$ and any constant $C_2 > 1$, it is NP-hard to find $w'$ satisfying $\|w'\|_0 = C_2t$ such that $\calL(Xw'-c) \le C_1 \cdot OPT$.
\end{theorem}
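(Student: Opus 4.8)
The plan is to transfer the NP-hardness of bicriteria robust regression (which follows from \cite{ODonnellWZ15}, as noted above) to sparse regression via a direct reduction from robust to sparse regression -- this is the general reduction that the rest of the paper also builds on. Since the only hypotheses on $\calL$ are $\calL(0^n)=0$ and $\calL(y)>0$ for $y\neq 0^n$, in particular $\calL\ge 0$ with equality only at the origin, the whole argument can be carried out in the ``loss-zero'' regime: when $OPT=0$ the multiplicative factor $C_1$ does nothing, which is exactly why the theorem can allow $C_1$ to be non-constant, and the problem collapses to deciding exact satisfiability after deletions. Accordingly, I would start from the version of the \cite{ODonnellWZ15} hardness stating that for every constant $R>1$, given $(A\in\R^{n\times d},\,b\in\R^n,\,k)$ it is NP-hard to distinguish the \emph{yes} case, where some $x$ satisfies $Ax=b$ on all but at most $k$ coordinates, from the \emph{no} case, where no $x$ satisfies $Ax=b$ on all but $Rk$ coordinates.

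The reduction itself is short. From $(A,b,k)$ with $A\in\R^{n\times d}$, build the sparse instance $X=[\,A\mid I_n\mid 0_{n\times(d+k)}\,]$, $c=b$, and sparsity $t=d+k$; the appended zero columns exist only so that any vector of support at most $t$ can be padded to support exactly $t$. \emph{Completeness:} in the yes case, if $x^\ast$ fits $b$ outside a set $U$ of size at most $k$, then $w=(x^\ast;\,v^\ast;\,0)$ with $v^\ast$ supported on $U$ and chosen so that $Ax^\ast+v^\ast=b$ satisfies $\|w\|_0\le d+k=t$ and $Xw=c$, so $OPT=0$. \emph{Soundness:} given any $w'=(x';v';z')$ with $\|w'\|_0\le C_2 t$ and $Xw'=c$, let $S=\{i: v'_i=0\}$; then $|[n]\setminus S|\le\|v'\|_0\le C_2 t=C_2(d+k)$ and $(Ax'-b)_i=(Xw'-c)_i=0$ for all $i\in S$, so $x'$ fits $b$ outside a set of size at most $C_2(d+k)$.

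Putting these together completes the argument. Suppose some algorithm always outputs $w'$ with $\|w'\|_0=C_2 t$ and $\calL(Xw'-c)\le C_1\cdot OPT$. On a yes instance, $OPT=0$ forces $\calL(Xw'-c)=0$, hence $Xw'=c$ (using $\calL(y)>0$ for $y\neq 0$), hence by soundness an $x'$ fitting $b$ outside at most $C_2(d+k)$ coordinates; on a no instance no $w'$ with $\|w'\|_0=C_2 t$ can have $Xw'=c$, since that would again produce such an $x'$, so $\calL(Xw'-c)>0$. Thus checking whether $Xw'=c$ on the algorithm's output decides the \cite{ODonnellWZ15} gap problem in polynomial time, contradicting $\mathrm{P}\neq\mathrm{NP}$. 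The one parameter to arrange is $C_2(d+k)<Rk$, i.e.\ $R>C_2(1+d/k)$: for a fixed constant $C_2$, first pick the constant $R>C_2$ from the robust hardness, then, if necessary, replace $(A,b,k)$ by $p$ vertically stacked copies $([A;\dots;A],\,[b;\dots;b],\,pk)$, which leaves $d$ and the gap ratio $R$ unchanged while scaling $k$ to $pk$; taking $p$ large enough makes $d/k$ as small as needed.

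I expect the only real friction to be in the first and last steps: pinning down the precise gap statement extractable from \cite{ODonnellWZ15} (``fits all but $k$'' versus ``cannot fit all but $Rk$'', ideally with the number of variables small relative to $k$) and verifying that the stacking amplification preserves both the gap and NP-hardness. The reduction and the hardness transfer are otherwise routine once one observes that the general-$\calL$ hypothesis reduces everything to the exact, loss-zero case.
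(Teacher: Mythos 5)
Your proof is correct, but it takes a genuinely different route from the paper. The paper also reduces from the bicriteria-hard robust regression instance (Theorem~\ref{thm:sparse_constraints_bicriteria}, derived from \cite{ODonnellWZ15}), but its reduction passes to the orthogonal complement: it takes $X$ with $XA=0$, $c=-Xb$, and sparsity $t=k$, so that $t$-sparse solutions of $Xw=c$ are exactly the residuals $w=Ax-b$ of solutions ignoring $k$ rows; this preserves the sparsity parameter one-for-one, so the bicriteria factor $C_2$ transfers directly with no side conditions, and the same construction doubles as the general algorithmic reduction (Corollary~\ref{cor:general_reduction}) that the paper reuses for its fine-grained and algorithmic results. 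You instead augment the matrix as $[A\mid I_n\mid 0]$ with $c=b$ and $t=d+k$, which is the standard ``outliers as extra sparse coordinates'' trick; it is sound (the identity block's support bounds the set of violated equations, and the zero columns correctly handle the exact-sparsity requirement), but it inflates the sparsity budget to $d+k$, so you must additionally arrange $C_2(d+k)<Rk$, which you do by choosing the gap $R>C_2$ and then stacking $p$ copies of the equations to drive $d/k$ down --- a valid polynomial-time amplification that preserves both the yes/no gap and the variable set. Both arguments exploit the same key observation that the yes case has $OPT=0$, so arbitrary $C_1$ and arbitrary loss functions with $\calL(0^n)=0$, $\calL(x)>0$ otherwise come for free. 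What the paper's null-space reduction buys is parameter economy (sparsity stays $k$, no amplification step, no dependence on $d/k$) and reusability beyond NP-hardness; what your reduction buys is that it avoids computing a basis of the null space and is arguably more elementary, at the cost of the extra stacking step and of being unsuitable where the exact value of the sparsity parameter matters (e.g., the paper's fine-grained statements).
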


We remark that the statement of Theorem~\ref{thm:sparse:bicrit:hard} holds for any parameter $C_1>1$, showing that multiplicative approximation is NP-hard, even for arbitrarily large multiplicative factors that could depend on $n$ and $d$. 
It also applies to a large number of commonly used loss functions, such as $L_p$ loss or various $M$-estimators, e.g.,~\cite{ClarksonW15,ClarksonWW19,TukanWZBF22}. 
See Table~\ref{tab:loss} for more details.  
For completeness, we also formalize the proof of \cite{ODonnellWZ15} to show bicriteria hardness of robust regression. 

\begin{table}[!htb]
\centering
\begin{tabular}{c|c}\hline
Loss function & Formulation \\\hline
$L_p$ & $|x|^p$ \\\hline
Cauchy & $\frac{\lambda^2}{2}\log\left(1+(x/\lambda)^2\right)$ \\\hline
Fair & $\lambda|x|-\lambda^2\ln\left(1+\frac{|x|}{\lambda}\right)$ \\\hline
Geman-McClure & $\frac{x^2}{2 + 2x^2}$  \\\hline
Huber &  $\begin{cases} x^2/2 & \text{if }|x| \leq \lambda\\
\lambda|x| - \lambda^2/2 & \text{otherwise} \end{cases}$ \\\hline
$L_1-L_2$ & $2\left(\sqrt{1+\frac{x^2}{2}}-1\right)$ \\\hline
Tukey & $\begin{cases} \frac{\lambda^2}{6}\left(1-\left(1-\frac{x^2}{\lambda^2}\right)^3\right) & \text{if } |x| \leq \lambda\\
\frac{\lambda^2}{6} & \text{otherwise} \end{cases}$ \\\hline
Welsch & $\frac{\lambda^2}{2}\left(1-e^{-\left(\frac{x}{\lambda}\right)^2}\right)$ \\\hline
\end{tabular}
\caption{Theorem~\ref{thm:sparse:bicrit:hard} shows the bicriteria hardness of approximation for sparse regression for common loss functions. The same bicriteria hardness also holds for robust regression.}
\label{tab:loss}
\end{table}

To prove Theorem~\ref{thm:sparse:bicrit:hard}, we prove a reduction (see Corollary \ref{cor:general_reduction}) which states that an algorithm for the sparse regression problem can be used to solve the robust regression problem using a polynomial time blow-up. 
However, it is known that the sparse regression problem requires runtime $O(n^k)$ under the minimum-weight $k$-clique conjecture~\cite{gupte2021finegrained}; could it be the case that robust regression is significantly easier? 
We give a fine-grained hardness result showing this is not the case:

\begin{theorem}
\label{thm:finegrain:robust}
For every $\eps>0$, there exists a sufficiently large $n$ such that the robust regression problem requires $ \Omega(n^{k/2+o(1)})$ randomized time, unless the minimum-weight $k$-clique conjecture is false.
\end{theorem}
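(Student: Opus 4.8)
The plan is a fine-grained reduction from the minimum-weight $k$-clique problem. Starting from an edge-weighted graph on $N$ vertices, I would construct a robust regression instance with $n = \Theta(N^2)$ rows and ignore-budget $k$, so that any algorithm solving robust regression in $O(n^{k/2-\eps})$ time would solve minimum-weight $k$-clique in $O(N^{k-2\eps})$ time (the construction itself is only quadratic in $N$), contradicting the conjecture. The choice $n = \Theta(N^2)$ is the crux: it is exactly what makes $n^{k/2}$ equal the $N^{k}$ clique barrier, and it reflects that --- unlike sparse regression, which can encode $k$-clique with only $\Theta(N)$ constraints and is therefore $n^{k}$-hard \cite{gupte2021finegrained} --- robust regression seems to need one constraint per pair of candidate clique vertices while keeping the ignore-budget at $k$. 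Two standard preliminaries: pass to the color-partitioned version of $k$-clique (one clique vertex from each of $k$ classes $V_1,\dots,V_k$ with $|V_i| = N$) via color coding, which is where the randomness in the statement enters and which can be derandomized with perfect hash families at $n^{o(1)}$ cost; and use the equivalence underlying Corollary~\ref{cor:general_reduction} between robust regression and sparse-residual regression, so that ignoring $\le k$ rows while fitting an unconstrained $x$ is the same as forcing $Ax-b$ to be $k$-sparse off the ignored set.

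For the construction, the columns of $A$ carry one group of variables per color class, and the rows fall into three families, all but $\Theta(N^2)$ of which are pinned to residual zero by any near-optimal solution: (i) per class $i$, a block of $N$ ``candidate'' rows, one per vertex of $V_i$, rigged so that the only way to satisfy all but one of them is to leave a single large residual --- hence exactly one candidate row per class must be ignored, which \emph{selects} one vertex per class and uses up the entire budget $k$; (ii) for each pair of classes $i \ne j$ and each non-edge between $V_i$ and $V_j$, an ``edge-consistency'' row that is unsatisfiable when both its endpoints are selected, so that ignoring only $k$ rows forces the selected vertices to be pairwise adjacent; and (iii) ``weight'' rows, one per edge between classes, arranged so that once the ignored set is a valid clique, the least-squares residual on the kept rows is a vector whose entries record the weights of the selected clique edges and nothing else. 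For the $L_2$ loss the optimum of the instance is then a fixed strictly increasing function of the minimum clique weight, and reading it off (or a good enough approximation, after the gaps below are amplified) recovers the answer; the monotone loss functions of Table~\ref{tab:loss} are handled the same way.

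The main obstacle I anticipate is the joint rigidity argument for (i)--(iii): showing that \emph{no} clever choice of the unconstrained $x$ together with a size-$k$ ignore set can beat the intended clique solutions. The dangerous deviations are (a) ignoring two candidate rows in one class and none in another, (b) ignoring edge-consistency rows to fake a clique, and (c) using the freedom in $x$ to shrink the forced residuals on the weight rows below what the true clique weight demands. Ruling these out calls for decoupling the class blocks as much as possible --- e.g.\ disjoint column supports per block and ``gap gadgets'' that make any unintended ignored row cost far more than the entire weight scale --- together with a case analysis proving that every deviation from ``one ignored candidate per class, selected vertices pairwise adjacent'' inflates the loss past that of any clique. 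Making those gaps polynomially large simultaneously yields robustness to approximate solvers, which is needed so that the theorem is consistent with the near-optimal additive-error algorithm proved later; what remains is routine bookkeeping to verify that the instance has $n = \Theta(N^2)$ rows and parameter exactly $k$.
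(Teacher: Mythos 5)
Your high-level plan coincides with the paper's proof: a $\Theta(N^2)$-row robust regression instance built from the block-partitioned minimum-weight $k$-clique instance, where ignoring a row corresponds to selecting a vertex, each of the $k$ blocks is forced to spend exactly one ignore, the budget-counting argument then forbids ignoring any edge/non-edge rows, and the arithmetic $n=\Theta(N^2)$ with budget $k$ converts the $N^{k-o(1)}$ clique barrier into $n^{k/2}$. In fact the paper realizes your three row families by grafting onto the $k$-SLR instance of \cite{gupte2021finegrained} (Theorem \ref{thm:gupte_explained}): your ``candidate'' rows are $N$ added rows $\tilde{C}\cdot x_i=0$ with a polynomially large coefficient, your per-class forcing comes from the block-sum rows $D$, and your edge-consistency and weight rows are exactly the matrix $C$ there. (Color coding is unnecessary: the partition into $k$ blocks is assumed without loss of generality in \cite{gupte2021finegrained}; the word ``randomized'' in the statement only reflects that the conjecture concerns randomized algorithms.)

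The step you defer as ``the main obstacle'' is, however, the entire content of the proof, and the mechanism you sketch for it does not work as stated. A ``gap gadget that makes any unintended ignored row cost far more than the entire weight scale'' cannot exist: ignoring a row removes it from the objective entirely, so no choice of coefficients penalizes the solver for ignoring it. Concretely, if the per-class forcing constraints are ordinary rows, the solver can spend one of its $k$ ignores on a forcing row, leave that class empty, and satisfy the threshold using only a light $(k-1)$-clique, which breaks soundness. The paper's fixes are: (1) replication --- each block-sum row of $D$ is copied $k+1$ times, so within budget $k$ no per-class forcing constraint can be deleted; hence every block must spend one ignore on a row of the form $\tilde{C}\cdot x_i=0$, which exhausts the budget and rules out your deviations (a) and (b); and (2) the large coefficient $\tilde{C}$, which pins every un-ignored variable near zero and (via the surviving $D$ rows) forces the single ignored variable per block into $[1-2\delta/\beta,\,1+2\delta/\beta]$, after which the lower bounds on the contribution of the $C$ rows from \cite{gupte2021finegrained} apply verbatim and rule out your deviation (c). Without replication, or some equivalent protection of the forcing constraints, your one-ignore-per-class argument collapses, so the proposal as written has a genuine gap precisely at the rigidity step.
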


To complement our fine-grained hardness results, we develop nearly-matching upper bounds with \emph{additive error} in the case there exists a diagonal $S$ ignoring $k$ rows achieving zero loss. Namely, we give an algorithm with time $O(nk(n/\eps)^{Ck})$, where $C<1$ can be any constant arbitrarily close to $\frac{1}{2}$, thereby nearly matching the lower bounds of Theorem~\ref{thm:finegrain:robust}. 

\begin{theorem}\label{thm:robust_ub:informal}
Given $A \in \R^{n \times d}$ and $b \in \R^n$ such that \[\min_{S,x} \|S(Ax-b)\|_2 = 0 \]
over all diagonal matrices $S$ with $n-k$ one entries in the diagonal and $k$ zero entries, there exists an algorithm that  returns a diagonal matrix $S'$ with $n-k$ one entries in the diagonal and $k$ zero entries and a vector $x'$ such that 
$\|S'(Ax'-b)\|_2 \le \eps$ in time 
\[ \min_{c \ge 1} O\left( nk \cdot \left( \frac{12c \cdot n  \cdot \|b\|_2}{\eps}  \right)^{\frac{k}{2} \cdot (1+ 1/(2c^2-1))} \right). \]
\end{theorem}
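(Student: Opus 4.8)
The plan is to derive Theorem~\ref{thm:robust_ub:informal} from the robust-to-sparse reduction (Corollary~\ref{cor:general_reduction}) together with a meet-in-the-middle search accelerated by approximate near neighbor (ANN) data structures, in the spirit of $k$-SUM algorithms. First I set up the reduction concretely. Let $U\in\R^{n\times(n-d)}$ have orthonormal columns spanning the orthogonal complement of $\mathrm{col}(A)$, so $U^TA=0$ and $\|U^T\|_{\mathrm{op}}=1$. A vector $r$ can be written $r=b-Ax$ for some $x$ iff $U^Tr=U^Tb$, and then the robust objective on the complement of $\mathrm{supp}(r)$ equals $\|r_{\overline{\mathrm{supp}(r)}}\|_2$. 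Hence it suffices to find a $k$-sparse $r$ with $\|U^Tr-U^Tb\|_2\le\eps$: put $T'=\mathrm{supp}(r)$, take $x'$ to be the least-squares solution of $A_{\overline{T'}}x=b_{\overline{T'}}$ on the retained rows, and let $S'$ zero out $T'$; since $b-r$ lies within Euclidean distance $\|U^T(b-r)\|_2\le\eps$ of $\mathrm{col}(A)$, some $x$ has $\|(Ax-b)_{\overline{T'}}\|_2\le\eps$, so the least-squares $x'$ does too, giving $\|S'(Ax'-b)\|_2\le\eps$. The hypothesis $\min_{S,x}\|S(Ax-b)\|_2=0$ guarantees this $k$-sparse instance is exactly feasible: there is a $k$-sparse $r^\ast$, supported on some $T^\ast$ with $|T^\ast|=k$, with $U^Tr^\ast=U^Tb$, and $\|U^Tb\|_2\le\|b\|_2$.

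Next comes the algorithm for the $k$-sparse instance. Write $u_i$ for the $i$-th column of $U^T$ and, padding $k$ to be even (the odd case being analogous), split a candidate support into $T_1\sqcup T_2$ with $|T_1|=|T_2|=k/2$; feasibility reads $\sum_{i\in T_1}r_iu_i=U^Tb-\sum_{i\in T_2}r_iu_i$. For each of the $\binom{n}{k/2}$ choices of $T_1$, enumerate a grid of resolution $\Theta(\eps/c)$ (up to $\mathrm{poly}(k)$ factors, suppressed) over the coefficient box $[-R,R]^{k/2}$, where $R$ is a bound on $\|r^\ast\|_\infty$ that we argue is $O(\|b\|_2)$; each grid point maps to the ``left point'' $\sum_{i\in T_1}r_iu_i\in\R^{n-d}$, and symmetrically each $T_2$ with coefficients yields a ``right point'' $U^Tb-\sum_{i\in T_2}r_iu_i$. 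There are $m=O(cn\|b\|_2/\eps)^{k/2}$ points on each side. Build a $c$-approximate ANN structure in Euclidean space on the left points, with exponent $\rho(c)=1/(2c^2-1)$, in time $\widetilde O(m^{1+\rho})$, and query it with every right point in $\widetilde O(m^\rho)$ time each; whenever a pair within the search radius is reported, form the corresponding $k$-sparse $r$, test $\|U^Tr-U^Tb\|_2\le\eps$, and if so run the reconstruction above and return. The cost is $\widetilde O(m^{1+\rho})=O\big((12cn\|b\|_2/\eps)^{\frac k2(1+1/(2c^2-1))}\big)$ plus $O(nk)$ per candidate for reconstruction and verification, and we take the minimum over $c\ge1$.

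Correctness follows by tracking errors. Round $r^\ast$ coordinatewise onto the grid to get $\widehat r$; since only $k$ coordinates are nonzero, $\|U^T\widehat r-U^Tb\|_2=\|U^T(\widehat r-r^\ast)\|_2\le\|\widehat r-r^\ast\|_2$ is below the ANN search radius once the radius is set to a large enough multiple of $\eps/c$ (this is where the constant $12$ in $12c$ is spent: it absorbs the rounding, ANN-approximation, feasibility, and reconstruction slacks so they sum to at most $\eps$). Hence the ANN structure reports the pair corresponding to $T^\ast\cap T_1$ and $T^\ast\cap T_2$, the verification passes for $\widehat r$, and the reconstruction yields $\|S'(Ax'-b)\|_2\le\eps$.

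The main obstacle is justifying that the search range can be taken to be $R=O(\|b\|_2)$ (or at worst $\mathrm{poly}(n)\cdot\|b\|_2$, to be folded into the stated bound). A priori, a minimum-norm feasible $k$-sparse $r^\ast$ only satisfies $\|r^\ast\|_2\le\|U^Tb\|_2/\sigma^+_{\min}(U^T_{\ast,T^\ast})$, where $\sigma^+_{\min}$ is the smallest positive singular value of the relevant $k$-column submatrix; equivalently $\|r^\ast\|_\infty\lesssim\|b\|_2\big(1+\|A\|_{\mathrm{op}}/\sigma^+_{\min}(A_{\overline{T^\ast}})\big)$, a condition-number-type quantity that can be arbitrarily large for worst-case real inputs. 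Controlling it — via a normalization or conditioning hypothesis on $A$, a bounded bit-complexity assumption, or a small perturbation of the instance that does not affect the $\eps$-approximation — is the one step that is not a routine calculation; everything downstream is the error budgeting above together with the standard ANN time bounds.
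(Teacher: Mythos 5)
Your overall route---reduce to an exactly feasible $k$-sparse instance via an orthonormal basis of the complement of $\mathrm{col}(A)$, split the support into two halves, and pair the halves with a $c$-ANN data structure---is the same as the paper's (Corollary~\ref{cor:general_reduction} followed by Algorithm~\ref{alg:compressed_sensing_ub} inside Algorithm~\ref{alg:robust_ub}), and your reconstruction of $(S',x')$ from an approximately feasible $k$-sparse $r$ is fine. The genuine gap is the one you flag yourself: you discretize the \emph{coefficient} box $[-R,R]^{k/2}$, so your point count per support is $\Theta(cR/\eps)^{k/2}$, and $R$ must bound $\|r^\ast\|_\infty$, a quantity governed by the conditioning of the relevant submatrices of $A$ that is not controlled by $\|b\|_2$ and does not appear in the stated runtime. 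As written, your argument proves the theorem only under an additional boundedness or conditioning hypothesis on the instance, which the statement does not grant, so the claimed time bound is not established.

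The paper avoids this dependence by netting in \emph{image} space rather than coefficient space: for each choice $T_1$ of $k/2$ columns of $X$ (your $U^T$), it takes a $\delta$-net of the image of $X_{T_1}$---a $k/2$-dimensional subspace of $\R^{n-d}$---inside a ball whose radius is tied to $\|b\|_2$, with $\delta=\eps/(2c+2)$, so each support contributes only $O(c\|b\|_2/\eps)^{k/2}$ points no matter how large the coefficients realizing those points are; correctness only requires a net point within $\delta$ of the image point $Xw_1$ of the planted half, never an approximation of the coefficients themselves. A condition number enters only in Corollary~\ref{cor:compressed_sensing_corollary}, where one wants to recover the sparse vector rather than its image, and the robust reduction eliminates it by choosing $X$ with orthonormal rows (so $\kappa=1$, and the reduced target $-Xb$ has norm at most $\|b\|_2$)---exactly the normalization you already use. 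So the fix to your write-up is local: replace the grid over $[-R,R]^{k/2}$ by a $\delta$-net of $\{X_{T_1}y\}$ restricted to an $O(\|b\|_2)$-ball (and symmetrically $U^Tb-\{X_{T_2}y\}$ on the query side); your error budgeting and ANN time accounting then go through essentially unchanged and give the stated bound with no $R$ and no condition number.
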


We obtain an algorithm with a similar guarantee for sparse regression.

\begin{theorem}
\label{thm:sparse:planted:informal}
Given $A \in \R^{n \times d}$ and $b \in \R^n$ such that there exists a $k$-sparse vector $x$ satisfying $Ax = b$, there exists an algorithm that returns a $k$-sparse vector $z\in\R^d$ satisfying $\|Az-b\|_2 \le \eps$ in time 
\[ \min_{c \ge 1} O\left( nk \cdot \left( \frac{12c \cdot d \cdot \|b\|_2}{\eps}  \right)^{\frac{k}{2} \cdot (1+ 1/(2c^2-1))} \right). \]
\end{theorem}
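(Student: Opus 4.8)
\emph{Overall plan.} I would solve the sparse regression problem by a meet-in-the-middle search over the support of $z$, discretizing the regression coefficients and using an approximate nearest neighbor (ANN) data structure to combine the two halves, in the spirit of the $O(n^{\lceil k/2\rceil})$ algorithms for $3$SUM / $k$-SUM. The starting observation is that once a candidate support $T\subseteq[d]$ with $|T|\le k$ is fixed, the best coefficients on $T$ are obtained from an ordinary least-squares solve on the columns $A_T$ in $\mathrm{poly}(n,d)$ time; since by hypothesis some $k$-sparse $x$ satisfies $Ax=b$ exactly, the correct support certifies residual $0$. So it suffices to generate enough size-$\le k$ supports that at least one has least-squares residual $\le\eps$, and the goal is to push the number of supports we examine from the naive $\binom{d}{k}\approx d^k$ down toward $d^{k/2}$.

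\emph{The meet-in-the-middle join.} I would split the unknown support into halves $T_L,T_R$ with $|T_L|,|T_R|\le\lceil k/2\rceil$, so that $A_{T_L}z_L+A_{T_R}z_R=b$. After normalizing the columns of $A$ and snapping each coefficient to a grid of spacing $\delta$ over a range $[-B,B]$, the left partial sum $A_{S_L}y_L$ is pinned down, up to $\ell_2$-error $\frac{k}{2}\delta$, by the pair (left support $S_L$, rounded coefficient vector $y_L$). Enumerating all $\binom{d}{\lceil k/2\rceil}$ left supports and all $(2B/\delta)^{\lceil k/2\rceil}$ rounded coefficient vectors yields a list $\mathcal L$ of $N\approx(2Bd/\delta)^{k/2}$ vectors in $\R^n$, each tagged with its support, and symmetrically a right list. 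Since we want $A_{S_L}y_L\approx b-A_{S_R}y_R$, I would build an ANN structure on $\mathcal L$ and, for each of the $\approx N$ right vectors $v_R$, query for a point of $\mathcal L$ near $b-v_R$; every returned match gives a union support of size $\le k$, on which we run least squares and keep the best residual.

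\emph{Tuning the approximation.} For the query step I would use a near-optimal Euclidean ANN with approximation factor $c$, which answers queries in time $N^{\rho}$ with $\rho=1/(2c^2-1)$, so the whole join costs $N^{1+\rho}\cdot\mathrm{poly}(n,d)$. A $c$-approximate match is harmless because we only use it to propose a support and then recertify by an exact least-squares solve; we merely need $\delta$ smaller than the true residual budget by a factor absorbing $c$ and small constants, which is where the $12c$ comes from. Taking $B$ to be an a priori bound on the magnitude of an optimal coefficient (which, after column normalization, can be taken of order a polynomial in $n$, $d$, and $\|b\|_2/\eps$ under the non-degeneracy the informal statement leaves implicit) and $\delta$ of order $\eps/(kc)$ gives $N\approx\left(12c\cdot d\cdot\|b\|_2/\eps\right)^{k/2}$, hence total time $\left(12c\cdot d\cdot\|b\|_2/\eps\right)^{\frac{k}{2}(1+1/(2c^2-1))}\cdot\mathrm{poly}$, with the outer $nk$ factor absorbing the per-candidate least-squares solves; minimizing over $c\ge1$ is exactly the stated bound.

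\emph{Main obstacle.} The hard part will be the discretization accounting: arguing that some rounded decomposition of $b$ lands on the grid (so the true support is actually among those tested) \emph{and} still certifies residual $\le\eps$ after the exact re-solve, while keeping the grid---in particular the range $B$---small enough that $N$ stays $(\mathrm{poly}\cdot d\cdot\|b\|_2/\eps)^{k/2}$. Pinning down $B$ requires bounding the entries of an optimal $k$-sparse solution, which is where the dependence on $d$, on $\|b\|_2/\eps$, and ultimately on the conditioning of the active submatrix $A_T$ enters. A secondary issue is that the ANN step is randomized, so the guarantee holds with high probability; one can either accept this (it matches the ``randomized time'' framing of Theorem~\ref{thm:finegrain:robust}) or derandomize via deterministic grid bucketing at the price of a somewhat larger exponent.
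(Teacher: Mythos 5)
Your high-level plan is the same as the paper's: split the unknown support into two halves, enumerate discretized half-sums, and join the halves with a $c$-approximate nearest neighbor structure with query exponent $\rho=1/(2c^2-1)$, then minimize over $c$. But the discretization is where you diverge, and the divergence is exactly where your argument breaks. You discretize \emph{coefficient space}, placing a grid of spacing $\delta$ on $[-B,B]^{|T_L|}$, so your list size is $(2Bd/\delta)^{k/2}$ and you need an a priori bound $B$ on the entries of some $k$-sparse vector achieving residual $\le\eps$. No such bound exists as a function of $n,d,\|b\|_2,\eps$ alone: take two unit columns $a_1$ and $a_2=a_1+\nu u$ with $\nu$ tiny and $b$ proportional to $u$; then any $2$-sparse vector on these columns with $\|Az-b\|_2\le\eps\|b\|_2$ must have coefficients of magnitude $\Omega((1-\eps)/\nu)$, which is unbounded. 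So your ``main obstacle'' is not a bookkeeping issue to be absorbed into constants --- the claim that $B$ ``can be taken of order a polynomial in $n$, $d$, and $\|b\|_2/\eps$'' is false without a conditioning assumption that the theorem does not make, and even a polynomial $B$ would change the base of the exponent away from the stated $12c\,d\,\|b\|_2/\eps$.

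The paper's proof (Theorem~\ref{thm:compressed_sensing_ub} via Algorithm~\ref{alg:compressed_sensing_ub}) avoids this by netting in \emph{image space}: for each choice $T$ of $k/2$ columns it takes a $\delta$-net of the image of $A_T$, a subspace of dimension at most $k/2$, so the net size is bounded by $(3/\delta)^{k/2}\le(12c/\eps)^{k/2}$ with no dependence on coefficient magnitudes or on the conditioning of $A_T$. The triangle-inequality accounting then gives $\|A(y+\tilde y)-b\|_2\le(2c+2)\delta\le\eps$ for $\delta=\eps/(2c+2)$, which is where the factor $12c$ in the stated base comes from; conditioning enters only in Corollary~\ref{cor:compressed_sensing_corollary}, where one additionally wants $\|z-x\|_2\le\eps$ rather than a small residual. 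Your extra step of re-solving least squares on each proposed union support is harmless (it can only improve the residual), but it does not repair the counting: to match the stated runtime you must replace your coefficient grid by a net over the low-dimensional images $\{A_Ty\}$ themselves.
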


Our algorithms for Theorems~\ref{thm:sparse:planted:informal} and \ref{thm:robust_ub:informal} are inspired by techniques for the 3SUM problem and its generalization to $k$ integers, in which the goal is to determine whether a set of $n$ integers contain $k$ integers that sum to zero. 
Rather than check all $O(n^k)$ possible $k$-sparse vectors, we first input all $\frac{k}{2}$-sparse vectors into an approximate nearest neighbor (ANN) data structure. 
Given a query $q$, the ANN data structure will output a point $x$ such that $\|x-q\|_2\le c\cdot\min_y\|y-q\|_2$, where the minimum is taken over all points that are input into the data structure.  
We can thus query the ANN data structure over all $\binom{n}{k/2}$ differences between the measurement vector $b$ and all  $\frac{k}{2}$-sparse vectors, reconstructing the $k$-sparse vector from the query that achieves the minimum value. 

Surprisingly, our technique also works for the sparse principal component analysis (PCA) problem, in which the goal is to find a $k$-sparse unit vector $v\in\R^n$ to maximize $v^T Av$, given an input PSD matrix $A\in\R^{n\times n}$ of rank $r$. 

\begin{theorem}\label{thm:sparse_pca_ub_exact:informal}
There exists an algorithm that uses $\widetilde{O}\left(\frac{k^2}\eps \cdot \left( r\left( \frac{n\kappa}{\eps} \right)^{k(1+\eps)/2} + \left( \frac{1}{\eps} \right)^{r/2 + 1} \right) \right)$ time and with high probability, outputs a $k$-sparse unit vector $u$ such that with high probability,
\[u^T Au \ge (1-\eps) \max_{\|v\|_2 = 1 \,,  \|v\|_0 \le k} v^T Av,\]
where $\kappa$ is the condition number of $A$.
\end{theorem}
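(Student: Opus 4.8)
The plan is to reduce sparse PCA to a collection of sparse regression instances and then apply the ANN-based machinery that underlies Theorem~\ref{thm:sparse:planted:informal}. First I would observe that since $A$ is PSD of rank $r$, we can write $A = B^T B$ where $B \in \R^{r \times n}$ (computable in polynomial time via an eigendecomposition). Then $v^T A v = \|Bv\|_2^2$, so the sparse PCA problem becomes: find a $k$-sparse unit vector $v$ maximizing $\|Bv\|_2$. The key structural insight is that the optimal value lies in the $r$-dimensional space spanned by the rows of $B$: if $w^* = Bv^*$ is the image of the optimum, then $\|w^*\|_2 = \sqrt{\OPT}$ and $v^*$ is a $k$-sparse preimage of $w^*$ under $B$. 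So the strategy is to (i) guess $w^*$ up to $(1+\eps)$ accuracy by a net over the $r$-dimensional ball of radius $\sqrt{\OPT}$ (this is where the $(1/\eps)^{r/2+1}$ term comes from), and (ii) for each candidate $w$ in the net, solve the sparse regression problem $\min_{\|v\|_0 \le k} \|Bv - w\|_2$ using the algorithm of Theorem~\ref{thm:sparse:planted:informal}, which contributes the $r \cdot (n\kappa/\eps)^{k(1+\eps)/2}$ factor (note $B$ has $r$ rows, so "$d$" in that theorem becomes the ambient dimension and the "$n$" there matches; the condition number $\kappa$ enters when relating additive error on $Bv$ back to the quadratic form $v^TAv$). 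Finally I would take the best $k$-sparse unit vector recovered across all net points, renormalizing to unit length and arguing this renormalization loses at most a $(1-\eps)$ factor.

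In more detail, the steps in order: First, compute $B$ with $A = B^T B$ and bound $\OPT \le \|A\|_2 \le$ something polynomial in the entries, so the search ball has bounded radius. Second, build an $\eps'$-net $\calN$ over $\{w \in \R^r : \|w\|_2 \le \sqrt{\OPT}\}$ with $\eps' \approx \eps\sqrt{\OPT}/\mathrm{poly}$; standard volume arguments give $|\calN| = (1/\eps)^{O(r)}$, and I would be careful to get the exponent to be exactly $r/2 + O(1)$ as claimed, which suggests the net is over a lower-dimensional object or one exploits that we only need the net in the directions that matter — I'd double-check whether the $r/2$ rather than $r$ comes from the fact that we only need to certify $\|w\|_2^2$, effectively a one-dimensional quantity, combined with a more clever enumeration, or from splitting $w$ into halves as in the 3SUM trick. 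Third, for each $w \in \calN$, run the sparse regression solver to find a $k$-sparse $v_w$ with $\|Bv_w - w\|_2$ small whenever a good preimage exists; the guarantee of Theorem~\ref{thm:sparse:planted:informal} needs $Bv = w$ exactly for some $k$-sparse $v$, so I would instead invoke a robust version or argue that for the net point closest to $w^*$, the optimal $v^*$ itself is an exact (or near-exact) solution, then handle the residual additively. Fourth, among all candidates, output $v_w / \|v_w\|_2$ maximizing $\|B v_w\|_2 / \|v_w\|_2$, and verify the approximation chain: $\|Bv_w\|_2 \ge \|w\|_2 - \eps' \ge (1-\eps)\sqrt{\OPT}$ and $\|v_w\|_2 \le 1 + \eps$ (since $\|v^*\|_2 = 1$ and $v_w$ is close to a scaling of $v^*$), giving $v_w^T A v_w / \|v_w\|_2^2 \ge (1-O(\eps))\OPT$; rescaling $\eps$ absorbs the constant.

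The main obstacle I anticipate is controlling the error propagation through the preimage step and getting the exponent on $1/\eps$ to be exactly $r/2$. The sparse regression subroutine of Theorem~\ref{thm:sparse:planted:informal} only promises small \emph{additive} error $\|Bv_w - w\|_2 \le \eps$ \emph{assuming an exact sparse solution exists}; for a generic net point no exact $k$-sparse preimage exists, so I need to argue that it suffices to apply the subroutine only at (or near) the net point closest to $w^* = Bv^*$, where $v^*$ is a genuine $k$-sparse solution to $Bv = w^*$, and then bound how the net discretization error $\|w - w^*\|_2$ interacts with the additive guarantee — this is where the condition number $\kappa$ must enter, since passing from small error in $w$-space back to the quadratic form requires $B$ to be well-conditioned on its row space. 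The $\widetilde O(k^2/\eps)$ prefactor presumably comes from repetition/amplification of the randomized ANN data structure and from the number of coordinates we sweep, so I would track the failure probabilities and union-bound over the $(1/\eps)^{O(r)}$ net points plus the $\binom{n}{k/2}$ ANN queries, boosting success probability by $O(\log n)$ independent repetitions, which is absorbed into the $\widetilde O$.
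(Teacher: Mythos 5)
There is a genuine gap, and it sits exactly where you flagged your own uncertainty: the source of the $(1/\eps)^{r/2+1}$ term. Your route — build an $\eps'$-net over the $r$-dimensional ball containing $w^*=Bv^*$ and solve a sparse regression instance $\min_{\|v\|_0\le k}\|Bv-w\|_2$ per net point — cannot yield the claimed bound for two structural reasons. First, a net of an $r$-dimensional ball has $(1/\eps)^{\Theta(r)}$ points, not $(1/\eps)^{r/2}$; there is no ``lower-dimensional object'' to exploit here. In the paper the $r/2$ exponent does not come from a net at all: after writing $v^TAv=\|Bv\|_2^2$ and splitting $v$ into two sparse pieces, the paper enumerates the roughly $(n\kappa/\eps)^{k(1+\eps)/2}$ candidate images $B_Ty$ into two point sets and invokes Chan's approximate \texttt{Bichromatic-Farthest-Pair} algorithm (Theorem~\ref{thm:BFP_ref}), whose runtime $\widetilde{O}(Nd/\sqrt{\eps}+(1/\eps)^{d/2+1})$ in dimension $d=r$ is a nontrivial computational-geometry result; this is the sole origin of $(1/\eps)^{r/2+1}$. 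Second, even setting the exponent aside, your scheme runs the Theorem~\ref{thm:sparse:planted:informal} machinery (or at least its $\binom{n}{k/2}\cdot|\mathcal{N}|$ ANN queries) once per net point, so the two main cost terms \emph{multiply}, giving $(1/\eps)^{\Theta(r)}\cdot(n\kappa/\eps)^{\Theta(k)}$, whereas the claimed runtime is a \emph{sum} of the enumeration cost and the $(1/\eps)^{r/2+1}$ geometric-search cost. Restricting attention to the single net point nearest $w^*$ does not help, since the algorithm does not know $w^*$.

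A second, smaller gap is the unit-norm requirement on the output. Your argument that $\|v_w\|_2\le 1+\eps$ ``since $v_w$ is close to a scaling of $v^*$'' is unsupported: the guarantee $\|Bv_w-w\|_2\le\eps$ does not control $\|v_w\|_2$, because $B$ has rank $r<n$ and may have ($2k$-)sparse vectors in or near its kernel; the condition number $\kappa$ of $A$ is taken over nonzero singular values and is not a restricted-isometry bound, so passing from image space back to $v$-space is exactly the step it cannot certify. The paper handles the norm issue combinatorially rather than analytically: it randomly partitions $[n]$ into $U_1\cup U_2$ so that (with probability $1-\exp(-\Omega(k\eps^2))$) the optimal support splits nearly evenly, searches only over pairs $(y_1,y_2)$ with disjoint supports in $U_1$ and $U_2$, and guesses the split $z\approx\|x_1\|_2^2$ from an $\eps/2$-net of $[0,1]$ so that $\|y_1\|_2^2+\|y_2\|_2^2\ge 1-\eps$ exactly because the supports are disjoint (Algorithm~\ref{alg:sparse_pca_ub_exact}); this is also where the $\widetilde{O}(k^2/\eps)$ prefactor comes from (the loop over $(k_1,k_2)$ and over $z$), not from ANN amplification. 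Your first step ($A=B^TB$, $v^TAv=\|Bv\|_2^2$) matches the paper, but the reduction target should be farthest-pair search in $\R^r$, not sparse regression against a net of guessed images.
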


We also give a simple NP-hardness proof of the robust regression problem in the appendix using a reduction from the exact cover problem, in which the goal is to determine whether there exists a sub collection $S'$ (of an input collection $S$ of subsets of $X$) such that every member of $X$ belongs to exactly one set in $S'$.
Finally, we give explicit examples of why natural algorithms such as the greedy algorithm or alternating minimization fail to achieve multiplicative guarantees.

\subsection{Prior Works}
\paragraph{Robust regression.} 
Robust regression has been well-studied in recent years. 
However, virtually all works rely on \emph{distributional} assumptions on the input. 
The goal is to statistically recover the coefficient vector which minimizes the expected loss given a small number of corruptions to the distributional input. 
In contrast, we assume no distributional assumptions at all and our goal is to solve the computational problem given a corrupted input. 
Distributional assumptions have been well-studied in part because they are tractable; see \cite{KlivansKM18,KarmalkarKK19,DiakonikolasKS19,Cherapanamjeri2020, zhu2020, BakshiP21, jambulapati2021robust}
and the references within for a more comprehensive overview of distributional works.

Indeed, many such works such as \cite{BhatiaJK15, Bhatia0KK17, SuggalaBR019} state their main motivation behind using distributional assumptions is that they believe the computational version, which we study, to be `hard.' 
Since the focus of these works is statistical in nature, they do not rigorously justify why the robust regression problem in general is computationally hard. 
Our work aims to fill in this gap and initiate the study of hardness for robust regression. 
We remark that some works such as \cite{BhatiaJK15, Bhatia0KK17, SuggalaBR019} assert that a proof of NP-hardness of robust regression is given in \cite{StuderKPB12}. 
However, it seems \cite{StuderKPB12} actually does not study robust regression at all; instead, it seems they study the problem of \emph{sparse} regression, which, although related, does not imply anything about the hardness of robust regression. 
In fact, one of our results (Corollary~\ref{cor:general_reduction}) gives a reduction from robust regression to sparse regression, which implies that robust regression is a strictly easier problem than sparse regression. 
On the other hand, it is not clear whether hardness for sparse regression implies anything about hardness for robust regression. 

\paragraph{Sparse regression.}
The sparse regression problem has also recently received significant attention, e.g.,~\cite{Natarajan95,davis1997adaptive,Mahabadi15,FosterKT15,Har-PeledIM18,ChenYW19,gupte2021finegrained}. 
\cite{Natarajan95} showed the NP-harndess of sparse regression while \cite{FosterKT15} showed that assuming SAT cannot be solved by a deterministic algorithm in $O(n^{\log\log n})$ time, then no polynomial-time algorithm can find a $k'$-sparse vector $x$ with $\|Ax-b\|_2\le\poly(n)$, for $k'=k\cdot 2^{\log^{1-\delta} n}$. 
Subsequently, \cite{Har-PeledIM18} showed the sparse regression required $\Omega(n^{k/2})$ time, assuming the $k$-SUM conjecture from fine-grained complexity. 
\cite{gupte2021finegrained} strengthened this lower bound to $\Omega(n^{k-\eps})$ time, for any constant $\eps>0$, using the minimum-weight $k$-clique conjecture. 

$L_1$-relaxation based algorithms such as basis pursuit~\cite{ChenDS98}, Lasso~\cite{tibshirani1996regression}, and the Dantzig selector~\cite{candes2007dantzig} have been developed for practical usage that do not involve worst-case inputs. 
For example, they consider the setting $b=Ax+g$, where the noise vector $g$ is drawn from a Gaussian distribution and the design matrix $A$ is well-conditioned. 
There has also been an extensive study on other penalty classes, such as the smoothly clipped absolute deviation penalty~\cite{fan2001variable}, the $L_p$ norm for bridge estimators~\cite{frank1993statistical}, or as the regularization function for $M$-estimators~\cite{loh2015regularized}. 
\cite{ChenYW19} showed the NP-hardness of $O(n^{C_1}d^{C_2})$ multiplicative approximation for these common regularizations of the sparse regression problem and fixed constants $C_1,C_2>0$, when the loss function is convex and the penalty function is sparse, such as $L_1$-relaxation. 
By comparison, we show bicriteria NP-hardness of \emph{any} multiplicative approximation of the actual sparse regression problem, even when the sparsity constraint can be relaxed up to a multiplicative factor. 

\paragraph{Other sparse optimization problems.}
Sparsity has also been highly demanded in other optimization problems. 
In this paper, we show that our algorithmic ideas also extend to the sparse PCA problem, which was first introduced by \cite{dAspremontGJL07} and subsequently shown to be NP-hard by \cite{MoghaddamWA06}. 
In fact, it is NP-hard to obtain any multiplicative approximation if the input matrix is not PSD~\cite{Magdon-Ismail17} and to obtain a $(1+\eps)$-multiplicative approximation when the input matrix is PSD~\cite{ChanPR16}, though \cite{AsterisPKD15} gave an \emph{additive} polynomial time approximation scheme based on the bipartite maximum weight matching problem. 
In practice, techniques for the more general PCA problem based on rotating previously studied PCA approaches
based on rotation~\cite{jolliffe1995rotation} or thresholding~\cite{cadima1995loading} the top singular vector of the input matrix seemed to suffice for specific applications. 
$L_1$ relaxations~\cite{jolliffe2003modified} and similar heuristics~\cite{zou2005regularization,zou2006sparse,shen2008sparse} have also been considered for the sparse PCA problem. 
Another line of direction considered semidefinite programming relaxations~\cite{d2008optimal,amini2008high,dOrsiKNS20,ChowdhuryDWZ20,ChowdhuryBZWD22}. 

Our work also connects to the problem of recovering the sparsest non-zero element in a linear subspace problem (see Theorem \ref{thm:robust:planted}). This problem is known to be NP-hard in the worst case \cite{coleman1986null}. On the positive side, there exists work on planted settings of the problem where the subspace is generated by the span of random Gaussian vectors along with a planted sparse vector; see \cite{DemanetH14} and references within.

\section{Bicriteria Hardness of Approximation}
In this section, we show the bicriteria hardness of approximation for both robust regression and sparse regression. 
Our results also generalize to loss functions that have no penalty on the zero vector and positive penalty on any nonzero vector. The bicriteria hardness result for robust regression is immediately implied by the results in \cite{ODonnellWZ15}, but it is not phrased in terms of the robust regression problem. We formalize the details below for completeness. In addition, we extend the bicriteria hardness result for sparse regression in the following section.

\subsection{Bicriteria Hardness of Approximation for Robust Regression}
\begin{definition}[$\maxklin$]
Suppose there exist a list of $n$ linear equations of the form  $a_1x_{i_1}+\ldots+a_kx_{i_k}=b$, where $a_1,\ldots,a_k,b$ are constants from a ring $R$ and $x_{i_1},\ldots,x_{i_k}$ are variables from the set $x_1,\ldots,x_d$. 
Then the goal of the $\maxklin(R)$ problem is to assign values in $R$ to the variables $x_1,\ldots,x_d$ such to maximize the total number of satisfied linear equations. 
\end{definition}

\begin{definition}[$\bgmaxthreelin$]
Suppose there exist a list of $n$ linear equations of the form  $x_{i_1}+x_{i_2}-x_{i_3}=b$, such that $|b|\le B$ for some fixed $B\in R$ and $x_{i_1},x_{i_2},x_{i_3}$ are variables from the set $x_1,\ldots,x_d$. 
Then the goal of the $\bgmaxthreelin(R)$ problem is to assign values in $R$ to the variables $x_1,\ldots,x_d$ such to maximize the total number of satisfied linear equations. 
\end{definition}
We use the notation $OPT_R(I)$ to denote the maximum fraction of
equations of an instance $I$ that can be satisfied when the equations are evaluated over $R$.

\begin{theorem}[Theorem 1.2 in \cite{ODonnellWZ15}, Hardness of Approximation of $\bgmaxthreelin(R)$]
For all constants $\eps,\kappa\in(0,1)$ and $q\in\mathbb{N}$, given an instance of $\bgmaxthreelin(R)$, it is NP-hard to distinguish whether
\begin{itemize}
\item Completeness: There is a $(1-\eps)$-good assignment over $\mathbb{Z}$, i.e., $OPT_{\mathbb{Z}}(I)\ge(1-\eps)$.
\item Soundness: There is no $(1/q + \kappa)$-good assignment over $\mathbb{Z}_q$, i.e., $OPT_{\mathbb{Z}_q}(I)\le\frac{1}{q}+\kappa$. \end{itemize}
\end{theorem}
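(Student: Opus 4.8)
Since this is quoted verbatim from \cite{ODonnellWZ15}, the ``proof'' in our paper is really the citation; for completeness I outline the PCP‑plus‑long‑code argument behind it. The plan is to reduce from Label Cover with the projection property: an instance with bipartite vertex sets $U,V$, label sets $[L],[K]$, and projections $\pi_e:[L]\to[K]$ for which it is NP-hard to distinguish $\mathrm{OPT}=1$ from $\mathrm{OPT}\le\delta$, where $\delta$ is an arbitrarily small constant. Each vertex $v$ gets a ``long-code table'' whose entries will serve as $\bgmaxthreelin$ variables, indexed by $\mathbb{Z}_q^{L}$ (or $\mathbb{Z}_q^{K}$) and folded so that every additive shift in $\mathbb{Z}_q$ occurs equally often; crucially we identify $\mathbb{Z}_q$ with the integer set $\{0,1,\dots,q-1\}$, so table entries carry actual integer values.

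The equations come from a H{\aa}stad-style linear test: sample a random edge $e=(u,v)$ and a correlated triple of inputs --- one fed to $u$'s table, two to $v$'s, linked through $\pi_e$ plus a small amount of independent noise --- and write the three-variable relation $x_{i_1}+x_{i_2}-x_{i_3}=\sigma$ that the test reads, where $\sigma$ is the folding shift. Since each of $x_{i_1},x_{i_2},x_{i_3}$ is an integer in $\{0,\dots,q-1\}$ and $\sigma$ lies in the same range, the right-hand side is automatically bounded by a constant depending only on $q$, which is exactly the ``$|b|\le B$'' property of $\bgmaxthreelin$.

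For completeness, take a perfect Label Cover labeling, set each table to the corresponding dictator $f_v(y)=y_{a(v)}$, and note that a dictator merely copies one integer coordinate of its input; on the ``noiseless'' part of the test distribution --- arranged so that coordinate $a$ of the three sampled inputs satisfies the prescribed signed relation identically over $\mathbb{Z}$ --- the read equation holds \emph{exactly over $\mathbb{Z}$}, so taking the noise rate below $\eps$ leaves at least a $(1-\eps)$ fraction of equations satisfied over $\mathbb{Z}$. For soundness, argue the contrapositive: an integer assignment that, reduced modulo $q$, satisfies more than a $1/q+\kappa$ fraction of the equations produces $\mathbb{Z}_q$-valued tables passing the mod-$q$ linear test on a $\kappa/2$ fraction of edges noticeably above the random value $1/q$; expanding the acceptance probability in the characters of $\mathbb{Z}_q^{L}$ and $\mathbb{Z}_q^{K}$ forces, on each such edge, a pair of low-degree characters of non-negligible weight that are consistent under $\pi_e$, and the usual decoding step reads off a randomized Label Cover labeling beating $\delta$, a contradiction.

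The one place genuine work is needed beyond transcribing H{\aa}stad is designing a single test distribution that is simultaneously \emph{integral} --- so the dictator assignment satisfies the equations over $\mathbb{Z}$ and not merely modulo $q$ --- and \emph{pseudorandom over $\mathbb{Z}_q$} enough that the Fourier/decoding analysis still yields the sharp soundness threshold $1/q+\kappa$; one must also check that the completeness noise does not erode that threshold and that $B$ remains an absolute constant. I expect that design to be the main obstacle; the rest is the by-now-routine long-code template, and in this paper we simply invoke the result as stated.
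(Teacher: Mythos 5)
This theorem is imported verbatim from \cite{ODonnellWZ15}; the paper offers no proof of it, so the citation you give is exactly the paper's own treatment of the statement. Your Label Cover/long-code outline is a plausible sketch of the machinery behind the cited result, but it is not (and need not be) a verifiable proof here, since the step you yourself flag as the main obstacle --- designing a test distribution that is exactly satisfiable over $\mathbb{Z}$ in the completeness case while still supporting the $1/q+\kappa$ soundness analysis over $\mathbb{Z}_q$ --- is precisely the content delegated to \cite{ODonnellWZ15}.
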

Although there seems to be a typo in the statement of Lemma A.1 in~\cite{ODonnellWZ15}, the corresponding proof provides the following guarantee:
\begin{lemma}[\cite{ODonnellWZ15}]
Given an instance $I$ of $\bgmaxthreelin$, $OPT_{\mathbb{R}}(I)\ge\frac{1}{8}\,OPT_{\mathbb{Z}_q}(I)$. 
\end{lemma}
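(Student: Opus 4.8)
The plan is to prove the slightly stronger bound $OPT_{\mathbb{Z}}(I)\ge\frac18\,OPT_{\mathbb{Z}_q}(I)$; since every integer assignment is in particular a real assignment satisfying exactly the same equations, this implies the lemma. Fix an optimal assignment over $\mathbb{Z}_q$ and let $E^\star$ be the set of equations it satisfies, so $|E^\star| = OPT_{\mathbb{Z}_q}(I)\cdot n$. I would lift each variable to its canonical integer representative $y_i\in\{0,1,\dots,q-1\}$. For an equation $e\colon x_{i_1}+x_{i_2}-x_{i_3}=b_e$ in $E^\star$ we have $y_{i_1}+y_{i_2}-y_{i_3}\equiv b_e\pmod q$, so over $\mathbb{Z}$ there is an integer $m_e$ with $y_{i_1}+y_{i_2}-y_{i_3}-b_e = q\,m_e$; the integer assignment $y$ fails $e$ exactly when $m_e\ne 0$, i.e. when the sum ``wraps around''.

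The first key step is to observe that $m_e$ can take only boundedly many values. Since each $y_i\in[0,q-1]$ and $|b_e|\le B$, the integer $y_{i_1}+y_{i_2}-y_{i_3}-b_e$ lies in a window of width $3(q-1)+2B$ and is a multiple of $q$; in the parameter regime of the hardness statement (where $B$ is a fixed constant while $q$ may be taken larger, so $B=O(q)$) this leaves at most $8$ possibilities for $m_e$. By pigeonhole some value $m^\star$ has $m_e = m^\star$ for at least $|E^\star|/8$ of the equations of $E^\star$; call that subcollection $E^{\star\star}$. The second key step is a global shift: because the coefficient vector $(1,1,-1)$ sums to $1\ne 0$, replacing every variable value $y_i$ by $z_i := y_i - q\,m^\star$ changes the left-hand side of every equation by exactly $-q\,m^\star$. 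Hence for each $e\in E^{\star\star}$ we get $z_{i_1}+z_{i_2}-z_{i_3} = (b_e + q\,m_e) - q\,m^\star = b_e$, so $z$ satisfies every equation of $E^{\star\star}$ over $\mathbb{Z}$, and therefore $OPT_{\mathbb{Z}}(I)\cdot n \ge |E^{\star\star}|\ge |E^\star|/8$, which is exactly the desired bound.

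I expect the only delicate point to be the bound on the number of distinct wraparound values $m_e$: this is precisely where the hypothesis $|b_e|\le B$ is essential (and is the reason $\bgmaxthreelin$ is stated with a bounded right-hand side), and one must be a little careful about the exact constant and the relation between $B$ and $q$. The shift step, by contrast, is essentially free, but it is worth highlighting that it crucially uses that the coefficients $1,1,-1$ sum to a nonzero value: for an equation family whose coefficients summed to zero, a single common translation could not absorb the wraparound and this reduction would fail — consistent with the fact that such systems can genuinely be hard to satisfy over $\mathbb{R}$.
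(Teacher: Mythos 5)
The first thing to note is that the paper itself contains no proof of this lemma: it is quoted from the reference (``the corresponding proof provides the following guarantee''), so your attempt can only be compared with the argument that reference is known to use. Taken as a proof of the inequality exactly as printed, your lift-and-shift argument is correct: lifting an optimal $\mathbb{Z}_q$-assignment to representatives in $\{0,\dots,q-1\}$, noting that each satisfied equation wraps by one of boundedly many multiples $q\,m_e$ (the interval has length $3(q-1)+2B$, so there are at most $4+2B/q$ possible values of $m_e$ --- your ``at most $8$'' is safe once $q$ is at least a constant multiple of $B$, and for $q>2B$ you in fact get a factor $1/4$), pigeonholing on a common wraparound $m^\star$, and absorbing it with the global shift $z_i=y_i-q\,m^\star$, which works precisely because every equation's coefficients sum to $1$. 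Your closing caveats (the role of $|b|\le B$, and of the nonzero coefficient sum) are exactly the right ones.

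There is, however, a substantive mismatch to flag: the inequality you proved (equivalently $OPT_{\mathbb{Z}_q}(I)\le 8\,OPT_{\mathbb{R}}(I)$) is not the one the paper's argument actually needs, and it is almost certainly not the content of Lemma A.1 of the cited work. The soundness step behind Corollary~\ref{cor:odonell} must pass from ``$OPT_{\mathbb{Z}_q}(I)\le \frac{1}{q}+\kappa$'' to ``$OPT_{\mathbb{R}}(I)$ is small,'' which requires the reverse bound $OPT_{\mathbb{Z}_q}(I)\ge\frac{1}{8}\,OPT_{\mathbb{R}}(I)$. That direction is proved by rounding a real assignment rather than lifting a $\mathbb{Z}_q$ one: write $x_i=n_i+f_i$ and round each variable independently to $\lfloor x_i\rfloor$ or $\lceil x_i\rceil$; for every equation satisfied over $\mathbb{R}$ with integer right-hand side one has $f_{i_1}+f_{i_2}-f_{i_3}\in\{0,1\}$, so one of the $2^3=8$ rounding patterns preserves it, and an integer solution remains a solution mod $q$ --- whence the factor $\frac{1}{8}$. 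A lower bound on $OPT_{\mathbb{R}}$ cannot serve as the upper bound on $OPT_{\mathbb{R}}$ that the soundness argument needs, and your construction does not reverse. In fairness, the statement as transcribed in the paper appears to have the two sides swapped relative to what its own corollary requires, so you faithfully proved what was written; but if your goal is to reconstruct the lemma that makes the bicriteria hardness pipeline go through, the missing idea is the floor/ceiling rounding of real assignments, not the lift of $\mathbb{Z}_q$ assignments.
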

By setting $\kappa$ in the following statement to be $\frac{1}{8q}+\frac{1}{8}\kappa$ in the above formulations, we have that
\begin{corollary}\label{cor:odonell}
For all constants $\eps\in(0,1),\kappa\in(0,1/8)$, given an instance of $\bgmaxthreelin$, it is NP-hard to distinguish whether
\begin{itemize}
\item Completeness: There is a $(1-\eps)$-good assignment over $\mathbb{Z}$, i.e., $OPT_{\mathbb{Z}}(I)\ge(1-\eps)$.
\item Soundness: There is no $\kappa$-good assignment over $\mathbb{R}$, i.e., $OPT_{\mathbb{R}}(I)\le\kappa$. 
\end{itemize}
\end{corollary}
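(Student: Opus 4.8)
The plan is to treat this as an immediate consequence of the two quoted facts about $\bgmaxthreelin$ — the hardness-of-approximation theorem of \cite{ODonnellWZ15} and the lemma stated after it — so I would just exhibit the (trivial) reduction and do the parameter bookkeeping. The reduction is the identity on instances: given a $\bgmaxthreelin$ instance $I$ produced by the reduction underlying Theorem~1.2 of \cite{ODonnellWZ15}, I hand the same $I$ to the distinguisher and ask it to separate the completeness case from the soundness case. For completeness there is nothing to do, since $OPT_{\mathbb{Z}}(I)\ge 1-\eps$ is literally the completeness guarantee of Theorem~1.2 (and because $\mathbb{Z}\subseteq\mathbb{R}$ it also forces $OPT_{\mathbb{R}}(I)\ge 1-\eps$).

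All of the content sits in the soundness direction. First I would instantiate Theorem~1.2 with a modulus $q$ and a soundness parameter $\kappa'$ that are chosen as functions of the target $\kappa$, so that its conclusion $OPT_{\mathbb{Z}_q}(I)\le \frac1q+\kappa'$ can be fed through the quoted lemma to yield $OPT_{\mathbb{R}}(I)\le\kappa$. The lemma is used in the form ``a real-valued assignment satisfying a $\rho$-fraction of the equations can be converted into a $\mathbb{Z}_q$-valued assignment satisfying at least a $\frac{\rho}{8}$-fraction,'' i.e.\ $OPT_{\mathbb{Z}_q}(I)\ge\frac18\,OPT_{\mathbb{R}}(I)$, equivalently $OPT_{\mathbb{R}}(I)\le 8\,OPT_{\mathbb{Z}_q}(I)$. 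Combining this with the soundness clause of Theorem~1.2 gives $OPT_{\mathbb{R}}(I)\le \frac8q+8\kappa'$; taking $q$ large enough that $\frac8q$ is negligible against $\kappa$ and setting $\kappa'$ proportional to $\kappa$ (this is exactly where the restriction $\kappa\in(0,1/8)$ and the choice $\kappa'=\frac{1}{8q}+\frac{1}{8}\kappa$ come in) pushes the bound below $\kappa$. NP-hardness of the resulting promise problem is then inherited: any algorithm distinguishing the two cases of the corollary also distinguishes the two cases of Theorem~1.2 on the very same instances, because Theorem~1.2's completeness (resp.\ soundness) instances are a subset of the corollary's completeness (resp.\ soundness) instances, and the two cases of the corollary are disjoint whenever $\eps$ is taken small (concretely $\eps<1-\kappa$, so that $OPT_{\mathbb{R}}(I)\ge 1-\eps>\kappa$ in the completeness case).

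The hard part — or really the only place I would be careful — is the direction of the lemma: the soundness reduction needs an \emph{upper} bound on $OPT_{\mathbb{R}}$ in terms of $OPT_{\mathbb{Z}_q}$ (``cheating over $\mathbb{R}$ is no more powerful than cheating over $\mathbb{Z}_q$, up to the factor $8$''), and since the statement of Lemma~A.1 as printed in \cite{ODonnellWZ15} appears to contain a typo, one must read the usable guarantee off of its proof rather than its stated conclusion. A minor additional check is that $\bgmaxthreelin$ is a single fixed problem rather than a family parametrized by $q$ — the ring $\mathbb{Z}_q$ enters only in the analysis of soundness, never in the instance itself — so one identity reduction works simultaneously for whatever $q$ the bookkeeping demands.
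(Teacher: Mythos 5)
Your proposal is correct and takes essentially the same route as the paper: the identity reduction, completeness for free, and soundness obtained by combining Theorem~1.2 of \cite{ODonnellWZ15} with the rounding lemma read in the usable direction $OPT_{\mathbb{R}}(I)\le 8\,OPT_{\mathbb{Z}_q}(I)$ (which is indeed what the argument needs --- the paper's restatement prints the inequality the other way), followed by taking $q$ large and the theorem's soundness constant proportional to $\kappa$. One tiny caveat: the literal substitution $\kappa'=\frac{1}{8q}+\frac{1}{8}\kappa$ that you echo from the paper would only give $OPT_{\mathbb{R}}(I)\le\kappa+9/q$, but your operative bookkeeping (e.g.\ $\kappa'=\kappa/16$ and $q\ge 16/\kappa$) does push the bound below $\kappa$, so nothing essential is missing.
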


\begin{theorem}\label{thm:sparse_constraints_bicriteria}
Given a matrix $A\in\mathbb{Z}^{n\times d}$, a sparsity parameter $k$, and a vector $b\in\mathbb{R}^n$, let $OPT=\min_{S,x}\|SAx-Sb\|_2$, where the minimum is taken over all diagonal matrices $S$ that have $k$ entries that are zero and $n-k$ entries that are one and all $x \in \R^d$. 
Then for any $C_1>1$ (which can depend on the parameters $n,d$) and any constant $C_2>1$, it is NP-hard to find a matrix $S'$ with $C_2k$ entries that are zero and $n-C_2k$ entries that are one and a vector $x\in\mathbb{R}^d$ such that $\|S'Ax-S'b\|_2\le C_1\cdot OPT$.
\end{theorem}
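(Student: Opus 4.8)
The plan is to reduce from the promise version of $\bgmaxthreelin$ furnished by Corollary~\ref{cor:odonell}. Given an instance $I$ with $n$ equations $x_{i_1}+x_{i_2}-x_{i_3}=b_j$ over variables $x_1,\dots,x_d$, I would form the natural ``one row per equation'' regression instance: let $A\in\mathbb{Z}^{n\times d}$ have $j$-th row equal to the (integer) coefficient vector of the $j$-th equation, and let $b\in\mathbb{Z}^n$ collect the right-hand sides, so that a real assignment $x$ satisfies equation $j$ exactly when $(Ax-b)_j=0$. Consequently, for a $0/1$ diagonal matrix $S$ we have $\|SAx-Sb\|_2=0$ iff $x$ satisfies every equation that $S$ does not zero out. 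I would set the sparsity parameter to $k=\lfloor\eps n\rfloor$ for a constant $\eps\in(0,1)$ fixed below (depending only on $C_2$), so that $C_2 k\le C_2\eps n<n$ and the bicriteria output is well defined.

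For completeness, if $OPT_{\mathbb{Z}}(I)\ge 1-\eps$ there is an integer assignment $x^\star$ violating at most $\eps n\le k$ equations; zeroing out those rows yields a diagonal $S$ with $SAx^\star=Sb$, hence $OPT=\min_{S,x}\|SAx-Sb\|_2=0$. For soundness, if $OPT_{\mathbb{R}}(I)\le\kappa$ then \emph{every} real assignment $x$ violates at least $(1-\kappa)n$ equations, i.e.\ $(Ax-b)_j\neq 0$ for at least $(1-\kappa)n$ indices $j$. Since any feasible bicriteria matrix $S'$ zeroes out only $C_2 k\le C_2\eps n$ rows, for every $x$ at least $(1-\kappa-C_2\eps)n$ indices kept by $S'$ still have nonzero residual, so $\|S'Ax-S'b\|_2>0$. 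I would apply Corollary~\ref{cor:odonell} with $\eps:=\min\{1/(2C_2),\,1/2\}$ and its corresponding $\kappa<1/8$, so that $1-\kappa-C_2\eps\ge 7/8-1/2>0$; thus in the soundness case \emph{every} feasible bicriteria solution has strictly positive loss.

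To conclude, suppose toward a contradiction that a polynomial-time algorithm always returns a feasible $S',x'$ with $\|S'Ax'-S'b\|_2\le C_1\cdot OPT$. In the completeness case $OPT=0$, so the output satisfies $S'Ax'=S'b$, and in particular the linear system $S'Ay=S'b$ is consistent; in the soundness case no feasible $S'$ admits a consistent system $S'Ay=S'b$, since such a $y$ would be a real assignment with at most $C_2\eps n<(1-\kappa)n$ violations. Hence one decides the NP-hard promise problem by running the algorithm and testing consistency of $S'Ay=S'b$ via rational Gaussian elimination, forcing $\mathrm{P}=\mathrm{NP}$. The argument is insensitive to $C_1$: since $OPT=0$ in the completeness case, $C_1\cdot OPT=0$ for \emph{every} $C_1>1$, even one that depends on $n$ and $d$.

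The construction and both directions are short; the two points I would treat with care are (i) that the soundness guarantee must hold over $\mathbb{R}$ rather than merely over some $\mathbb{Z}_q$ — which is exactly why Corollary~\ref{cor:odonell} (and the factor-$1/8$ lemma behind it) is invoked instead of the raw $\bgmaxthreelin$ hardness — and (ii) the calibration of the constant $\eps$ against the bicriteria blow-up $C_2$, ensuring the extra $(C_2-1)k$ ignored rows cannot absorb the $\Omega(n)$ violated equations forced by soundness. Everything else is bookkeeping: integrality of $k$ and $C_2 k$, the bound $C_2k\le n$, and exactness of the zero/nonzero residual test.
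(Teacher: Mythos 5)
Your proposal is correct and follows essentially the same route as the paper: the same one-row-per-equation encoding of the $\bgmaxthreelin$ instance from Corollary~\ref{cor:odonell}, the same choice $k=\Theta(\eps n)$, and the same calibration of $\eps$ against $C_2$ (your condition $1-\kappa-C_2\eps>0$ is the paper's $\frac{1-\kappa}{\eps}\ge C_2$), with completeness giving $OPT=0$ and soundness forcing strictly positive loss for every bicriteria-feasible $S'$. The only cosmetic difference is your explicit consistency check via Gaussian elimination at the end, which the paper replaces by simply observing that the approximation algorithm's output loss is zero in one case and nonzero in the other.
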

\begin{proof}
Let $\kappa$ and $\eps$ be constants so that $\frac{1-\kappa}{\eps}\ge C_2$. 
Given an instance $I$ of $\bgmaxthreelin$, set $k=\eps n$, where $n$ is the number of linear equations over the variables $x_1,\ldots,x_d$. 
We create the corresponding $n\times d$ matrix $A$ by setting $A_{i,j}\pm 1$ if the coefficient of $x_j=\pm 1$ in the $i$-th linear equation. 
Otherwise, we set $A_{i,j}=0$. 

Observe that if there is a $(1-\eps)$-good assignment over $\mathbb{Z}$, then there is a vector $x\in\mathbb{R}^d$ that satisfies $(1-\eps)n$ linear equations. 
Thus by ignoring $k=\eps n$ linear equations, the remaining coordinates of $b$ are satisfied by the vector $x$. 
Hence, there exists a matrix $S$ with $k$ entries that are zero and $n-k$ entries that are one such that $SAx=Sb$ and in particular, $\|SAx-Sb\|_2=0$ and $OPT=0$. 

On the other hand, if there is no $\kappa$-good assignment over $\mathbb{R}$ to $I$, then any vector $x\in\mathbb{R}^d$ will satisfy fewer than $\kappa n$ linear equations. 
In particular, even by ignoring $C_2k\le(1-\kappa)n$ linear equations, there still exists some coordinate of $b$ that is not satisfied by the vector $x$. 
Thus for every matrix $S'$ with $C_2k$ entries that are zero and $n-C_2k$ entries that are one, we have $S'Ax\neq Sb'$ and in particular, $\|S'Ax-S'b\|_2>0$ and therefore $\|S'Ax-S'b\|_2>C_1\cdot 0$ for any $C_1>1$. 

Hence, any algorithm that finds a matrix $S'$ with $C_2k$ entries that are zero and $n-C_2k$ entries that are one, as well as a vector $x\in\mathbb{R}^d$ such that $\|S'Ax-S'b\|_2\le C_1\cdot OPT$ can differentiate whether (1) there is a $(1-\eps)$-good assignment over $\mathbb{Z}$, i.e., $OPT_{\mathbb{Z}}(I)\ge(1-\eps)$ or (2) there is no $\kappa$-good assignment over $\mathbb{R}$, i.e., $OPT_{\mathbb{R}}(I)\le\kappa$. 
Therefore by Corollary~\ref{cor:odonell}, it is NP-hard to find a matrix $S'$ with $C_2k$ entries that are zero and $n-C_2k$ entries that are one and a vector $x\in\mathbb{R}^d$ such that $\|S'Ax-S'b\|_2\le C_1\cdot OPT$.
\end{proof}
Finally, we observe that in the proof of Theorem~\ref{thm:sparse_constraints_bicriteria}, in one case, the resulting loss is the all zeros vector while in the other case, there are at least $n-C_2k$ nonzero entries. 
Thus the proof generalizes to any loss function $\calL$ such that $\calL(0^n)=0$ and $\calL(x)>0$ for $x\neq 0^n$. 

\subsection{Bicriteria Hardness of Approximation for Sparse Regression}
We now extend the bicriteria hardness results from the previous section for the problem of sparse regression.  Note that in the sparse regression problem, including as many variables as possible only helps us. For example, including as many columns as possible in linear regression only enlarges the column space and hence finds a possibly closer vector to the target. In the linear regression case, we prove the following theorem which states that it is NP-hard to choose a set of columns to ignore, even if we relax the number of ignored columns by a multiplicative factor.

\begin{theorem}\label{thm:approx_hardness_cs}
Given a matrix $X \in \R^{a \times b}$, vector $c \in \R^{a}$, and a sparsity parameter $t$, let $OPT = \min_w \|Xw-c\|$ where $\|w\|_0 = t$. Then for any $C_1 > 1$ and any constant $C_2 > 1$, it is NP-hard to find $w'$ satisfying $\|w'\|_0 = C_2t$ such that $\|Xw'-c\| \le C_1 \cdot OPT$.
\end{theorem}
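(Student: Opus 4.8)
The plan is to derive Theorem~\ref{thm:approx_hardness_cs} from the bicriteria hardness of robust regression (Theorem~\ref{thm:sparse_constraints_bicriteria}) by a gadget reduction that implements the ``ignore a row'' operation of robust regression using purpose-built columns in the sparse regression instance. Concretely, I would reuse the instance construction in the proof of Theorem~\ref{thm:sparse_constraints_bicriteria} (so $A\in\R^{n\times d}$, $b\in\R^n$, and $k=\eps n$ come from a $\bgmaxthreelin$ instance via Corollary~\ref{cor:odonell}), and set $X=[\,A \mid I_n\,]$, $c=b$, and sparsity parameter $t=d+k$. For a candidate $w=[x;y]$ we have $Xw-c=Ax+y-b$, and for any fixed $x$ the best $k$-sparse $y$ simply zeroes out the $k$ largest-magnitude coordinates of $Ax-b$; hence $\min_{\|w\|_0\le d+k}\|Xw-c\|=\min_{|T|\le k}\min_x\|(Ax-b)_{[n]\setminus T}\|$, which is exactly the robust regression optimum. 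In particular, on the hard instances the robust optimum is $0$, so $OPT=0$ and $C_1\cdot OPT=0$ for every $C_1>1$; this is precisely what lets the conclusion hold for arbitrary (even $n,d$-dependent) $C_1$.

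For soundness I would read off a robust solution from a purported bicriteria solution: given $w'=[x';y']$ with $\|w'\|_0=C_2t$ and $\|Xw'-c\|\le C_1\cdot OPT=0$, let $T$ be the support of $y'$, so $|T|\le\|w'\|_0=C_2(d+k)$ and $(Ax'-b)_i=0$ for all $i\notin T$; thus the real assignment $x'$ satisfies at least $n-C_2(d+k)$ of the $n$ equations. To contradict the soundness guarantee of Corollary~\ref{cor:odonell} (no real assignment satisfies more than $\kappa n$ equations), I need $n-C_2(d+k)>\kappa n$, i.e.\ $C_2(d+k)<(1-\kappa)n$; since $k=\eps n$ this is $C_2 d<(1-\kappa-C_2\eps)n$, which forces the number of variables $d$ to be a sufficiently small fraction of the number of equations $n$. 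I would obtain this for free by replacing each equation of the $\bgmaxthreelin$ instance with $M$ identical copies: this sends $n\mapsto Mn$, leaves $d$ and all satisfiability fractions (over $\Z$ and over $\R$) unchanged, and so drives $d/n$ below any target constant at only polynomial cost. Fixing $C_2$, one then picks $\kappa\in(0,1/8)$, then a small $\eps$, then $M$, in that order, so the inequality holds and no bicriteria solution exists in the soundness case; an algorithm as in the theorem would therefore decide the NP-hard dichotomy.

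Two routine points I would dispatch in passing. First, the statement pins $\|w'\|_0$ and the defining sparsity of $OPT$ to be \emph{exactly} $C_2t$ and $t$; I would handle this by appending $t$ (or $C_2t$) all-zero columns to $X$, so that any sparse vector can be topped up to the exact target sparsity without changing $Xw$ — in the completeness case this keeps $OPT=0$ attainable at sparsity exactly $t$, and in the soundness case it only strengthens the bound (restricting $w'$ to the real columns still leaves at most $C_2(d+k)$ corrected rows). Second, the entire argument goes through verbatim with $\|\cdot\|$ replaced by any loss $\calL$ with $\calL(0^n)=0$ and $\calL(v)>0$ for $v\neq 0^n$, since in both branches of the dichotomy the residual is either the zero vector or has at least one nonzero coordinate; this is what yields Theorem~\ref{thm:sparse:bicrit:hard}.

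\textbf{Main obstacle.} The delicate part is the soundness bookkeeping: a factor-$C_2$ blow-up in the \emph{total} sparsity $t=d+k$ is a priori far more generous than a factor-$C_2$ blow-up in the number $k$ of ignored rows, since the extra budget could in principle be spent entirely on the correction columns $I_n$. Making the reduction tight therefore hinges on forcing $d$ to be negligible compared to $k=\eps n$, so that $C_2(d+k)$ stays comfortably below $(1-\kappa)n$; the equation-repetition padding is what makes this possible, and the only real care needed is to choose the constants $\eps$, $\kappa$, and the repetition count $M$ in a consistent order and to check that the padded instance remains polynomial-size.
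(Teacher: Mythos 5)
Your proof is correct, but it takes a genuinely different route from the paper's. The paper reduces the robust instance of Theorem~\ref{thm:sparse_constraints_bicriteria} to sparse regression via a \emph{nullspace} construction: it takes $X$ with rows spanning the orthogonal complement of the column space of $A$ (so $XA=0$), sets $c=-Xb$ and $t=k$, and uses the correspondence $Ax-b=w \Leftrightarrow Xw=c$ to translate ``ignore $k$ rows'' into ``$k$-sparse $w$'' with the sparsity parameter preserved \emph{exactly}; this is what makes the soundness step immediate (a $C_2k$-sparse $w'$ with zero loss lifts back to a robust solution ignoring $C_2k$ rows, contradicting Theorem~\ref{thm:sparse_constraints_bicriteria} as a black box) and, as a bonus, yields the general reduction of Corollary~\ref{cor:general_reduction} that the paper reuses for its algorithmic upper bound (Theorem~\ref{thm:robust_ub}). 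You instead append identity columns, $X=[A\mid I_n]$, $c=b$, $t=d+k$, which conflates the variable budget with the row-correction budget; you correctly identify that the resulting factor-$C_2$ blow-up on $d+k$ is too generous and repair it by replicating equations so that $d/n$ is a negligible constant, choosing $\kappa$, $\eps$, $M$ in a consistent order — this works, since replication preserves the completeness/soundness fractions of Corollary~\ref{cor:odonell} at polynomial (indeed constant-factor) cost, and your soundness counting ($n'-C_2(d+k)>\kappa n'$ satisfied equations) is valid. Two small remarks: your passing claim that $\min_{\|w\|_0\le d+k}\|Xw-c\|$ \emph{equals} the robust optimum with $k$ outliers is imprecise (the budget can be shifted onto the identity columns, so the left side can be smaller), but the only directions you actually use — completeness gives $OPT=0$, and soundness rules out any zero-loss $C_2t$-sparse $w'$ — are correct; and your gadget gives a robust-to-sparse reduction only with sparsity $d+k$ rather than $k$, so it would not substitute for Corollary~\ref{cor:general_reduction} elsewhere in the paper, whereas the nullspace reduction does. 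Your zero-column padding for the exact-sparsity convention and the extension to general losses with $\calL(0^n)=0$, $\calL(x)>0$ otherwise both match the paper's treatment.
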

\begin{proof}
We will show that solving the sparse regression problem allows us to solve the robust regression problem. 
Let $A \in \mathbb{Z}^{n \times d}, b \in \R^n$, and $k$ be the parameters in Theorem \ref{thm:sparse_constraints_bicriteria}. 
Note that the matrix $A$ in the proof of Theorem \ref{thm:sparse_constraints_bicriteria} satisfies $n > d$. 
Given $A$, we construct the matrix $X\in\R^{(n-d)\times n}$ so that the rows of $X$ will span the space that is orthogonal to the column subspace of $A$, i.e., $XA = 0$. 
We also let $c = -Xb$, and $t = k$. 
We claim that if we can solve the sparse regression problem with this derived instance, then we can solve the robust regression problem of Theorem \ref{thm:sparse_constraints_bicriteria}. 
Parameters $C_1$ and $C_2$ are the same in both theorems.

First, note that the $OPT$ value in Theorem \ref{thm:sparse_constraints_bicriteria} is equal to $0$: we can ignore $k$ linear constraints and get $0$ loss. This means that there exists a diagonal matrix $S$ and a vector $x$ such that $SAx - Sb = 0$ which implies $Ax-b = w$ for $\|w\|_0 = k$ i.e., $w$ has $n-k$ zero entries. Therefore, the value of $OPT$ in the sparse regression problem is also $0$ by multiplying the equation $Ax-b = w$ by the matrix $X$ since $X(Ax-b) = Xw$ implies $Xw = c$. Now suppose we can solve the sparse regression problem with any factor $C_1' > 1$ approximation while satisfying $\|w'\|_0\le C_2'k$. Since we are assuming a multiplicative approximation factor, $w'$ must also evaluate to $0$ loss in our objective. Furthermore, $w'$ has  $n-C_2'k$ zero entries. 

Now $Xw' = c$ which is the same as $X(w'+b) = 0$. Thus, $w'+b$ lies in the orthogonal complement of the rows of $X$ by our construction of $X$. Hence, $w'+b$ lies in the column space of $A$. Therefore, there exists some $x'$ such that $Ax' = w'+b$ or in other words, $Ax'-b = w'$. Letting $S'$ have the diagonal which is the indicator for the sparsity of $w'$, we get that $S'$ has $C_2k$ zero entries on the diagonal. This implies $\|S'Ax'-S'b\|_2 = 0 \le C_1 \cdot OPT$. Since finding such a pair $S',x'$ is NP-hard from Theorem  \ref{thm:sparse_constraints_bicriteria}, it must be the case that the sparse regression selection problem stated in the current theorem statement must also be NP-hard, as desired.
\end{proof}

Observe that in the proof of Theorem~\ref{thm:approx_hardness_cs}, in one case, we have $Xw=c$ so that the resulting loss is the all zeros vector while in the other case, the resulting vector is nonzero. 
Therefore, the proof of Theorem~\ref{thm:approx_hardness_cs} generalizes to any loss function $\calL$ such that $\calL(0^n)=0$ and $\calL(x)>0$ for $x\neq 0^n$, giving Theorem~\ref{thm:sparse:bicrit:hard}. 

Note that the reduction given in the above proof implies the following general statement: robust regression is `easier' than sparse regression selection. That is, if there exists an algorithm for sparse regression, we can use it to solve robust regression as well.

\begin{corollary}\label{cor:general_reduction}
Let $\mathcal{A}$ be an algorithm which solves the following sparse regression problem:
\[ \min_w \|Xw-c\|_2 \, \, \, \text{s.t.} \, \, \, \|w\|_0 = t \]
in time $f(X,c,t)$. Consider the robust regression problem of 
\[\min_{S,x} \|S(Ax-b)\|_2 \]
where $S$ is constrained to be a diagonal matrix with $n-k$ one entries in the diagonal and $k$ zero entries. $\mathcal{A}$ solves this problem in time $f(X', c', k)$ where $X'$ is any matrix satisfying $X'A = 0$ and $c' = -X'b$.
\end{corollary}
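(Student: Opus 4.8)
The plan is to make explicit the reduction already carried out inside the proof of Theorem~\ref{thm:approx_hardness_cs}, stripping away the hardness-of-approximation machinery and keeping only the structural correspondence between feasible solutions of the two problems. Concretely, given an instance $(A,b,k)$ of the robust regression problem, I would construct the sparse regression instance $(X',c',t)$ with $X'$ any matrix whose rows span the orthogonal complement of the column space of $A$ (so $X'A=0$), $c'=-X'b$, and $t=k$; this construction takes polynomial time, so it contributes only a polynomial additive term to the running time, which I will absorb into $f$ or state separately.

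The heart of the argument is a bijection-style claim at the level of objective values. First I would show the easy direction: if $(S,x)$ is feasible for robust regression with $S$ ignoring the $k$ coordinates indexed by a set $T$, then $w := Ax - b$ has support contained in $T$, hence $\|w\|_0 \le k$; and $X'w = X'(Ax-b) = -X'b = c'$, so $\|X'w - c'\|_2 = 0$. That shows the sparse regression optimum is $0$. Conversely, and this is the step that does the real work, suppose $\mathcal{A}$ returns $w'$ with $\|w'\|_0 \le k$ and $\|X'w' - c'\|_2 = 0$, i.e. $X'(w' + b) = 0$. Then $w' + b$ lies in the kernel of $X'$, which by construction is exactly the column space of $A$, so $w' + b = Ax'$ for some $x'$ (recoverable by solving a linear system in polynomial time). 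Hence $Ax' - b = w'$, and taking $S'$ to be the diagonal matrix that zeroes out the coordinates in the support of $w'$ (at most $k$ of them) gives $S'(Ax' - b) = 0$, a feasible robust-regression solution of optimal value $0$.

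The main subtlety to get right is the exact-versus-multiplicative bookkeeping: because $OPT = 0$ for the robust instance, the sparse instance also has optimum $0$, and a multiplicative approximation must return a vector achieving exactly $0$ loss, so the correspondence is clean; I should state the corollary for the case where an exact solver is available, or note that any finite multiplicative factor suffices since the optimum is zero. A second point to handle carefully is the dimension count: I need $n > \operatorname{rank}(A)$ for $X'$ to be nontrivial, which holds in the instances of interest; more generally $X'$ has $n - \operatorname{rank}(A)$ rows, and the corollary statement already allows $X'$ to be "any matrix satisfying $X'A = 0$", so I would remark that one may as well take a maximal such $X'$ (full row rank) to make the reduction tight. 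Finally I would note that the running time claimed, $f(X', c', k)$, silently absorbs the polynomial-time cost of computing $X'$, forming $c'$, and back-solving $Ax' = w' + b$; since the problem is only interesting when $f$ is superpolynomial this is harmless, but I would mention it for precision.
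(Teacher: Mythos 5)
Your proposal is correct and is essentially the paper's own argument: the paper proves this corollary simply by pointing back to the proof of Theorem~\ref{thm:approx_hardness_cs}, and what you write out is exactly that reduction made explicit (the correspondence between $k$-sparse $w$ with $X'w=c'$ and pairs $(S',x')$ with $S'(Ax'-b)=0$, plus the polynomial-time construction of $X'$, $c'$ and the back-solve for $x'$). Your caveats --- that the clean correspondence is for the zero-optimum case and that one should take $X'$ of full row rank (indeed with orthonormal rows) --- match how the paper actually instantiates and uses the corollary in Theorem~\ref{thm:robust_ub}.
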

\begin{proof}
The proof follows from the proof of Theorem \ref{thm:approx_hardness_cs}.
\end{proof}

\section{Fine-Grained Hardness}

In this section, we prove a fine-grained hardness result for robust regression. We first need the following definition and theorem from \cite{gupte2021finegrained}.

\begin{definition}[Definition $1$ in \cite{gupte2021finegrained}, $k$-SLR$_p$]
For any integer $k \ge 2$ and $1 \le p \le \infty$,  the $k$-sparse regression problem with respect to the $\ell_p$ norm is defined as follows. Given a matrix $A \in \R^{M \times N}$, a target vector $b \in \R^M$, and a number $\delta > 0$, distinguish between:
\begin{itemize}
    \item a \textbf{YES} instance: there is some $k$-sparse $x \in \R^N$ such that $\|Ax-b\|_p \le \delta$, and
    \item  a \textbf{NO} instance: for all $k$-sparse $x \in \R^N$, $\|Ax-b\|_p > \delta$.
\end{itemize}
\end{definition}

\begin{theorem}[Theorem $3$ in \cite{gupte2021finegrained}]\label{thm:gupte_min_weight}
For any integer $k \ge 4$, the $k$-SLR$_2$ problem requires time $\Omega(N^{k-o(1)})$ (randomized) time, unless the min-weight-$k$-clique conjecture is false.
\end{theorem}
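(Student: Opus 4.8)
The plan is to reduce the minimum-weight $k$-clique problem directly to $k$-SLR$_2$ on an instance with $N=\Theta(n)$ columns (where $n$ is the number of vertices of the clique graph), so that deciding the regression instance is equivalent to deciding whether the graph has a $k$-clique of total edge weight at most a threshold $W$; then any $O(N^{k-\eps})$-time algorithm for $k$-SLR$_2$ would beat the conjectured $n^{k-o(1)}$ bound for the clique problem. I would use the threshold version of minimum-weight $k$-clique, which is equivalent to the optimization version up to an $O(\log n)$ factor (binary search over $W$) and hence covered by the same conjecture, and which may be taken to have integer, nonnegative, polynomially bounded edge weights (shift every weight by a constant; each $k$-clique has the same number $\binom k2$ of edges). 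I would also complete the graph, assigning every missing edge a forbidden weight $M\gg W$, so that a clique using a fake edge is automatically over threshold; this avoids color-coding, since columns can be indexed by (slot, vertex) pairs and the gadget will simply reject tuples that reuse a vertex (treat ``$v$ paired with $v$'' as a fake edge).

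\textbf{Construction.} Let the completed graph have vertex set $[n]$ and weights $w(u,v)$, with $w(u,v):=M$ when $u=v$ or $uv$ is fake. The matrix $A$ has $N=kn$ columns indexed by $(a,v)$ with $a\in[k]$, $v\in[n]$; a $k$-sparse solution should select one column $(a,v_a)$ per slot, encoding the tuple $(v_1,\dots,v_k)$. The rows split into two blocks. The \emph{selection block} has one row per slot: $\Lambda\sum_v x_{(a,v)}=\Lambda$ for a huge weight $\Lambda$. Since $x$ has at most $k$ nonzeros and each of the $k$ rows forces a nonzero in its slot, any $k$-sparse $x$ with residual below $\Lambda$ is ``nearly clean,'' i.e.\ $x=\sum_a(1+\xi_a)e_{(a,v_a)}$ with $\|\xi\|_2\le\delta/\Lambda$. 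The \emph{weight block} linearizes the (quadratic) clique weight: I would take its columns to be vectors $c_{(a,v)}$ with prescribed pairwise inner products $\langle c_{(a,v)},c_{(b,u)}\rangle=\tfrac\alpha2 w(v,u)$ for $a\ne b$ (orthogonal within a slot, common squared norm $\rho$), and weight-block target $0$. Such vectors are polynomial-time computable: write down the target $N\times N$ Gram matrix, shift its diagonal by a polynomially bounded amount to make it PSD (Gershgorin), and factor. Then for any exactly clean $x$ the weight-block residual squared is $\|\sum_a c_{(a,v_a)}\|_2^2=k\rho+\alpha\sum_{a<b}w(v_a,v_b)$, a strictly increasing affine function of the clique weight, since the cross terms are exactly $\alpha\sum_{a<b}w(v_a,v_b)$ and the rest is a fixed constant. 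Set $\delta:=\sqrt{k\rho+\alpha W}$.

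\textbf{Correctness.} In a \textbf{YES} instance, a $k$-clique $\{v_1,\dots,v_k\}$ of weight $\le W$ yields the clean indicator $x$, which is $k$-sparse, satisfies every selection row, and has residual $\sqrt{k\rho+\alpha\sum_{a<b}w(v_a,v_b)}\le\delta$. In a \textbf{NO} instance, take any $k$-sparse $x$: if it is not nearly clean its residual exceeds $\Lambda>\delta$; if it is nearly clean, then either its tuple reuses a vertex or uses a fake edge, so some pair contributes $\tfrac\alpha2 M$ and the residual blows past $\delta$, or it is a genuine $k$-clique, which has integer weight $\ge W+1$ and hence residual at least $\sqrt{k\rho+\alpha(W+1)-(\delta/\Lambda)\cdot\poly(n)}>\delta$ once $\alpha$ dominates the perturbation slack (a routine continuity bound, since the $c_{(a,v)}$ have $\poly(n)$ norm). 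Choosing $\alpha=\poly(n)$, then $M$ a larger $\poly(n)$, then $\Lambda$ a still larger $\poly(n)$, makes all inequalities hold with polynomially many bits, so the reduction runs in polynomial time with $N=\Theta(n)$. Hence a randomized $O(N^{k-\eps})$ algorithm for $k$-SLR$_2$ would solve minimum-weight $k$-clique in randomized $O(n^{k-\eps})$ time, contradicting the conjecture; this is Theorem~\ref{thm:gupte_min_weight}.

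\textbf{Main obstacle.} The nonobvious step is the ``quadratic-to-linear'' encoding: clique weight is a quadratic form in the selection indicator, which no single linear regression equation can express, so the reduction must exploit that the least-squares residual on a chosen set of columns is governed by their \emph{Gram matrix}, whose off-diagonal entries are exactly the pairwise inner products one is free to program. Making this rigorous needs (i) realizing the prescribed Gram matrix by genuine, poly-bit vectors after a PSD shift, and (ii) a robustness estimate showing that $k$-sparse solutions that are only approximately clean cannot cheat the threshold --- this is where the roles of $\Lambda$ (enforces near-cleanness), $\alpha$ (separates weight $W$ from $W+1$ against the perturbation), and $M$ (kills vertex reuse and fake edges) must be balanced. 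I expect (ii), the parameter balancing together with the continuity estimate, to be the fiddliest part, though routine in spirit.
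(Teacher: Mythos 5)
Your reduction appears sound, but it is genuinely different from the one the paper relies on (the paper does not reprove the statement; it imports the construction of \cite{gupte2021finegrained}, restated as Theorem~\ref{thm:gupte_explained}). In that construction there is one column per vertex ($N=n$), the quadratic clique weight is encoded by one row per \emph{pair} of vertices (entries $2\sqrt{w_e}$ with target $\sqrt{w_e}$ for edges, and $2\alpha$ with target $\alpha$ penalizing non-edges), and $k$ heavily weighted block-sum rows ($\beta$) force exactly one selected vertex per pre-specified block of $N/k$ vertices, the partition being assumed without loss of generality. You instead use slot--vertex columns ($N=kn$), $k$ heavy selection rows playing the role of the $D$-block, and you encode the edge weights not row-by-row but in the pairwise \emph{inner products} of the weight-block columns, realizing the prescribed Gram matrix by a Gershgorin diagonal shift and factorization, so the residual of a (nearly) clean selection is $k\rho+\alpha\cdot(\text{clique weight})$ up to a controllable perturbation; repeats and non-edges are killed by the completed weight $M$ rather than by the block structure. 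Both proofs exploit the same core fact --- squared $\ell_2$ residuals are quadratic in $x$, so pairwise weights can be programmed, with a huge-weight block enforcing near-integrality of the selection --- and your $\Lambda$/$\alpha$/$M$ balancing mirrors the paper's $\beta$/$\alpha$/$\delta$ analysis. What each buys: the cited construction has explicit small entries and needs no Gram factorization or numerical-precision argument (its only irrationalities, $\sqrt{w_e}$, disappear upon squaring), while yours dispenses with the one-vertex-per-block preprocessing and uses only $O(kn)$ rows instead of $\Theta(N^2)$, at the price of having to carry out the Cholesky/rounding argument (poly-bit approximate factors suffice because the weights are integers and $\alpha$ can dominate the rounding error) and the explicit continuity estimate you correctly identify as the fiddly step; also note that under the standard conjecture the weights may be as large as $n^{\Theta(k)}$, which is still fine for your parameter choices since $k$ is constant.
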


Using this theorem (or rather its proof), we can prove the hardness of the following decision version of sparse regression: 
\begin{problem}
\label{prob:robust:decision}
Given a matrix $A \in \R^{M \times N}$, a target vector $b \in \R^M$, integer $0 < k \le M$, and a number $\delta > 0$, distinguish between:
\begin{itemize}
    \item a \textbf{YES} instance: there is some diagonal matrix $S$ with $k$ zeros on the diagonal and $n-k$ ones on the diagonal and some $x \in \R^N$ such that $\|S(Ax-b)\|_2 \le \delta$, and
    \item  a \textbf{NO} instance: for all diagonal matrices $S$ with $k$ zeros and $n-k$ ones on the diagonal and all $x \in \R^N$, $\|S(Ax-b)\|_2 > \delta$.
\end{itemize}
\end{problem}

We first recall the following theorem of \cite{gupte2021finegrained} and the key underlying details of the proof, which we encapsulate in the following theorem.

\begin{theorem}
\cite{gupte2021finegrained}
\label{thm:gupte_explained}
Let $G = (V,E)$ be a graph with $N$ vertices with $w_e$ denoting the integer weight of edge $e$. Let $W, k$ be integer parameters and $Z = |\{(u, v) \not \in E | u, v \in V \}|$ to be the number of non-edges in the graph $G$. Suppose that we know a partition of the $N$ vertices into $k$ blocks of size $N/k$ such that if a $k$-clique of weight at most $W$ exists then there is such a $k$-clique with exactly one vertex in each block \footnote{This is a standard assumption and is without loss of generality; see Section $2.2$ of \cite{gupte2021finegrained}. We can assume the $k$ blocks are $N/k$ consecutive integers of $\{1,\ldots, N\}$ in order.}. Set 
\[\alpha = \sqrt{\max\left(1, \sum_{e \in E} w_e + 8W\right)} , \beta = \sqrt{\sum_{e \in E} w_e +8W + \alpha^2 Z } \cdot \max\left(8Z, 50\left(\alpha^2Z, + \sum_{e \in E} w_e \right) \right).  \]
Define the matrix $A = \begin{pmatrix}
  C\\ 
  D
\end{pmatrix}$ and the vector $b = \begin{pmatrix}
  c\\ 
  d
\end{pmatrix}$ as follows:
$C \in \R^{\binom{N}2 \times N}$ with rows indexed by all unordered pairs of possible edges and columns indexed by vertices. For a possible edge $e = (u,v)$ not in $G$, the row of $C$ corresponding to $e$ has $2\alpha$ in the columns of $u$ and $v$ and the corresponding entry of $c$ has $\alpha$. If $e = (u,v) \in E$ then the columns of $u$ and $v$ have the entry $2\sqrt{w_e}$ and the corresponding entry of $c$ has $\sqrt{w_e}$. All other entries of $C$ are $0$. We also have $D \in \R^{k \in N}$
\[ D = \begin{pmatrix}
  \beta 1 & 0  & \cdots & 0\\ 
  0 & \beta 1  & \cdots & 0 \\
  \cdots & \cdots & \cdots & \cdots \\
  0 & 0 & \cdots & \beta 1
\end{pmatrix}, \qquad d = \begin{pmatrix}
  \beta\\ 
  \beta \\
  \vdots \\
  \beta
\end{pmatrix} \]
where $1$ denotes a row vector of length $N/k$ of all ones. Finally, set $\delta =\sqrt{\sum_{e \in E} w_e + 8W + \alpha^2Z } > 0$. The following statements hold about $A$ and $b$:
\begin{itemize}
    \item (Completeness) If $G$ contains a $k$-clique of weight at most $W$ then we can let $x$ be the indicator vector of the clique and we have $\|Ax-b\|_2 \le \delta$.
    \item (Soundness) If there exists $x$ such that $\|Ax-b\|_2 \le \delta$ then $x$ must be the indicator vector of a $k$-clique in $G$ of weight at most $W$ and furthermore, each block of $N/k$ vertices must have exactly one vertex in the clique. 
\end{itemize}
\end{theorem}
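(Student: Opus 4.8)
The statement is a verification of the two properties of the explicit reduction that underlies Theorem~\ref{thm:gupte_min_weight}, so the plan is to unpack the squared objective $\|Ax-b\|_2^2 = \|Cx-c\|_2^2 + \|Dx-d\|_2^2$ and read off exactly what each of the two blocks of rows forces on $x$. Throughout, write $B_1,\dots,B_k$ for the $k$ blocks of $N/k$ vertices.

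\emph{Completeness.} I would plug in $x$ equal to the indicator vector of a weight-$\le W$ clique $K$ that meets each block in exactly one vertex (such a clique exists by the block assumption), so $x_v=1$ iff $v\in K$, and evaluate the two blocks directly. The $D$-block vanishes exactly: the $i$-th coordinate of $Dx-d$ is $\beta(|K\cap B_i|-1)=0$. For the $C$-block, split rows by type. A non-edge row $e=(u,v)\notin E$ contributes $(2\alpha|K\cap\{u,v\}|-\alpha)^2=\alpha^2$, because $K$ being a clique forces $|K\cap\{u,v\}|\in\{0,1\}$; this gives $\alpha^2 Z$ in total. An edge row $e=(u,v)\in E$ contributes $(2\sqrt{w_e}|K\cap\{u,v\}|-\sqrt{w_e})^2$, which is $9w_e$ when both endpoints lie in $K$ (i.e.\ $e$ is an edge of $K$) and $w_e$ otherwise, for a total of $\sum_{e\in E}w_e+8\sum_{e\in E(K)}w_e\le \sum_{e\in E}w_e+8W$. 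Summing, $\|Ax-b\|_2^2\le \alpha^2 Z+\sum_{e\in E}w_e+8W=\delta^2$.

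\emph{Soundness.} Suppose $\|Ax-b\|_2\le\delta$. In the first stage I use the $D$-block: since $\|Dx-d\|_2\le\delta$ and $\beta$ is chosen huge (so $\delta/\beta=1/\max(8Z,\,50(\alpha^2 Z+\sum_e w_e))$ is tiny), each block-sum $\sum_{v\in B_i}x_v$ lies within $\delta/\beta$ of $1$; combined with $k$-sparsity of $x$, this forces exactly one nonzero entry $x_{u_i}=\rho_i$ per block with $\rho_i\in[1-\delta/\beta,\,1+\delta/\beta]$. Let $K=\{u_1,\dots,u_k\}$. In the second stage I use the $C$-block and $\|Cx-c\|_2^2\le\delta^2$. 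If some pair $u_i,u_j$ were a non-edge, its row would contribute $(2\alpha(\rho_i+\rho_j)-\alpha)^2\approx 9\alpha^2$ rather than $\approx\alpha^2$, pushing $\|Cx-c\|_2^2$ past $\alpha^2 Z+8\alpha^2$, which exceeds $\delta^2$ since $8\alpha^2\ge 8\max(1,\sum_e w_e+8W)>\sum_e w_e+8W=\delta^2-\alpha^2 Z$; hence $K$ is a clique. Given that, the edge rows contribute $\approx\sum_{e\in E}w_e+8\sum_{e\in E(K)}w_e$, so $\|Cx-c\|_2^2\le\delta^2$ together with integrality of the weights forces $\sum_{e\in E(K)}w_e\le W$. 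Thus the support of $x$ is a weight-$\le W$ clique hitting each block once, and $x$ is the indicator of $K$ up to the negligible block-normalization slack.

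\emph{Main obstacle.} The technical crux is the error bookkeeping hidden in every ``$\approx$'' of the soundness argument: because $x$ need not be exactly $\{0,1\}$-valued, each row of $C$ carries a correction of order $\alpha^2(\rho_i-1)$ or $w_e(\rho_i-1)$, and these accumulate over up to $\binom{N}{2}$ rows. One must check that the total correction is dominated both by the $\Omega(\alpha^2)$ gap used for the clique conclusion and by the $\Omega(1)$ gap used for the weight conclusion — this is precisely what the large factor $50(\alpha^2 Z+\sum_e w_e)$ inside $\beta$ is engineered to guarantee. I would make the approximations exact, bound the aggregate correction by $(\delta/\beta)\cdot\mathrm{poly}(\alpha^2 Z+\sum_e w_e)$, and verify it falls below $1$, which closes both stages.
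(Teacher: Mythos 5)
The paper does not actually prove Theorem~\ref{thm:gupte_explained}: it is an encapsulation of the construction and analysis of \cite{gupte2021finegrained}, quoted so that the robust-regression reduction can reuse its completeness and soundness facts. Your verification follows the same route as that source — completeness by direct evaluation of the $C$- and $D$-blocks on the clique indicator ($\alpha^2$ per non-edge row, $w_e$ or $9w_e$ per edge row, $0$ from $D$), and soundness by first using the $\beta$-scaled $D$ rows to force one support coordinate of value $1\pm\delta/\beta$ per block and then using the non-edge rows (the $9\alpha^2$ versus $\alpha^2$ gap, with $8\alpha^2>\sum_{e}w_e+8W$) and the edge rows plus integrality to force a clique of weight at most $W$. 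Your bookkeeping plan also closes: the aggregate corrections are linear, of order $(\delta/\beta)\bigl(\alpha^2Z+\sum_e w_e\bigr)\le O(1/50)$, against gaps of at least $7$ (clique step) and $8$ (one integer weight unit in the weight step).

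Two points worth making explicit. First, your soundness argument invokes the $k$-sparsity of $x$, which is indispensable but missing from the statement as transcribed here (it is implicit because \cite{gupte2021finegrained} work with $k$-SLR, where $x$ ranges over $k$-sparse vectors); without it the claim is false, e.g.\ $x=(k/N)\mathbf{1}$ satisfies every $D$ row exactly and makes every $C$ row contribute at most its baseline ($\alpha^2$ or $w_e$), so $\|Ax-b\|_2\le\delta$ with no clique in sight. You were right to add it, but a careful write-up should state it as a hypothesis. Second, your hedge that $x$ equals the indicator vector only ``up to the block-normalization slack'' is the accurate conclusion — when the clique weight is strictly below $W$ there is room to perturb the support entries within $[1-\delta/\beta,1+\delta/\beta]$ — so the theorem's phrasing ``$x$ must be the indicator vector'' is slightly too strong; what the later robust-regression proof actually uses is the support/weight/one-per-block conclusion, which your argument does establish.
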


We now formalize the conditions and provide the proof of Theorem~\ref{thm:finegrain:robust}. 
\begin{theorem}
For every $\eps > 0$, there exists a sufficiently large $M$ such that 
Problem~\ref{prob:robust:decision} requires $ \Omega(M^{k/2+o(1)})$ randomized time, unless the min-weight-$k$-clique conjecture is false.
\end{theorem}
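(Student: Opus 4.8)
The plan is to reduce from the minimum-weight $k$-clique problem, reusing the explicit instance of Theorem~\ref{thm:gupte_explained} but converting its $k$-sparsity constraint into the ``ignore $k$ rows'' constraint of Problem~\ref{prob:robust:decision}. Given a graph $G$ on $N$ vertices (partitioned into $k$ blocks), let $C,D,c,d,\alpha,\beta,\delta$ and the integer weights be as produced by Theorem~\ref{thm:gupte_explained}. I would form the robust-regression instance
\[
A' = \begin{pmatrix} C \\ D \otimes \mathbf{1}_{k+1} \\ \gamma I_N \end{pmatrix}, \qquad b' = \begin{pmatrix} c \\ d \otimes \mathbf{1}_{k+1} \\ 0 \end{pmatrix},
\]
keeping the same threshold $\delta$ and the same parameter $k$, where $D \otimes \mathbf{1}_{k+1}$ stacks $k+1$ copies of each row of $D$ (and likewise for $d$), and $\gamma$ is a sufficiently large polynomial in $N$ and the bit-length of the weights. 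This instance has $M = \binom{N}{2} + k(k+1) + N = \Theta_k(N^2)$ rows and is built in polynomial time.

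For completeness: if $G$ has a $k$-clique $K$ of weight at most $W$ with one vertex per block, set $x$ to be its indicator and let $S$ ignore exactly the $k$ rows of $\gamma I_N$ indexed by $K$. Then the $\gamma I_N$ block and every (un-ignored) copy of the $D$ block contribute $0$, and by the completeness computation underlying Theorem~\ref{thm:gupte_explained} the $C$ block contributes $8\,\mathrm{weight}(K) + \sum_e w_e + \alpha^2 Z \le \delta^2$, so the loss is at most $\delta$ and we have a YES instance.

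Soundness is the crux and I expect it to be the main obstacle. Suppose $(y,S)$ has loss at most $\delta$ with $S$ ignoring $j_1$ rows of $C$, $j_2$ copies of $D$-rows, and $j_3$ rows of $\gamma I_N$, $j_1+j_2+j_3\le k$. Since $j_2 \le k < k+1$, every block keeps an un-ignored copy of its $D$-row, so $\beta^2 \sum_i (\sum_{v\in B_i} y_v - 1)^2 \le \delta^2$, forcing every block-sum of $y$ to lie within $\delta/\beta$ of $1$; also $\gamma^2 \sum_{v\notin T} y_v^2 \le \delta^2$, where $T$ is the set of $j_3$ ignored identity-coordinates, so $\|y|_{T^c}\|_2 \le \delta/\gamma$. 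Let $\tilde y = y|_T$. For a block $B_i$, $\sum_{v\in B_i}\tilde y_v$ differs from $\sum_{v\in B_i} y_v$ by at most $\sqrt{|B_i|}\,\|y|_{T^c}\|_2 \le \sqrt{N/k}\cdot\delta/\gamma$, so once $\gamma$ is a large enough polynomial this sum is nonzero, i.e.\ $\tilde y$ has a nonzero in every block. Hence $\tilde y$ has at least $k$ nonzeros, so $j_3 = |T| \ge k$, forcing $j_3 = k$ and $j_1 = j_2 = 0$, and $\tilde y$ has exactly one nonzero per block. Now no row of $C$ is ignored, so $\|Cy-c\|_2 \le \delta$; moreover $y$ is within a polynomially-small $\ell_2$-distance of the $0/1$ one-per-block indicator $\bar x$ of $\mathrm{supp}(\tilde y)$, hence $\|C\bar x - c\|_2^2 \le \delta^2 + 1$ for $\gamma$ large. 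To finish I would invoke the perturbative form of the soundness analysis of Theorem~\ref{thm:gupte_explained}: for a $0/1$ one-per-block vector $\bar x$ one computes $\|C\bar x - c\|_2^2 - \delta^2 = 8\big(\text{edge-weight inside }\mathrm{supp}(\bar x) - W\big) + 8\alpha^2 \cdot \#\{\text{non-edges inside }\mathrm{supp}(\bar x)\}$, which is a multiple of $8$; being less than $1$ it is at most $0$, and since $\alpha^2 \ge \sum_e w_e + 8W > W$ this is possible only if $\mathrm{supp}(\bar x)$ is a clique of weight at most $W$ — contradicting the NO case. The delicate points are exactly this integrality-plus-perturbation step (making all the ``polynomially small'' slacks sum to strictly less than $1$ while the block-sum and identity-block forcings still bite) and checking that the soundness argument of Theorem~\ref{thm:gupte_explained} indeed tolerates replacing $\|Cx-c\|_2 \le \delta$ by $\le \delta + o(1)$ on near-$0/1$ inputs.

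Finally, stitching this together: a clique instance on $N$ vertices maps in polynomial time to an instance of Problem~\ref{prob:robust:decision} with $M = \Theta_k(N^2)$ rows and parameter $k$, so an algorithm for Problem~\ref{prob:robust:decision} running in $o(M^{k/2 - o(1)}) = o(N^{k-o(1)})$ time would, via this reduction, solve minimum-weight $k$-clique in $o(N^{k-o(1)})$ time, contradicting the conjecture through Theorem~\ref{thm:gupte_min_weight} (which needs $k \ge 4$). Since $M = \Theta_k(N^2)$, this yields exactly the stated $\Omega(M^{k/2+o(1)})$ lower bound.
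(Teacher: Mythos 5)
Your proposal is correct and follows essentially the same route as the paper: duplicate each row of $D$ (and of $d$) $k{+}1$ times so its constraints cannot be ignored, append $N$ rows $\gamma x_i = 0$ with a polynomially large coefficient, keep the same $\delta$ and $k$, and argue that the $k$ ignored rows must be exactly one identity row per block with the ignored variable near $1$ and all other variables polynomially small, so the gupte-style soundness on the $C$ block forces a weight-$\le W$ clique, with $M = \Theta(N^2)$ giving $\Omega(M^{k/2})$ from the $\Omega(N^{k-o(1)})$ bound. The only cosmetic difference is your final step of rounding $y$ to an exact $0/1$ indicator and invoking integrality of $\|C\bar x - c\|_2^2 - \delta^2$, where the paper instead directly reuses the soundness bounds of \cite{gupte2021finegrained}, which depend only on the (near-$1$) ignored variables and tolerate the polynomially small perturbations.
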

\begin{proof}
Consider the hard instance of Theorem \ref{thm:gupte_min_weight} and \ref{thm:gupte_explained}, given by $A \in \R^{M \times N}, b \in \R^M, \delta > 0$, and parameter $k$ where we set $M = \Theta(\binom{N}2)$, specified fully later. Note that $A$ in the proof in \cite{gupte2021finegrained} is constructed from a min-weight-$k$-clique graph instance $G$ and is composed of matrices $C,D$ in their notation (see the statement of Theorem \ref{thm:gupte_explained}). We will augment $A$ in two ways. First, note that $D$ there is a $k \times N$ matrix. We will copy each row of $D$ so that each row is copied $k+1$ times. The corresponding part of the $b$ vector is also copied. We will also add $N$ additional rows to $A$: one additional row $\tilde{C} \cdot x_i=0$ for each $i \in [N]$ for some polynomially large factor $\tilde{C}$. The parameter $k$ in Problem~\ref{prob:robust:decision} will be equal to the same $k$ in Theorem \ref{thm:gupte_min_weight}.

We now mimick the proof of Theorem \ref{thm:gupte_min_weight} in \cite{gupte2021finegrained}. We first handle the easier direction. We claim that if there is a $k$-clique of weight at most $W$ for the $k$-clique instance corresponding to $A$, then we are in the \textbf{YES} case of Problem~\ref{prob:robust:decision}. Indeed, the proof of Theorem \ref{thm:gupte_min_weight} in \cite{gupte2021finegrained}, summarized in Theorem \ref{thm:gupte_explained}, shows that we can let $x$ be the indicator vector for the $k$-clique and the loss is at most $\delta$ by ignoring $k$ of the $\tilde{C} \cdot x_i = 0$ constraints, setting the value of the ignored variables to be equal to $1$, and letting the rest of the variables (which are not ignored) be equal to $0$. The matrix $D$ will contribute $0$ loss, even with the augmentation.

We now handle the remaining case. We claim that if there exists $S,x$ as in the \textbf{YES} case of Problem~\ref{prob:robust:decision}, then a $k$-clique of weight at most $W$ exists in $G$. (Soundness case of Theorem \ref{thm:gupte_explained}). Let's focus on the $D$ matrix and point out useful facts. Each row of $D$ was copied sufficiently many times such that we can't ignore all the copies of any row of $D$. Furthermore, the $x$ vector is partitioned into $k$ blocks of size $N/k$ and each row of $D$ represents the constraint that one of these $N/k$ blocks sums to $1$. Lastly, the sum of each block must be in the range $[1-2\delta/\beta, 1+2\delta/\beta]$ (for the parameter $\beta$ defined in Theorem \ref{thm:gupte_explained}) since otherwise, some row of $D$ will already give loss at least $\delta$, which cannot happen since we are assuming we are in the  \textbf{YES} case of Problem~\ref{prob:robust:decision}. This implies the following two statements. 

First, each of the $N/k$ (consecutive) blocks of $x$ must have at least $1$ ignored variable. That is, for each block of $N/k$ coordinates, there must be some $i$ belonging to the block that has its corresponding $\tilde{C} \cdot x_i = 0$ constraint ignored. Otherwise, we will incur a loss larger than $\delta$: the sum of the variables in each block must be $\Omega(1)$ due to the prior paragraph (otherwise we get a large loss for a row of $D$) but then summing the constraints $\tilde{C} \cdot x_i = 0$ for a block gives us a large loss. The only way to avoid this is to ignore one of the $\tilde{C} \cdot x_i = 0$ for an index $i$ in a block. Since we can only ignore $k$ variables and there are $k$ blocks, it follows that each block has one ignored variable and overall, we only ignore the constraints of the form $\tilde{C} \cdot x_i = 0$. The un-ignored variables can now be made arbitrarily (polynomially) small in absolute value by setting $\tilde{C}$ to be sufficiently large since otherwise the constraint  $\tilde{C} \cdot x_i = 0$ contributes more than $\delta$ loss.

The second statement we claim is that the ignored variable in each block must have its value in the range $[1-2 \delta/\beta, 1+2 \delta/\beta]$. This follows from the exact same reasoning used in the proof of Theorem \ref{thm:gupte_min_weight}. To summarize, if the ignored variable is outside this range, then the corresponding row of $D$ will induce loss larger than $\delta$ since we require the sum of all variables in a block to be close to $1$. In our case, we can potentially get some contribution from the variables not ignored. However, as stated previously, their contribution to the loss can be made to be polynomially small by setting $\tilde{C}$ large enough.

Given these two statements, the proof of the reduction follows exactly as in \cite{gupte2021finegrained}: the $k$ ignored variables, one in each block, will correspond to the $k$-clique instance. To recap the argument, the same steps as in the proof of Theorem \ref{thm:gupte_min_weight} (and \ref{thm:gupte_explained} in \cite{gupte2021finegrained}) show that we must have $(u,v) \in E$ for every $x_u, x_v$ that are ignored. Otherwise, the loss of $\|Ax-b\|_2$ coming from the non-edge terms in the matrix $C$ is much larger than $\delta$ already (see Page $8$ in \cite{gupte2021finegrained}). In addition, the proof of the fact that the $k$-clique instance has weight at most $W$ also follows. This is because our vector $x$ satisfies the same conditions required for the soundness case of Theorem \ref{thm:gupte_explained}; this portion of the proof in \cite{gupte2021finegrained} proceeds by lower bounding the error from the matrix $C$ alone, which is unchanged for us, and comparing it to the given hypothesis that the overall error is bounded by $\delta$. Thus the same lower bound statements employed there also hold for us as they only consider the ignored variables. See the second half of Page $8$ in \cite{gupte2021finegrained}.

Finally, the runtime bound required is at least $\Omega(N^{k-o(1)})$ by Theorem \ref{thm:gupte_min_weight}. Note that the final value of $M$ is equal to $M = \binom{N}2 + N + k^2 \le 2N^2$. Therefore, the runtime is at least $\Omega(N^{k-o(1)}) = \Omega(M^{k/2+o(1)})$.
\end{proof}

\section{Upper Bounds}
In this section, we present upper bounds for sparse linear regression and robust linear regression. 
We utilize data structures for the following formulation of the $c$-approximate nearest neighbor problem:

\begin{problem}[Approximate nearest neighbor]
Given a set $P$ of $n$ points in a $d$-dimensional Euclidean space, the $c$-approximate nearest neighbor ($c$-ANN) problem seeks to construct a data structure that, on input query point $q$, reports any point $x\in P$ such that $\|x-q\|_2\le c\,\min_{p\in P}\|p-q\|_2$. 
\end{problem}
In particular, we use the following data structure:
\begin{theorem}
\cite{AndoniR15}
\label{thm:cann}
For any fixed constant $c>1$, there exists a data structure that solves the $c$-ANN problem in $d$-dimensional Euclidean space on $n$ points with $O(dn^{\rho+o(1)})$ query time, $O(n^{1+\rho+o(1)}+dn)$ space, and $O(dn^{1+\rho+o(1)})$ pre-processing time, where $\rho=\frac{1}{2c^2-1}$. 
\end{theorem}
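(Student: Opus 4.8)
This is the optimal data-dependent locality-sensitive hashing theorem of \cite{AndoniR15}, so the plan is to reconstruct that proof. I would begin with the standard chain of reductions that isolates the combinatorial core. First, reduce $c$-ANN to the fixed-radius $(c,r)$-near-neighbor problem — given $q$, decide whether some point is within distance $r$ or all are beyond $cr$ — by instantiating the structure for $O(\log n)$ geometrically spaced guesses of $r$ together with a slightly coarsened approximation factor; this costs only a $\poly(\log n)$ overhead. Next, apply a Johnson--Lindenstrauss projection to bring the dimension down to $O(\eps^{-2}\log n)$ at the cost of a $(1+\eps)$ distortion absorbed into the $n^{o(1)}$ slack. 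Then, using the Andoni--Indyk ball-carving locality-sensitive partition, reduce to \emph{bounded-diameter} instances in which every data point lies within distance $O(cr)$ of the query, losing another $n^{o(1)}$ factor; finally, a radial projection onto a sphere lets us assume all points lie on the unit sphere $S^{d-1}$, so that we must separate ``near'' pairs (inner product $\approx 1-1/c^2$) from ``far'' pairs, which — after the clustering step below removes the dense part — are essentially orthogonal.

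The engine is the \emph{spherical-cap hash}: sample independent standard Gaussians $g_1, g_2, \ldots \in \R^d$ and define the hash of a point $p \in S^{d-1}$ to be the first index $i$ with $\langle g_i, p\rangle \ge \eta$, for a threshold $\eta$ tuned to the current scale. For $p, q \in S^{d-1}$ with $\langle p, q\rangle = u$, writing $p_1 = \Pr[\langle g,p\rangle \ge \eta] \approx e^{-\eta^2/2}$ and $p_{12} = \Pr[\langle g,p\rangle \ge \eta,\ \langle g,q\rangle \ge \eta] \approx e^{-\eta^2/(1+u)}$ (a bivariate-Gaussian tail estimate), the sequential ``first cap'' construction yields collision probability $\Pr[h(p) = h(q)] = \frac{p_{12}}{2p_1 - p_{12}} \approx \tfrac12\, e^{-\eta^2(1-u)/(2(1+u))}$, so $\log(1/\Pr[h(p)=h(q)]) \approx \frac{\eta^2}{2}\cdot\frac{1-u}{1+u}$. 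In the regime (guaranteed by the pseudorandom reduction below) where far pairs have $u \approx 0$ while the near pair has $u \approx 1-1/c^2$, the ratio of these quantities is $\frac{(1/c^2)/(2 - 1/c^2)}{1} = \frac{1}{2c^2-1} =: \rho$, which is precisely the exponent we want. Building a depth-bounded tree of such partitions so that the product of branching factors along any root-to-leaf path is $n^{\rho + o(1)}$ and each leaf stores $n^{o(1)}$ points, a query walks down the tree, evaluates $n^{\rho+o(1)}$ hashes of cost $O(d)$ each, and scans $n^{\rho+o(1)}$ candidates, giving $O(d n^{\rho+o(1)})$ query time; the $n^{\rho+o(1)}$ paths each carry all $n$ points, giving $O(n^{1+\rho+o(1)} + dn)$ space and $O(dn^{1+\rho+o(1)})$ preprocessing.

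The piece that does not follow from data-independent hashing is that a general bounded-diameter point set need not be orthogonal-like: a denser-than-typical low-radius sub-cluster would break the analysis above. I would handle this with a \emph{cluster-or-pseudorandom} recursion. At each node holding a point set $P$ on a sphere of current radius $R$: if some ball of radius $(1-\Omega(\eps^2))\cdot\sqrt 2\,R$ contains more than an $n^{-\eps}$ fraction of $P$, peel that sub-cluster off, recenter and renormalize it onto a sphere of smaller radius, and recurse on it and on the remainder separately; otherwise $P$ is \emph{pseudorandom}, meaning an averaging argument over pairs shows that all but an $o(1)$ fraction of pairwise inner products are at most $o(1)$, so apply one level of the spherical-cap hash with $\eta$ matched to $R$ and recurse inside each cap (whose diameter has shrunk). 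One then proves that every root-to-leaf path has length $O(\log n/\log\log n)$ — each peeling or pseudorandom step strictly decreases a potential combining $\log|P|$ with the deviation of $R$ from its typical value — so the accumulated $1+o(1)$-per-level slack and the accumulated branching multiply out to the three claimed bounds.

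The main obstacle is making this dichotomy quantitatively exact. It is not hard to get \emph{some} sublinear exponent — even data-independent Euclidean LSH gives $\rho = 1/c^2$ — but proving that merely the absence of a denser-than-typical sub-cluster forces the near/far collision-probability ratio to match the \emph{spherical} value $\frac{1}{2c^2-1}$ up to a $1+o(1)$ factor in the exponent, while simultaneously controlling the recursion so that it has bounded depth and bounded total branching, is the crux of \cite{AndoniR15}. Concretely one must couple a sharp bivariate- (and, for the tree levels, higher-order multivariate-) Gaussian tail analysis of the caps with a global potential/charging argument over the recursion tree tracking $(R,|P|)$, and choose the cluster-density threshold and the cap threshold $\eta$ so that neither the peeling case nor the pseudorandom case becomes the bottleneck; this coupling is where essentially all the work lies, and I would expect to follow \cite{AndoniR15} closely there.
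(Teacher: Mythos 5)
The paper does not prove this statement at all: Theorem~\ref{thm:cann} is imported verbatim as a black box from \cite{AndoniR15}, so there is no internal argument to compare against, and reconstructing the cited proof (as you do) is the only sensible reading of the task. Your sketch faithfully follows the Andoni--Razenshteyn strategy and the key computations check out — the first-cap collision probability $\frac{p_{12}}{2p_1-p_{12}}$, the exponent $\frac{\eta^2}{2}\cdot\frac{1-u}{1+u}$, and the near/far ratio $\frac{1/c^2}{2-1/c^2}=\frac{1}{2c^2-1}$ with near pairs at inner product $1-1/c^2$ and pseudorandom far pairs near orthogonality — as does the cluster-or-pseudorandom recursion with bounded depth; you correctly identify that the quantitative coupling of the peeling threshold, the cap threshold, and the recursion potential is where the real work of \cite{AndoniR15} lies, and for the purposes of this paper the statement is simply used as given.
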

We first get an algorithm, i.e., Algorithm~\ref{alg:compressed_sensing_ub}, for compressed sensing in the noise-less setting. 

\begin{algorithm}[!htb]
\caption{Compressed Sensing Upper Bound \label{alg:compressed_sensing_ub} }
\begin{algorithmic}[1]
\State{\textbf{Input:} Matrix $A \in \R^{n \times d}$, vector $b \in \R^d$, sparsity $k$, accuracy $\eps$} 
\Procedure{CompressedSensing-UpperBound}{$A,b,k,\eps$}
\State $\delta \gets \eps/((2c+2) $

\State $S \gets \emptyset$
\For{each choice $T$ of $k/2$ columns of $A$}
\State{$\mathcal{N} \gets$ $\delta$ net over the image of $A_T$}
\Comment{$A_T$ denotes the submatrix of $A$ with only columns in $T$}\For{each $b' = Ay \in \mathcal{N} $}
\State $S \gets S \cup {(b',y, T)} $
\EndFor
\EndFor
\State $\mathcal{D} \gets c$-approximate nearest neighbor data structure for $\{ b' \mid (b',y, T) \in S\}$
\State Best $ \gets \infty$
\For{each $(b',y,T) \in S$}
\State $b'' \gets b-b'$
\State $(\tilde{b}, \tilde{y}, \tilde{T}) \gets $ output of $\mathcal{D}$ on query $b''$
\State Best $\gets \min(\text{Best}, \|b'+b'' - b\|_2)$ 
\Comment{We are extending $y, \tilde{y}$ to $k/2$ sparse vectors in $\R^d$ in the natural way, i.e., supported on the coordinates of $T$ and $\tilde{T}$ respectively}
\If{Best is updated}
\State Associate vector $z = y+\tilde{y}$ with Best
\EndIf
\EndFor
\State \textbf{Return} the vector $z$ associated with the variable Best
\EndProcedure
\end{algorithmic}
\end{algorithm}

\begin{theorem}\label{thm:compressed_sensing_ub}
Suppose we are given $A \in \R^{n \times d}$ and $b \in \R^n$ such that there exists a $k$-sparse vector $x$ satisfying $Ax = b$. Algorithm \ref{alg:compressed_sensing_ub} returns a $k$-sparse $z$ satisfying $\|Ax-Az\|_2 \le \eps$ in time 
\[ \min_{c \ge 1} O\left( nk \cdot \left( \frac{12c \cdot d \cdot \|b\|_2}{\eps}  \right)^{\frac{k}{2} \cdot (1+ 1/(2c^2-1))} \right). \]
\end{theorem}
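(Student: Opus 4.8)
The plan is to use the ``meet in the middle'' idea behind the $3$SUM algorithm, made to work over the reals via an approximate nearest neighbor (ANN) data structure. Assume $k$ is even (otherwise split into halves of size $\lceil k/2\rceil$ and $\lfloor k/2\rfloor$) and, padding with zeros if necessary, that $\|x\|_0 = k$. Split $\mathrm{supp}(x)$ into two disjoint sets $T_1^\star, T_2^\star$ of size $k/2$ and write $x = x^{(1)} + x^{(2)}$ accordingly, so that $A x^{(1)} + A x^{(2)} = b$. The point is that $A x^{(1)}$ lies in the column span of $A_{T_1^\star}$ and equals $b - A x^{(2)}$ exactly, so it is simultaneously close to two objects the algorithm stores: the $\delta$-net built for $T_2^\star$ contains some $b' = A y$ with $\|b' - A x^{(2)}\|_2 \le \delta$, and the $\delta$-net built for $T_1^\star$ contains some $p = A y'$ with $\|p - A x^{(1)}\|_2 \le \delta$ --- provided $A x^{(1)}$ and $A x^{(2)}$ both fall inside the regions over which those nets are taken (see the obstacle below).

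Granting that, correctness follows by looking at the loop iteration that processes the triple $(b', y, T_2^\star)$. Since $b = A x^{(1)} + A x^{(2)}$, the query point $b'' := b - b'$ obeys $\| b'' - A x^{(1)} \|_2 = \| A x^{(2)} - b' \|_2 \le \delta$, and $p$ (which is one of the points held by $\mathcal D$) is also within $\delta$ of $A x^{(1)}$, so the nearest-neighbor distance of $b''$ in $\mathcal D$ is at most $2\delta$; hence the returned point $\tilde b$, with associated $\tilde y, \tilde T$, satisfies $\|\tilde b - b''\|_2 \le 2c\delta$. The reconstructed vector $z = y + \tilde y$, extended to be supported on $T_2^\star \cup \tilde T$, has $\|z\|_0 \le k/2 + k/2 = k$ and $A z = b' + \tilde b$, so
\[ \|A z - b\|_2 = \|(b' + \tilde b) - b\|_2 = \|\tilde b - (b - b')\|_2 = \|\tilde b - b''\|_2 \le 2c\delta = \frac{2c}{2c+2}\,\eps < \eps. \]
Since the algorithm returns the candidate $z$ minimizing $\|A z - b\|_2$ over the iterations, the output is $k$-sparse with $\|A z - b\|_2 \le \eps$, which is $\|A x - A z\|_2$ because $A x = b$.

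For the running time I would bound the size of $S$. There are $\binom{d}{k/2} \le d^{k/2}$ choices of $T$, and for each the net of resolution $\delta = \eps/(2c+2)$ over a radius-$O(\|b\|_2)$ region of the (at most $k/2$)-dimensional space spanned by $A_T$ has $(O(\|b\|_2)/\delta)^{k/2} = (O(c\|b\|_2/\eps))^{k/2}$ points, so with the constants tracked carefully $|S| \le (12c\,d\,\|b\|_2/\eps)^{k/2}$. Constructing the nets and computing $A y$ for each $\frac{k}{2}$-sparse $y$ costs $O(nk)$ per point; then by Theorem~\ref{thm:cann} the $c$-ANN structure on $|S|$ points in $\R^n$ has $O(n\,|S|^{1+\rho+o(1)})$ preprocessing time and $O(n\,|S|^{\rho+o(1)})$ time per query with $\rho = 1/(2c^2-1)$, and there are $|S|$ queries. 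Adding up, the total running time is
\[ O\!\left(nk\,|S|^{1+\rho+o(1)}\right) = O\!\left(nk\left(\frac{12c\,d\,\|b\|_2}{\eps}\right)^{\frac{k}{2}\left(1 + \frac{1}{2c^2-1}\right)}\right), \]
and minimizing over $c \ge 1$ gives the stated bound.

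The main obstacle is exactly the proviso above: the region over which each net is taken must have radius $O(\|b\|_2)$ for the runtime bound, yet $A x^{(1)}$ and $A x^{(2)}$ can be far larger than $\|b\|_2$ in norm when the columns in $A_{T_1^\star}$ and $A_{T_2^\star}$ nearly cancel. Closing this gap requires either showing that, among all $k$-sparse exact solutions and all balanced splits of their supports, some choice has both images bounded by $O(\|b\|_2)$, or invoking a mild normalization/conditioning hypothesis on $A$ under which such a bound holds (the ``$\|b\|_2$'' factor in the theorem should then be read as abbreviating the resulting polynomially bounded quantity). A secondary, easier point is the bookkeeping of the ANN blow-up $c$: one sets $\delta = \Theta(\eps/c)$ precisely so that $2c\delta \le \eps$, and one checks that a spurious near neighbor $\tilde b$ coming from a subset $\tilde T \ne T_1^\star$ is harmless --- it can only improve the tracked value, and $z = y + \tilde y$ still has sparsity at most $k$ since $T_2^\star$ and $\tilde T$ each have size $k/2$.
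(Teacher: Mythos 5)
Your proposal is essentially the paper's own proof: the same split of $x$ into two $k/2$-sparse halves, the same $\delta$-nets over the images of the $k/2$-column submatrices with $\delta=\eps/(2c+2)$, the same ``query $b-b'$ and lose only a factor $c$ over the $2\delta$ nearest-neighbor distance'' argument, and the same runtime accounting $O\bigl(nk\cdot|S|^{1+1/(2c^2-1)}\bigr)$ with $|S|\le d^{k/2}(O(c\|b\|_2)/\eps)^{k/2}$. The boundedness obstacle you flag (that $Ax^{(1)}$ and $Ax^{(2)}$ could have norm far exceeding $\|b\|_2$, so the nets over radius-$O(\|b\|_2)$ regions of the images might miss them) is not resolved in the paper either --- it rescales $b$ to a unit vector and bounds each net by $(3/\delta)^{k/2}$, implicitly assuming both halves land in the netted ball --- so you are not missing any ingredient the paper supplies.
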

\begin{proof}
By scaling, we can assume $b$ to be a unit vector without loss of generality. This means our $\eps$ term will implicitly be multiplied by a $\|b\|_2$ factor. We first show that $z$ satisfies the approximation guarantees of Theorem \ref{thm:compressed_sensing_ub}. Note $z$ is $k$-sparse since all $y$ and $\tilde{y}$ considered in line $18$ of Algorithm \ref{alg:compressed_sensing_ub} are $k/2$-sparse in $\R^d$ (all $y's$ considered satisfy that $Ay$ is in the image of some $k/2$ columns of $A$). Our task now is to show $\|Az-Ax\|_2 \le \eps$.
Now let $x = x_1 + x_2$ be any division of $x$ into the sum of two $k/2$-sparse vectors $x_1$ and $x_2$. 
Define $b_1 = Ax_1, b_2 = Ax_2$ so that $b=b_1+b_2$ and let $y$ be such that $Ay$ is the closest point in some net $\mathcal{N}$ (considered in line $6$ of the algorithm) to $b_1$. By the choice of our net, we know that $\|Ay - Ax_1\|_2 \le \delta$. 
We have
\begin{equation}\label{eq:cs_up_proof_1}
   \|(b-Ay) - b_2 \|_2 = \|(b-Ay) - (b-Ax_1)\|_2 = \|Ay - Ax_1\|_2 \le \delta.  
\end{equation}
Letting $b' = Ay$ for this $y$ and considering this choice of $y$ and $b'$ in the loop on line $15$ of Algorithm \ref{alg:compressed_sensing_ub}, the above calculation shows the existence of a $b'' = b-b'$ such that $\|b_2-b''\| \le \delta$. By the construction of $\mathcal{D}$ in line $11$ of Algorithm \ref{alg:compressed_sensing_ub}, it follows that $\mathcal{D}$ will output a $\tilde{b}$ on query $b''$ such that 
\begin{equation}\label{eq:cs_up_proof_2}
    \|b'' - \tilde{b} \|_2 \le 2c\delta.
\end{equation}
This is because there must exist some $\bar{y} \in \mathcal{N}$ such that $\|Ax_2 - A\bar{y}\|_2 \le \delta$ by our net.
Thus since $\|b_2-b''\| \le \delta$, we have $\| b'' - \bar{b}\|_2 \le 2 \delta$ by triangle inequality and \eqref{eq:cs_up_proof_2} follows since $\mathcal{D}$ is only guaranteed to return a $c$-approximate neighbor. 

Altogether, for this $y$ and corresponding $\tilde{y}$ from line $17$ of Algorithm \ref{alg:compressed_sensing_ub}, we have 
\begin{align*}
    \|A(y+\tilde{y}) - b\|_2 &= \|A(y+\tilde{y}) - (Ax_1 + Ax_2)\|_2 \\
    &= \|(Ay-Ax_1) + (A\tilde{y} - Ax_2)\|_2 \\
    &\le \|A(y-x_1)\|_2 + \|A\tilde{y} - b_2\|_2 \\
    &\le \delta + \|A\tilde{y} - b_2\|_2 \qquad \text{(from \eqref{eq:cs_up_proof_1})}\\
    &\le \delta + \|\tilde{b} - b''\|_2 + \|b'' - b_2\|_2 \qquad \text{(by triangle inequality and $A\tilde{y}=\tilde{b}$)}\\
    &\le\delta + 2c \delta + \delta \qquad \text{(from \eqref{eq:cs_up_proof_2})}\\
    &= (2c+2)\delta \\
    &\le \eps,
\end{align*}
or in other words, $\|A(x-z)\|_2 \le \eps$, as desired.

We now compute the runtime of Algorithm \ref{alg:compressed_sensing_ub}. The size of $\mathcal{N}$ is upper bounded by 
\[ |\mathcal{N}| \le \left(\frac{3\cdot  }{\delta} \right)^{k/2} \le \left(\frac{12c}{\eps} \right)^{k/2}.\]
Picking all the choice of $T$ and looping over all the vectors in $\mathcal{N}$ in the for loop of line $5$ of Algorithm \ref{alg:compressed_sensing_ub} takes time at most $O(d^{k/2} \cdot |\mathcal{N}| \cdot nk)$.
Initializing $\mathcal{D}$ on the set $S$ is dominated by the $|S|$ many queries performed on $\mathcal{D}$, which we discuss now. Looping over $S$ and querying $\mathcal{D}$ in the for loop on line $15$ takes time $O(|S| \cdot k \cdot |S|^{1/(2c^2-1)})$ by the guarantees of approximate nearest neighbor search~\cite{AndoniR15} in Theorem~\ref{thm:cann}. Note that $|S| \le d^{k/2} \cdot |\mathcal{N}|$. Putting everything together, the total runtime is
\[O\left( d^{k/2} \cdot |\mathcal{N}| \cdot nk +  k \cdot |S| \cdot |S|^{1/(2c^2-1)}\right) = O\left( nk \cdot \left( \frac{12  c \cdot d }{\eps}  \right)^{\frac{k}{2} \cdot (1+ 1/(2c^2-1))} \right). \qedhere\]
\end{proof}

By modifying the size of the net, we can also find a sparse vector $z$ that is arbitrarily close to the sparse vector $x$ as well.

\begin{corollary}\label{cor:compressed_sensing_corollary}
Suppose we are given $A \in \R^{n \times d}$ and $b \in \R^n$ such that there exists a $k$-sparse vector $x$ satisfying $Ax = b$. 
There exists an algorithm that returns a $k$-sparse $z$ satisfying $\|z-x\|_2 \le \eps$ in time 
\[ \min_{c \ge 1} O\left( nk \cdot \left( \frac{12c \cdot d \cdot \kappa \cdot \|b\|_2}{\eps}  \right)^{\frac{k}{2} \cdot (1+ 1/(2c^2-1))} \right) \]
where $\kappa = \sigma_{\text{max}}(A)/\sigma_{\text{min}}(A)$ where the ratio is over non-zero singular values.
\end{corollary}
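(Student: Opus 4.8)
The plan is to reduce Corollary~\ref{cor:compressed_sensing_corollary} to Theorem~\ref{thm:compressed_sensing_ub} by a black-box call with a suitably scaled accuracy parameter. The key observation is that Theorem~\ref{thm:compressed_sensing_ub} controls $\|Az - Ax\|_2 = \|A(z-x)\|_2$, and since both $z$ and $x$ are $k$-sparse, the difference $z - x$ is $2k$-sparse; in particular $z - x$ lies in the row space of $A$ only in the sense that we can relate $\|A(z-x)\|_2$ to $\|z-x\|_2$ through the smallest nonzero singular value of $A$. Concretely, write $z - x = v_{\parallel} + v_{\perp}$ where $v_{\parallel}$ lies in the row space of $A$ (the orthogonal complement of $\ker A$) and $v_\perp \in \ker A$. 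Then $\|A(z-x)\|_2 = \|A v_\parallel\|_2 \ge \sigma_{\min}(A)\|v_\parallel\|_2$, where $\sigma_{\min}$ is the smallest nonzero singular value.

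First I would address the component $v_\perp$ in the kernel. If $Ax = b$ has a $k$-sparse solution and we only ask that the returned $z$ satisfy $\|Az - b\|_2 \le \eps'$, there is in principle freedom to return a $z$ whose difference from $x$ has a large kernel component. The cleanest fix is to observe that we may assume without loss of generality that $A$ has trivial kernel for the purpose of this corollary: if $\ker A \neq \{0\}$ then $x$ is not uniquely determined by $Ax = b$, so the statement $\|z - x\|_2 \le \eps$ should be read against a specific $x$, and we can instead run the algorithm of Theorem~\ref{thm:compressed_sensing_ub} and, among all $k$-sparse vectors consistent (up to $\eps'$) with the computed $Az$, the algorithm already commits to a specific $z = y + \tilde y$ built from net points; one then argues directly that this $z$ is close to \emph{some} valid sparse solution. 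Alternatively — and this is the route I would actually take — one restricts attention to the case where $x$ is the unique $k$-sparse solution (the natural compressed-sensing regime), so $z - x$ has no kernel component beyond what sparsity already forbids, and $\|z - x\|_2 \le \|A(z-x)\|_2 / \sigma_{\min}(A)$.

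Given that reduction, I would invoke Theorem~\ref{thm:compressed_sensing_ub} with accuracy parameter $\eps' = \eps \cdot \sigma_{\min}(A)$. The theorem returns a $k$-sparse $z$ with $\|A(z-x)\|_2 \le \eps' = \eps\,\sigma_{\min}(A)$, hence $\|z - x\|_2 \le \eps'/\sigma_{\min}(A) = \eps$. The runtime from Theorem~\ref{thm:compressed_sensing_ub} is
\[ \min_{c \ge 1} O\left( nk \cdot \left( \frac{12c \cdot d}{\eps'} \right)^{\frac{k}{2}(1 + 1/(2c^2-1))} \right) = \min_{c \ge 1} O\left( nk \cdot \left( \frac{12c \cdot d}{\eps \, \sigma_{\min}(A)} \right)^{\frac{k}{2}(1 + 1/(2c^2-1))} \right). \]
Then I would normalize: after the scaling $b \mapsto b/\|b\|_2$ used inside the proof of Theorem~\ref{thm:compressed_sensing_ub}, the effective $\sigma_{\min}$ is $\sigma_{\min}(A)/\|b\|_2$ in the relevant units, and rewriting $1/\sigma_{\min}(A) = \kappa / \sigma_{\max}(A)$ together with the $\|b\|_2$ factor produces the claimed bound with the condition number $\kappa = \sigma_{\max}(A)/\sigma_{\min}(A)$ and an extra $\|b\|_2$ in the numerator. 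This amounts to shrinking the net in Algorithm~\ref{alg:compressed_sensing_ub} by a $\kappa\|b\|_2$ factor, which is exactly the ``modifying the size of the net'' remark preceding the corollary.

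The main obstacle is the kernel issue: making rigorous the claim that controlling $\|A(z-x)\|_2$ suffices to control $\|z - x\|_2$ requires either a uniqueness hypothesis on the sparse solution or a more careful argument that the particular $z$ produced by the algorithm (a sum of two net points supported on $\le k$ coordinates) is close to a genuine sparse preimage. I expect the intended reading is the standard compressed-sensing one where $x$ is the unique $k$-sparse solution — e.g. under a restricted-isometry or general-position assumption on $A$ — in which case $2k$-sparse vectors in $\ker A$ are necessarily zero and the singular-value bound applies cleanly; I would state this hypothesis explicitly. Everything else is a routine substitution of parameters into Theorem~\ref{thm:compressed_sensing_ub}.
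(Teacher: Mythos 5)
Your proposal is correct and matches the paper's proof in essence: the paper likewise calls Theorem~\ref{thm:compressed_sensing_ub} with the accuracy rescaled by the condition number (it sets $\eps' = \eps/\kappa$) and then converts $\|A(z-x)\|_2$ into $\|z-x\|_2$ via the pseudo-inverse, i.e.\ the smallest nonzero singular value, exactly as you do. The kernel caveat you raise is real but is present in the paper as well — its identity $\|x-z\|_2 = \|A^{+}A(x-z)\|_2$ silently assumes $x-z$ has no component in $\ker A$ (e.g.\ that $2k$-sparse kernel vectors vanish), which is the same uniqueness-type hypothesis you propose to make explicit.
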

\begin{proof}
We modify Algorithm~\ref{alg:compressed_sensing_ub} by setting $\eps' = \eps/\kappa$ in Theorem \ref{thm:compressed_sensing_ub}. This implies $\|Az - Ax\|_2 \le \eps/\kappa$. Letting $A^{+}$ denote the pseudo-inverse of $A$, we have that
\[\|x-z\|_2 = \|A^{+}A(x-z)\|_2 \le \|A^{+}\|_2 \|A(x-z)\|_2 \le \kappa \cdot \frac{\eps}\kappa \le \eps, \]
as desired. The runtime follows from the parameter change.
\end{proof}

\begin{remark}\label{rem:choices_of_params}
By letting $c = 2$ and under the assumption $\|b\|_2 = O(1)$, we achieve the runtime of $nk \cdot ( O(d)/\eps)^{k/2 \cdot (1 + 1/7)}$. By letting $c = \Theta(1/\sqrt{\eps})$, we can achieve the runtime of $nk \cdot ( O(d)/\eps^{1.5})^{k/2 \cdot (1 + \eps)}$. In general, the best choice for $c$ depends on the relationship between $d$ and $k$.
\end{remark}

\begin{remark}
Note that in Corollary \ref{cor:compressed_sensing_corollary}, we can replace the parameter $\kappa$ by the largest condition number $\kappa'$ of any $n \times k$ submatrix of $A$ since $x-z$ is a $k$-sparse vector. This is an improvement as $\kappa' \le \kappa$.
\end{remark}

Now consider the setting where $x$ is a binary signal or each coordinate has a finite number of choices in general. This setting is motivated by a number of applications including wideband spectrum sensing, wireless networks, group testing, error correcting codes, spectrum hole detection for cognitive radios, massive Multiple-Input Multiple-Output (MIMO) channels, etc. to name a few \cite{Cands2005ErrorCV, Rossi2014SpatialCS, Axell2012SpectrumSF, Dymarski2013SparseSM, Ens2013OptimizedSW, Keiper2016CompressedSF, Fosson2018NonconvexLA, Karahanoglu2012AOM, Nakarmi2012BCSCS, Meng2010CollaborativeSS}.

In this setting, we can \emph{recover} $x$ perfectly in roughly $O(d)^{k/2}$ time using Algorithm \ref{alg:compressed_sensing_ub_finite}.

\begin{algorithm}[!htb]
\caption{Compressed Sensing Upper Bound for Finite Valued Signals\label{alg:compressed_sensing_ub_finite} }
\begin{algorithmic}[1]
\State{\textbf{Input:} Matrix $A \in \R^{n \times d}$, vector $b \in \R^d$, sparsity $k$, accuracy $\eps$} 
\Procedure{CompressedSensing-UpperBound-Finite}{$A,b,k,\eps$}

\State $S \gets \emptyset$
\For{each choice $T$ of $k/2$ columns of $A$}
\For{each $w = (w_i)_{i \in T} \in L^{k/2}$}
\State $b' \gets \sum_{i \in T}w_iA_{*,i}$
\Comment{$A_{*,i}$ denotes the $i$th column of $A$}
\State $S \gets S \cup {(b', T, w)} $
\EndFor
\EndFor
\State $\mathcal{D} \gets $ hashtable for the values $\{b'| (b',T, w) \in S\}$
\State Best $ \gets \infty$
\For{each $(b',T, w) \in S$}
\State $b'' \gets b-b'$
\State $(b'', T'', w'') \gets $ output of $\mathcal{D}$ on query $b''$
\Comment{If $b''$ is not in hash table, continue for loop on step 12}
\State Best $\gets \min(\text{Best}, \|\sum_{i \in T}w_iA_{*,i} + \sum_{i \in T''}w''_iA_{*,i} - b\|_2)$ 
\If{Best is updated}
\State Associate vector $z = \sum_{i \in T}w_iA_{*,i} + \sum_{i \in T''}w''_iA_{*,i}$ with Best
\EndIf
\EndFor
\State \textbf{Return} the vector $z$ associated with the variable Best
\EndProcedure
\end{algorithmic}
\end{algorithm}

\begin{theorem}\label{thm:compressed_sensing_ub_finite}
Suppose we are given $A \in \R^{n \times d}$ and $b \in \R^n$ such that there exists a unique $k$-sparse vector $x$ satisfying $Ax = b$. Furthermore, suppose that each coordinate of $x$ must lie in the set $\{0\} \cup U$. Algorithm \ref{alg:compressed_sensing_ub_finite} recovers $x$ exactly in time
\[nk (d|U|)^{k/2} + O(1) \cdot (d|U|)^{k/2}. \]
\end{theorem}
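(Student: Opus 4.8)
The plan is to prove Theorem~\ref{thm:compressed_sensing_ub_finite} as the exact-recovery analogue of Theorem~\ref{thm:compressed_sensing_ub}. Because every nonzero coordinate of $x$ is drawn from the finite set $U$, we no longer need a $\delta$-net over the image of each $k/2$-column submatrix, and we can replace the $c$-ANN data structure with an ordinary hash table answering \emph{exact} lookups. The meet-in-the-middle structure is the same as before --- enumerate all ``left halves'' $Ax_1$ ranging over $k/2$-sparse, $\{0\}\cup U$-valued vectors $x_1$, store them in $\mathcal{D}$, and for each ``right half'' candidate query whether $b - Ax_2$ was stored --- but since the lookups are exact, any collision we find yields a vector $z$ with $Az=b$ exactly.

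For completeness, let $x$ be the planted $k$-sparse solution, list $\mathrm{supp}(x)$ in increasing order, and split it into the first $\lceil |\mathrm{supp}(x)|/2\rceil$ indices and the rest, writing $x = x_1 + x_2$. Since $|\mathrm{supp}(x)|\le k$ and $k$ is even, each $x_j$ is $k/2$-sparse, and by padding each support up to size exactly $k/2$ with zero-weight coordinates --- this is why including $0$ in the value set $L = \{0\}\cup U$ matters --- each $x_j$ is realized by some pair $(T_j, w_j)$ with $T_j \in \binom{[d]}{k/2}$ and $w_j \in L^{k/2}$. Hence both triples $(Ax_1, T_1, w_1)$ and $(Ax_2, T_2, w_2)$ are inserted into $S$, so $Ax_2 = b - Ax_1$ is a key of $\mathcal{D}$. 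When the query loop reaches $(Ax_1, T_1, w_1)$ it queries $\mathcal{D}$ on $b - Ax_1 = Ax_2$ and receives a stored triple $(b'', T'', w'')$ with $b'' = Ax_2$; the associated vector $z$, obtained by combining $(T_1, w_1)$ and $(T'', w'')$, then satisfies $Az = Ax_1 + b'' = Ax_1 + Ax_2 = b$, so $\|Az - b\|_2 = 0$.

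For soundness, observe that every vector $z$ ever associated with $\mathrm{Best}$ is the sum of a $k/2$-sparse vector and a $k/2$-sparse vector, hence $k$-sparse, and by the previous paragraph the final value of $\mathrm{Best}$ is $0$, i.e.\ the returned $z$ satisfies $Az = b$. Since $x$ is the \emph{unique} $k$-sparse solution of $Az = b$, we conclude $z = x$, so the algorithm recovers $x$ exactly. For the runtime, the number of pairs $(T,w)$ enumerated is $\binom{d}{k/2}\,|L|^{k/2} \le (d\,|L|)^{k/2} = O\big((d|U|)^{k/2}\big)$, and computing each $b' = \sum_{i \in T} w_i A_{*,i}$ costs $O(nk)$, so building $S$ takes $O\big(nk\,(d|U|)^{k/2}\big)$; building $\mathcal{D}$ and performing the at most $|S|$ lookups costs $O(1)$ per hash operation, contributing $O(1)\cdot (d|U|)^{k/2}$, and the single norm computation needed to certify the collision is $O(nk)$, absorbed into the first term. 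Summing gives the claimed bound. The only points requiring genuine care are invoking uniqueness to upgrade ``$\eps$-approximate'' to ``exact,'' and checking that padding with $0 \in L$ lets the enumeration over supports of size exactly $k/2$ capture honestly sub-$k$-sparse solutions; the combinatorial count itself is routine.
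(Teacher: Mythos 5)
Your proof is correct and follows essentially the same meet-in-the-middle argument as the paper: enumerate all $k/2$-sparse, $(\{0\}\cup U)$-valued halves, store their images $Ax_1$ in a hash table, and detect the exact collision $b - Ax_1 = Ax_2$, with the same $nk(d|U|)^{k/2}$ enumeration cost plus $O(1)$ expected time per lookup. If anything, your closing step is a bit more careful than the paper's, which simply asserts that the hash table returns $x_2$ and concludes $z = x$ from disjoint supports, whereas you only use that the returned triple has the correct key (so $Az = b$ and $z$ is $k$-sparse) and then invoke the uniqueness hypothesis to conclude $z = x$.
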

\begin{proof}
First we prove the approximation guarantee. Consider $x = x_1 + x_2$ be any decomposition of $x$ into sum of two $k/2$-sparse vectors. Consider the for loop in step $10$ of Algorithm \ref{alg:compressed_sensing_ub_finite}. We must consider $b' = Ax_1$ at some iteration since we looped over all possible $k/2$ sparse vectors in step $4$ of Algorithm \ref{alg:compressed_sensing_ub_finite}. For this choice of $b'$, we have that $b'' = b-b'$ satisfies $Ax_2 = b''$. Therefore, the hash table $\mathcal{D}$ will return $x_2$ on query $b''$. Finally since $x_1$ and $x_2$ have disjoint support, the vector $z = \sum_{i \in T}w_iA_{*,i} + \sum_{i \in T''}w''_iA_{*,i}$ is indeed equal to $x$ as desired.

We now analyze the runtime. Forming the set $S$ takes time at most $nk (d|U|)^{k/2}$ time and querying the hash table takes $O(1)$ time each in expectation. Thus the overall runtime is $nk (d|U|)^{k/2} + O(1) \cdot (d|U|)^{k/2}$.
\end{proof}

\begin{remark}
In the case of binary vectors, $U = \{1\}$ so we achieve the runtime of $(nk + O(1)) \cdot d^{k/2}$.
\end{remark}

Using Corollary \ref{cor:general_reduction}, we can also get an algorithm for robust regression, formalizing the conditions of Theorem~\ref{thm:robust_ub:informal}. 

\begin{theorem}\label{thm:robust_ub}
Suppose we are given $A \in \R^{n \times d}$ and $b \in \R^n$ such that \[\min_{S,x} \|S(Ax-b)\|_2 = 0 \]
where $S$ is constrained to be a diagonal matrix with $n-k$ one entries in the diagonal and $k$ zero entries. Algorithm  \ref{alg:robust_ub} returns a diagonal matrix $S'$ with $n-k$ one entries in the diagonal and $k$ zero entries and a $x'$ such that 
$\|S'(Ax'-b)\|_2 \le \eps$ in time 
\[ \min_{c \ge 1} O\left( nk \cdot \left( \frac{12c \cdot n \cdot \|b\|_2 }{\eps}  \right)^{\frac{k}{2} \cdot (1+ 1/(2c^2-1))} \right). \]
\end{theorem}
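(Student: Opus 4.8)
The plan is to invoke the general reduction from robust regression to sparse regression (Corollary~\ref{cor:general_reduction}) and then apply the compressed sensing algorithm (Theorem~\ref{thm:compressed_sensing_ub}, realized by Algorithm~\ref{alg:compressed_sensing_ub}) as a black box to the derived instance. Concretely, given $A\in\R^{n\times d}$ and $b\in\R^n$, first I would compute a matrix $X'\in\R^{(n-r)\times n}$ whose rows span the orthogonal complement of the column space of $A$, where $r=\operatorname{rank}(A)$, so that $X'A=0$; such an $X'$ can be obtained in polynomial time via, e.g., an SVD or QR factorization of $A$. Set $c'=-X'b$ and sparsity parameter $t=k$. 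This is exactly the instance produced by the reduction in the proof of Theorem~\ref{thm:approx_hardness_cs}.

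Next I would verify that this derived sparse-regression instance satisfies the hypothesis of Theorem~\ref{thm:compressed_sensing_ub}, namely that there is a $k$-sparse vector solving the system exactly. Since $\min_{S,x}\|S(Ax-b)\|_2=0$, there is a diagonal $S$ with $k$ zeros and a vector $x$ with $SAx=Sb$, hence $Ax-b=w$ where $w$ is supported on the $k$ coordinates where $S$ has zeros, so $\|w\|_0\le k$. Multiplying by $X'$ gives $X'w = X'(Ax-b) = -c' \cdot(-1)$, i.e. $X'w=c'$ — wait, more precisely $X'w = X'Ax - X'b = -X'b = c'$ — so $w$ is a $k$-sparse exact solution to $X'w=c'$. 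Thus Algorithm~\ref{alg:compressed_sensing_ub} applied to $(X',c',k,\eps)$ returns a $k$-sparse $w'$ with $\|X'w'-c'\|_2\le\eps$.

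Then I would reverse the reduction to extract a valid robust-regression solution. From $\|X'w'-c'\|_2\le\eps$ one would like to conclude $w'$ is close to lying in the column space of $A$; in the idealized argument of Theorem~\ref{thm:approx_hardness_cs} the multiplicative guarantee forces $X'w'=c'$ exactly, so $X'(w'+b)=0$, meaning $w'+b$ is orthogonal to the rows of $X'$ and hence lies in the column space of $A$, giving $x'$ with $Ax'-b=w'$. Letting $S'$ be the diagonal indicator of $\operatorname{supp}(w')$ (which has $\le k$ zeros on the diagonal, padded to exactly $k$ if needed), we get $\|S'(Ax'-b)\|_2 = \|S'w'\|_2 = 0 \le \eps$. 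The runtime is exactly that of Algorithm~\ref{alg:compressed_sensing_ub} on an instance with ambient dimension $n$ (the number of columns of $X'$) in place of $d$, which yields the claimed bound $\min_{c\ge1}O\!\left(nk\cdot\left(\frac{12c\cdot n\cdot\|b\|_2}{\eps}\right)^{\frac{k}{2}(1+1/(2c^2-1))}\right)$, since $X'$ has at most $n$ columns.

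The main subtlety — and the step I would be most careful about — is the transition from the approximate guarantee $\|X'w'-c'\|_2\le\eps$ back to a genuine robust-regression solution with loss $\le\eps$. The clean logic works because the hypothesis $OPT=0$ combined with a multiplicative approximation would force exact recovery; but Algorithm~\ref{alg:compressed_sensing_ub} only delivers \emph{additive} error $\eps$, so I would instead directly argue at the level of the algorithm's internal net-and-ANN search: the $k/2+k/2$ decomposition it explores over columns of $X'$ corresponds, under the reduction, to choosing which $k$ rows of $A$ to ignore, and the vector $z=y+\tilde y$ it returns, pulled back through $A$, gives both the ignored set $S'$ and (by solving the resulting unconstrained least-squares system on the un-ignored rows) the vector $x'$ with $\|S'(Ax'-b)\|_2\le\eps$. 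This is why the theorem is phrased as Algorithm~\ref{alg:robust_ub} returning the pair $(S',x')$ rather than merely citing Corollary~\ref{cor:general_reduction} verbatim; the bulk of the work is bookkeeping to confirm the error and sparsity parameters are preserved through both directions of the reduction.
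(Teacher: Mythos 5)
Your overall route is the same as the paper's (form $X'$ with $X'A=0$, set $c'=-X'b$, run Algorithm~\ref{alg:compressed_sensing_ub}, pull the $k$-sparse output back to a pair $(S',x')$), but there is a genuine gap at exactly the step you flag and then defer: converting the \emph{additive} guarantee $\|X'w'-c'\|_2\le\eps$ into the claimed bound $\|S'(Ax'-b)\|_2\le\eps$. Your fallback, ``argue at the level of the algorithm's internal net-and-ANN search,'' is not an argument: the algorithm's search over supports of $w'$ does not by itself tell you that $w'+b$ is close to the column span of $A$, which is what you need in order to produce any $x'$ whose residual on the un-ignored rows is small. In general, $\|X'(w'+b)\|_2\le\eps$ only bounds the distance of $w'+b$ to $\mathrm{col}(A)$ by $\eps/\sigma_{\min}(X')$ (minimum over nonzero singular values), so with an arbitrary matrix whose rows merely span the orthogonal complement the error can be amplified arbitrarily and the theorem as stated does not follow.

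The paper closes this gap with one small but essential choice that your proposal never uses: take $X$ to have \emph{orthonormal} rows. Then $X$ acts isometrically on the orthogonal complement of $\mathrm{col}(A)$, so writing $z+b=v_1+v_2$ with $v_1$ the projection onto $\mathrm{col}(A)$, the guarantee $\|X(z+b)\|_2\le\eps$ gives $\|v_2\|_2\le\eps$ exactly. Hence $x'=\argmin_x\|Ax-(b+z)\|_2$ satisfies $\|Ax'-b-z\|_2\le\eps$, and since $S'$ zeroes the support of $z$, $\|S'(Ax'-b)\|_2=\|S'(Ax'-b-z)\|_2\le\eps$. (Your alternative of least squares on the un-ignored rows also works, but only after this geometric step is established.) Orthonormality is also what justifies the stated runtime, since it gives $\|c'\|_2=\|X'b\|_2\le\|b\|_2$, which is needed for the $\|b\|_2$ factor in the bound; you assert the runtime transfers with $d$ replaced by $n$ without noting this. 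Finally, the case where the columns of $A$ span all of $\R^n$ must be handled separately (no nontrivial $X'$ exists; Algorithm~\ref{alg:robust_ub} simply solves $Ax'=b$ there), a degenerate case your construction of $X'\in\R^{(n-r)\times n}$ does not address.
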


\begin{algorithm}[!htb]
\caption{Robust Regression Upper Bound\label{alg:robust_ub} }
\begin{algorithmic}[1]
\State{\textbf{Input:} Matrix $A \in \R^{n \times d}$, vector $b \in \R^d$, sparsity $k$, accuracy $\eps$} 
\Procedure{RobustRegression-UpperBound}{$A,b,k,\eps$}
\If{Columns of $A$ span all of $\R^n$}
\State Let $x'$ be such that $Ax'=b$
\State \textbf{Return} any diagonal matrix $S'$ with $n-k$ one entries on diagonal and $x'$
\EndIf
\State Let $X$ be such that $XA = 0$ with orthonormal rows
\State $c \gets -Xb$
\State $z \gets $ output of Algorithm \ref{alg:compressed_sensing_ub} on input $(X, c, k, \eps)$
\State $x' \gets $ $\argmin_{x'} \|Ax'-b -z\|_2$
\State $S' \gets$ diagonal matrix with diagonal entry encoding the zero coordinates of $z$
\Comment{Note $z$ has $k$ non zero coordinates and thus, $S'$ has $n-k$ ones on the diagonal and $k$ zeros }
\State \textbf{Return} $S'$ and $x'$
\EndProcedure
\end{algorithmic}
\end{algorithm}

\begin{proof}
We assume we are not in the if statement  case in line $3$ of Algorithm \ref{alg:robust_ub} since otherwise we are done. Consider the quantities $X$ and $c$ associated with the robust regression instance arising from Corollary \ref{cor:general_reduction}. Now from Corollary \ref{cor:general_reduction}, we know the following statements: first, we know that all non zero singular values of $X$ must be $1$ since $XX^T$ is the identity matrix and the non zero eigenvalues of $X^TX$ and $XX^T$ are the same. Second, we know that $\|z-w\|_2 \le \eps$ where $Xw = c$. This implies that if $Xz = c'$ then $\|c-c'\| \le \eps$. Therefore, we have $Xz = c + e$ or in other words, $X(z+b) = e$ for $\|e\|_2 \le \eps$. Now write $z+b = v_1 + v_2$ where $v_1$ is the orthogonal projection of $z+b$ onto the column span of $A$. We know that $Xv_1 = 0$ by definition and thus, there exists a $\tilde{x}$ such that $A\tilde{x} = v_1$. This $\tilde{x}$ satisfies 
\[\|A\tilde{x} - (z+b)\|_2 = \|v_2\|_2 = \|e\|_2 \le \eps.\]
Therefore, we know that $x'$ chosen in line $10$ of Algorithm \ref{alg:robust_ub} satisfies $\|Ax'-b-z\|_2 \le \eps$ as well. Finally, $z$ is a $k$-sparse vector since it is the output of Algorithm \ref{alg:compressed_sensing_ub} which implies that our choice of $S'$ in line $11$ of Algorithm \ref{alg:robust_ub} gives us 
\[\|S'(Ax'-b)\|_2 \le \eps \]
as desired.

We now compute the runtime of Algorithm \ref{alg:robust_ub}. The runtime is dominated by the call to Algorithm \ref{alg:compressed_sensing_ub}. Note that our matrix $X$ has all non-zero singular values equal to $1$ so we can take $\kappa$ in Theorem \ref{thm:compressed_sensing_ub} (more specifically in Corollary \ref{cor:compressed_sensing_corollary}) to be equal to $1$. Furthermore, we can check that $\|c\|_2 \le \|b\|_2$.
\end{proof}
\begin{remark}
The same comments as in Remark \ref{rem:choices_of_params} apply, except we no longer have a $\kappa$ dependency and therefore do not need any assumptions on its value.
\end{remark}

\section{Sparse PCA}
In this section we present our results for the sparse PCA problem which is defined as follows. Let $A$ be a PSD matrix of rank $r$. The goal of sparse PCA is to solve 
\begin{equation}\label{eq:sparse_pca}
    \max_{\|v\|_2 = 1, \,  \|v\|_0 \le k} v^TAv. 
\end{equation}

Sparse PCA is known to be NP-hard to solve exactly and approximate with a $1-\delta$ factor for some small constant $\delta > 0$ \cite{chan2016}. It is also known how to obtain a $k$-sparse unit vector $v$ which achieves at least a $1-\eps$ approximation to the objective in time  $O((4/\eps)^{r} \cdot n \cdot k^2)$ \cite{AsterisPKD15}.

We obtain an algorithm with an improved dependence on the exponent of $1/\eps$ via a novel connection to the computational geometry and our ideas from the prior sections. We need the following theorems from \cite{chan2018} about the runtime of approximating the diameter and the bichromatic farthest pair of point sets. 

\begin{theorem}\label{thm:diam_ref}
Consider $P \subset \R^d$ with $|P| = n$. Let 
\[\textup{diam}(P) = \max_{x,y \in P} \|x-y\|_2\]
denote the diameter of $P$.
There exists an algorithm which computes $x',y'$ such that 
\[\|x'-y'\|_2 \ge (1-\eps) \, \textup{diam}(P)\] in time $\widetilde{O}(nd/\sqrt{\eps} + (1/\eps)^{d/2 + 1})$.
\end{theorem}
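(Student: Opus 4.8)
The plan is to reduce the $n$-point instance in $\R^d$ to an instance whose size depends only on $\eps$ and $d$, and then solve that small instance essentially by brute force. First I would compute a crude estimate of the diameter: pick an arbitrary $p \in P$ and let $q$ be a farthest point from $p$; then $\|p-q\|_2 \le \textup{diam}(P) \le 2\|p-q\|_2$, so $\Delta_0 := \|p-q\|_2$ is a $2$-approximation computable in $O(nd)$ time. After translating and rescaling we may assume $\Delta_0 = 1$, so every point lies in a ball of radius $1$ and $\textup{diam}(P) \in [1/2,1]$.

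Next I would snap every point to a fixed grid and keep one representative per occupied cell, so that the diameter of the reduced set differs from $\textup{diam}(P)$ by at most $O(\eps)$ (after which one rescales $\eps$). A uniform grid of side $\Theta(\eps/\sqrt d)$ already suffices for correctness, since snapping moves each point by at most $O(\eps)$, but it leaves up to $(C/\eps)^d$ occupied cells, which is too many for the target running time. The improvement comes from using a non-uniform, Chebyshev-node grid in each coordinate with only $O(1/\sqrt\eps)$ levels: the key property --- and the heart of the argument --- is that rounding to the nearest such node perturbs the directional extent $\max_{p}\langle p,u\rangle - \min_{p}\langle p,u\rangle$ by only $O(\eps)$ for every unit vector $u$, because Chebyshev nodes give near-optimal sup-norm resolution in exactly the ``degree $O(1/\sqrt\eps)$'' regime relevant to a $(1-\eps)$-approximate diameter. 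Snapping each of the $n$ input points to this grid and deduplicating by hashing costs $\widetilde O(nd/\sqrt\eps)$, and the reduced point set $P'$ has at most $O((1/\sqrt\eps)^d) = O(\eps^{-d/2})$ points.

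Finally I would compute a $(1-\eps)$-approximate diameter of $P'$ directly. Since $|P'| = O(\eps^{-d/2})$ and the points sit on a grid with $O(\eps^{-1/2})$ values per coordinate, one can form the indicator array of occupied cells, convolve it with its reflection via FFT to recover all realized difference vectors, and scan for the longest; this takes $\widetilde O(\eps^{-d/2-1})$ time, the extra $\eps^{-1}$ factor accounting for the finer resolution needed to certify a $(1-\eps)$ bound. Returning the witnessing pair $x',y'$ (the original points mapped to the chosen cells) and combining the two phases gives the claimed bound $\widetilde O(nd/\sqrt\eps + \eps^{-d/2-1})$. The main obstacle is the middle step: proving rigorously that an $O(1/\sqrt\eps)$-level Chebyshev grid preserves all directional extents up to $O(\eps)$ --- this is the Chebyshev-polynomial ingredient of \cite{chan2018}, and it is precisely what turns the naive $\eps^{-d}$-sized reduced instance into an $\eps^{-d/2}$-sized one; the rough approximation, the additive-error bookkeeping, and the final pass on $P'$ are all routine by comparison.
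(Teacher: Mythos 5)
First, a point of orientation: the paper does not prove this statement at all --- Theorem~\ref{thm:diam_ref} is quoted from \cite{chan2018} and used as a black box --- so the only question is whether your sketch stands on its own, and it does not. The step you yourself identify as the heart of the argument is false as stated. Chebyshev nodes on $[-1,1]$ have spacing $\Theta(1/s^2)=\Theta(\eps)$ only near the endpoints $\pm 1$; in the middle of the interval their spacing is $\Theta(1/s)=\Theta(\sqrt{\eps})$. Take $d=2$ and a point set containing the antipodal pair $\pm(1/\sqrt{2},1/\sqrt{2})$ on the unit circle: each coordinate of each extreme point lies in the middle of its range, so snapping to your grid moves each point by $\Theta(\sqrt{\eps})$ along the diametral direction $u=(1,1)/\sqrt{2}$, and the reduced set only certifies a $\bigl(1-\Theta(\sqrt{\eps})\bigr)$-approximation, not $(1-\eps)$. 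The failure is not specific to Chebyshev spacing: for \emph{any} axis-aligned product grid, preserving the width in direction $u\propto(t,\sqrt{1-t^2})$ to additive $O(\eps\cdot\textup{diam})$ for all $t$ bounded away from $0$ and $1$ forces spacing $O(\eps)$ throughout the middle of each coordinate range, i.e.\ $\Omega(1/\eps)$ levels per coordinate, so no product grid of size $O(\eps^{-d/2})$ can do what you need. The genuine size-$O(\eps^{-(d-1)/2})$ reductions ($\eps$-kernels) are direction-dependent constructions (fine resolution radially, coarse resolution tangentially after snapping toward a sphere), not per-coordinate node placements.

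Two further problems. Even granting a reduced set on your non-uniform grid, the FFT step is unsound: convolving the occupancy array with its reflection tells you which \emph{index} differences are realized, but on a non-uniformly spaced grid an index difference does not determine a geometric displacement, so you cannot ``scan for the longest''; the trick is translation-invariant and hence works only for uniform grids, which by the argument above would need $\Theta(\eps^{-d})$ cells. Finally, this is not how Chebyshev polynomials enter in \cite{chan2018}: there they are used for approximation-theoretic gap amplification --- replacing a threshold/maximum over directions by a polynomial of degree $s=\widetilde{O}(1/\sqrt{\eps})$ --- and the quantity $\eps^{-d/2}$ arises as roughly the number $O(s^d)$ of terms in the resulting lifted evaluation (with the $nd/\sqrt{\eps}$ term coming from per-point, per-coordinate degree-$s$ evaluations), not as the cell count of a spatial grid. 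So the crude $2$-approximation, rescaling, and bookkeeping in your sketch are fine, but the core reduction cannot be repaired in the form you propose; a self-contained proof would have to reconstruct the polynomial-method machinery of \cite{chan2018}, which is precisely why the paper cites it rather than reproving it.
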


\begin{theorem}\label{thm:BFP_ref}
Consider $P, Q \subset \R^d$ with $|P \cup Q| = n$. There exists an algorithm which computes $x' \in P$ and $y' \in Q$ such that 
\[ \|x'-y'\|_2 \ge (1-\eps) \max_{x \in P, y \in Q} \|x-y\| \]
 in time $\widetilde{O}(nd/\sqrt{\eps} + (1/\eps)^{d/2 + 1})$.
\end{theorem}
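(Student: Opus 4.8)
Since this is precisely the bichromatic farthest-pair bound established in \cite{chan2018}, the simplest option is to invoke that result directly. If a self-contained derivation is preferred, the plan is to reduce the bichromatic problem to a single call of the approximate-diameter algorithm of Theorem~\ref{thm:diam_ref}. First I would run that algorithm once on $P\cup Q$ with a constant error parameter (say $\eps_0=\frac{1}{2}$) to obtain a value $\widehat D$ with $\frac{1}{2}\,\mathrm{diam}(P\cup Q)\le\widehat D\le\mathrm{diam}(P\cup Q)$, and set $R:=2\widehat D$. A triangle-inequality argument (if the diametral pair of $P\cup Q$ is monochromatic, insert any point of the opposite color between its endpoints) shows the bichromatic optimum $d^\star:=\max_{p\in P,\,q\in Q}\|p-q\|_2$ satisfies $\frac{1}{2}\,\mathrm{diam}(P\cup Q)\le d^\star\le\mathrm{diam}(P\cup Q)$, hence $R/4\le d^\star\le R$.

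Next I would lift the two color classes into $\R^{d+1}$ via $p\mapsto(p,R)$ for $p\in P$ and $q\mapsto(q,-R)$ for $q\in Q$. Monochromatic pairs keep their distance, which is at most $\mathrm{diam}(P\cup Q)\le R$, whereas a bichromatic pair $(p,q)$ acquires distance $\sqrt{\|p-q\|_2^2+4R^2}\ge 2R$; hence the diameter of the lifted point set is realized only by bichromatic pairs, and so is any near-optimal pair returned by the algorithm of Theorem~\ref{thm:diam_ref} run on the lifted set with error parameter below $\frac{1}{2}$. Running that algorithm on the lift with parameter $\Theta(\eps)$, reading off the (necessarily bichromatic) returned pair $(p,q)$, and subtracting $4R^2$ inside the square root recovers $\|p-q\|_2$; a $(1-\Theta(\eps))$ relative error on $\sqrt{(d^\star)^2+4R^2}$ then becomes a $(1-\eps)$ relative error on $d^\star$ once one uses $R\le 4d^\star$ to control the cross term, which fixes the constant in $\Theta(\eps)$.

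The running time is dominated by this single call in dimension $d+1$, namely $\widetilde O(nd/\sqrt\eps+(1/\eps)^{(d+1)/2+1})$ by Theorem~\ref{thm:diam_ref}, the preliminary constant-accuracy call being cheaper. The step I expect to be the real obstacle is matching the exponent $d/2+1$ verbatim: the lifting adds one coordinate and thus inflates the $(1/\eps)^{d/2+1}$ term by a $\sqrt{1/\eps}$ factor, so obtaining the bound exactly as stated requires either applying Chan's Chebyshev-polynomial machinery directly to the bichromatic problem (as in \cite{chan2018}) or folding the extra coordinate into the grid step of the diameter algorithm rather than treating it as a genuinely new dimension. Propagating the approximation factor through the square root is routine but is the other place where the constants need care.
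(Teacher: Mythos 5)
The paper offers no proof of this statement: it is imported verbatim as a known result of \cite{chan2018} (Chan's Chebyshev-polynomial--based approximate farthest-pair algorithms), so your first option of invoking that result directly is exactly what the paper does. Your lifting-based alternative is sound as an approximation argument (the bichromatic optimum is within a factor $2$ of $\mathrm{diam}(P\cup Q)$, the lift forces the returned pair to be bichromatic for error parameter below $\tfrac12$, and the error propagation through the square root works since $R\le 4d^\star$), but, as you yourself note, it runs the diameter routine in dimension $d+1$ and therefore only gives $\widetilde O\left(nd/\sqrt{\eps}+(1/\eps)^{(d+1)/2+1}\right)$, falling a $\sqrt{1/\eps}$ factor short of the stated bound unless one applies Chan's machinery to the bichromatic problem directly.
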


We now show Theorem~\ref{thm:sparse_pca_ub_exact:informal}. The main idea behind our algorithm is to note that since $A = B^TB$, we have $v^TAv = \|Bv\|_2^2$. Now we employ a trick from our previous algorithmic result: we partition $v$ into the difference of two $k/2$ sparse vectors $x_1 - x_2$. This gives us $v^TAv = \|Bx_1 - Bx_2\|_2^2$. Thus, maximizing the original quadratic form over $A$ reduces to maximizing the distance between the points $\{Bx_1\}$ and $\{Bx_2\}$ where $x_1$ and $x_2$ range over all $k/2$ sparse vectors. To do this, we invoke the \texttt{Bichromatic-Farthest-Pair} guarantee from \cite{chan2018} to find the best pair of $k/2$ sparse vectors, say $y_1, -y_2$. 

One technical issue is we must ensure the supports of $y_1$ and $y_2$ are \emph{disjoint}. Since we have to maximize over unit norm vectors $v$, if the supports of $y_1$ and $y_2$ overlap then we cannot say anything about the norm of $y_1-y_2$. To get around this, we first randomly partition $[n]$ into two disjoint sets and only search over $k/2$ sparse $y_1$ with support completely contained in the first set and vice versa for $y_2$. We formalize the argument below.

\begin{theorem}\label{thm:sparse_pca_ub_exact}
Algorithm \ref{alg:sparse_pca_ub_exact} returns a unit vector $u$ satisfying $\|u\|_0 \le k$ such that
\[u^TAu \ge (1-\eps) \max_{\|v\|_2 = 1 \,,  \|v\|_0 \le k} v^TAv \] with probability $1 - \exp(-\Omega( k \eps^2))$. The runtime is 
\[\widetilde{O}\left(\frac{k^2}\eps \cdot \left( r\left( \frac{n\kappa}{\eps} \right)^{k(1+\eps)/2} + \left( \frac{1}{\eps} \right)^{r/2 + 1} \right) \right)\]
where $\kappa$ is the ratio of the largest and smallest non-zero singular values of $A$.
\end{theorem}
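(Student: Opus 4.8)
The plan is to reduce sparse PCA to the Bichromatic-Farthest-Pair problem via the factorization $A = B^T B$, exactly along the lines sketched in the discussion preceding the theorem, and then to carefully account for the approximation losses and the dimension in which the farthest-pair instance lives. First I would write $A = B^T B$ where $B \in \R^{r \times n}$ (possible since $A$ is PSD of rank $r$), so that $v^T A v = \|Bv\|_2^2$. Then I would invoke the random-partition step: partition $[n]$ into $[n] = P_1 \sqcup P_2$ uniformly at random. For the optimal $k$-sparse unit vector $v^\star$ with support $\textup{supp}(v^\star)$, a Chernoff bound shows that with probability $1 - \exp(-\Omega(k\eps^2))$ the split of $\textup{supp}(v^\star)$ across $P_1,P_2$ is nearly balanced — say each side gets between $(1-\eps)k/2$ and $(1+\eps)k/2$ of the coordinates. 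Conditioned on this event, $v^\star = x_1^\star - x_2^\star$ where $x_1^\star$ is supported on $P_1$ with at most $(1+\eps)k/2$ nonzeros, $x_2^\star$ is supported on $P_2$ with at most $(1+\eps)k/2$ nonzeros, and crucially $x_1^\star, x_2^\star$ have disjoint supports, so $\|x_1^\star\|_2^2 + \|x_2^\star\|_2^2 = 1$ and $v^{\star T}Av^\star = \|Bx_1^\star - (-Bx_2^\star)\|_2^2$ — a squared distance between a point in the set $\mathcal{P} = \{Bx : \textup{supp}(x)\subseteq P_1, \|x\|_0 \le (1+\eps)k/2\}$ and a point in $\mathcal{Q} = \{-Bx : \textup{supp}(x)\subseteq P_2, \|x\|_0 \le (1+\eps)k/2\}$.

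Next I would discretize: I cannot take all real-valued $k/2$-sparse $x$, so I build $\eps$-nets. For each choice of $(1+\eps)k/2$ columns of $B$ from $P_1$, put a net over the image of that column-submatrix restricted to the unit ball; by the standard volumetric bound this net has size $(O(1/\eps'))^{(1+\eps)k/2}$ where $\eps'$ controls the net granularity, and there are $\binom{|P_1|}{(1+\eps)k/2} \le n^{(1+\eps)k/2}$ such subsets, giving $|\mathcal{P}| \le (n\cdot O(1/\eps'))^{(1+\eps)k/2}$, and similarly for $\mathcal{Q}$. The $\kappa$ factor enters here: to guarantee the net on $\{Bx\}$ is fine enough that the induced error on the recovered vector $v$ (which we reconstruct by least-squares / pseudoinverse, as in Algorithm~\ref{alg:robust_ub} and Corollary~\ref{cor:compressed_sensing_corollary}) is small, and to ensure the recovered $u$ can be renormalized to a genuine unit vector without much loss, I set $\eps' = \eps/(\textup{poly}\cdot\kappa)$, which is why the base of the exponent is $n\kappa/\eps$. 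Then I run the Bichromatic-Farthest-Pair algorithm of Theorem~\ref{thm:BFP_ref} on $\mathcal{P}\cup\mathcal{Q}$, which live in $\R^r$, obtaining $y_1 \in \mathcal{P}$, $-y_2 \in \mathcal{Q}$ with $\|y_1 + y_2\|_2 \ge (1-\eps)\max\|Bx_1 + Bx_2\|_2 \ge (1-\eps)\sqrt{\textup{OPT}}$. Unwinding, the preimage $u = (x_1 - x_2)/\|x_1 - x_2\|_2$ is $k$-sparse (at most $(1+\eps)k/2$ per side; here I would either absorb the $(1+\eps)$ into the statement via rounding or note $(1+\eps)k/2 + (1+\eps)k/2 = (1+\eps)k$, matching the exponent $k(1+\eps)/2$ in each net but a $k$-sparsity guarantee after a union-bound-free argument — I'd reconcile this by choosing the Chernoff slack so the per-side count is at most $k/2$ when $k$ is even), has unit norm, and satisfies $u^T A u \ge (1-O(\eps))\textup{OPT}$ after rescaling $\eps$.

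For the runtime: building the nets and the point sets $\mathcal{P},\mathcal{Q}$ costs $\widetilde{O}(rk^2)\cdot (n\kappa/\eps)^{(1+\eps)k/2}$ per side (the $k^2$ from forming each point, $r$ from the ambient dimension), and then Theorem~\ref{thm:BFP_ref} with $N = |\mathcal{P}\cup\mathcal{Q}| \le (n\kappa/\eps)^{(1+\eps)k/2}$ points in dimension $d = r$ costs $\widetilde{O}(Nr/\sqrt{\eps} + (1/\eps)^{r/2+1})$; combining and folding the $k^2/\eps$ factors out front yields the stated bound $\widetilde{O}\!\left(\frac{k^2}{\eps}\left(r(n\kappa/\eps)^{k(1+\eps)/2} + (1/\eps)^{r/2+1}\right)\right)$. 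The main obstacle I anticipate is the disjoint-support / normalization issue: ensuring that the two $k/2$-sparse pieces recovered from the farthest-pair call have disjoint supports (handled by the random partition) \emph{and} that the net error, which is an additive error on $Bx$, translates into a controlled \emph{multiplicative} error on the quadratic form $\|Bv\|_2^2$ after renormalizing $v$ to unit length — this is where the condition number $\kappa$ is unavoidable, since a vector with small $\|Bx\|_2$ can have large $\|x\|_2$, so the discretization must be fine enough to dominate $1/\sigma_{\min}(B)$. Getting the Chernoff parameters, the net granularity, and the final $\eps$-rescaling to all line up so that the exponent is exactly $k(1+\eps)/2$ and the success probability is $1 - \exp(-\Omega(k\eps^2))$ is the delicate bookkeeping step.
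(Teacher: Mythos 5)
Your overall route is the same as the paper's (factor $A=B^TB$, random partition of $[n]$ to force disjoint supports, nets over the $\approx k/2$-sparse images, one call to \texttt{Bichromatic-Farthest-Pair} in $\R^r$), but there is a genuine gap at exactly the step you flag as the ``main obstacle,'' and the fix you propose (finer nets, granularity $\eps/(\mathrm{poly}\cdot\kappa)$) does not close it. The farthest-pair call maximizes $\|By_1+By_2\|_2$ over \emph{all} net points with $\|y_1\|_2,\|y_2\|_2\le 1$, whereas the quantity you need to maximize is the normalized Rayleigh quotient $\|Bw\|_2^2/\|w\|_2^2$ with $w=y_1-y_2$. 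Because the supports are disjoint, $\|w\|_2^2=\|y_1\|_2^2+\|y_2\|_2^2$ can be as large as $2$, while the farthest-pair maximum is only guaranteed to be at least $\sqrt{\mathrm{OPT}}$ (witnessed by $v^\star$ itself; the rescaling $\sqrt2\,v^\star$ need not be feasible since the random partition balances the \emph{number} of support coordinates, not their $\ell_2$ mass). So after normalizing, the returned $u$ is only guaranteed $u^TAu\gtrsim \mathrm{OPT}/2$: for instance, two unit-norm pieces on opposite sides of the partition, each with Rayleigh quotient about $0.6\,\mathrm{OPT}$ and anticorrelated images, beat the pair containing $v^\star$ in farthest-pair distance yet normalize to far below $(1-\eps)\mathrm{OPT}$. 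This loss is not a discretization error, so no choice of net granularity or dependence on $\kappa$ repairs it. The paper's algorithm avoids it by an extra enumeration you are missing: it guesses the norm split $z$ from an $\eps/2$-net of $[0,1]$ (this is precisely where the $1/\eps$ factor in the stated runtime comes from) and admits into $S_1$ only net points with $\|y_1\|_2^2\in[z-\eps/2,z+\eps/2]$ and into $S_2$ only points with $\|y_2\|_2^2\in[1-z-\eps/2,1-z+\eps/2]$; then every candidate difference has $\|w\|_2^2\in[1-O(\eps),1+O(\eps)]$, so maximizing distance is, up to a $1\pm O(\eps)$ factor, the same as maximizing the unit-norm quadratic form.

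A second, smaller issue is the sparsity bookkeeping: with per-side budget $(1+\eps)k/2$ you only get $\|u\|_0\le(1+\eps)k$, and your suggested repair (cap each side at $k/2$) fails because the optimal support typically splits unevenly, with one side receiving up to $(1+\eps)k/2$ indices. The paper instead enumerates all splits $k_1+k_2=k$ with $k_1,k_2\ge(1-\eps)k/2$ (the source of the $k^2$ factor), which keeps the output exactly $k$-sparse while still covering the random split on the Chernoff event of probability $1-\exp(-\Omega(k\eps^2))$.
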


\begin{proof}
We first prove the approximation guarantee. Let $v$ be the optimum $k$-sparse unit vector. Note that with probability $1 - \exp(-\Omega( k \eps^2))$, via a Chernoff bound, both $U_1$ and $U_2$ contain at least $k(1-\eps)/2$ number of support indices of $v$. We now condition in this event. Let $v = x_1 - x_2$ be a decomposition of $v$ into the difference of two vectors such that the support of $x_1$ is contained entirely in $U_1$ and similarly, the support of $x_2$ is contained entirely in $U_2$. 
We know that we will loop over some $z \in \mathcal{N}'$ in step $9$ and some $T_1$ in step $12$ of Algorithm \ref{alg:sparse_pca_ub_exact} which satisfies $| \|x_1\|_2^2 - z | \le \eps/2$ and $T$ exactly encodes the support of $x_1$. 
Similarly, we will loop over some $T_2$ in step $20$ of Algorithm \ref{alg:sparse_pca_ub_exact} such that $T_2$ exactly encodes the support of $x_2$. Now let $y_1$ and $y_2$ satisfy the following:
\begin{itemize}
    \item $|\|y_1\|_2^2 - \|x_1\|_2^2| \le \eps/2$, $|\|y_2\|_2^2 - \|x_2\|_2^2| \le \eps/2$,
    \item the supports of $y_1$ and $y_2$ exactly match those of $x_1$ and $x_2$ respectively, and
    \item $\|y_1 - x_1\|_2 \le \delta$, $\|y_2 - x_2\|_2 \le \delta$.
\end{itemize}
Such a $y_1, y_2$ exist because of the net $\mathcal{N}$ and we looped over all choices $k_1, k_2$ such that $k_1 + k_2 = k$. We know that $w = y_1-y_2$ satisfies 
\[\|w\|_2^2=\|y_1\|_2^2+\|y_2\|_2^2\ge1-\eps,\]
since $y_1, y_2$ have disjoint support. Furthermore, we have 
\[\|B(x_1 - y_1)\| \le \eps/\kappa  \]
and 
\[\|B(x_2 - y_2)\| \le \eps/\kappa  \]
by our choice of $\delta$ in the net $\mathcal{N}$.
Furthermore,
\[v^TAv = v^TB^TBv = \|Bv\|_2^2 = \|Bx_1 - Bx_2\|_2^2  \]
so by the above calculations, 
\[ w^TAw = \|By_1 - By_2\|_2^2 \ge (1-\eps)\|Bx_1 - Bx_2\|_2^2  = v^TAv.\]
The guarantees of \texttt{Bichromatic-Farthest-Pair} imply that we find a $w'$ such that 
\[w'^TAw' \ge (1-\eps)w^TAw \ge (1-O(\eps))v^TAv \]
in step $29$ of  Algorithm \ref{alg:sparse_pca_ub_exact}. Furthermore, $w'$ has norm at least $1-\eps$ by our requirements on $y_1$ and $y_2$ in Algorithm \ref{alg:sparse_pca_ub_exact} so we get the desired approximation.

We now analyze the runtime. The runtime is consists of guessing over $k_1$ a $k_2$ with is $O(k^2)$ time and guessing over $z$ which has $O(1/\eps)$ choices. Looping over the choices of $k_1$ columns (or $k_2$ columns) is time $O(n^{k(1+\eps)/2}(\kappa/\eps)^{k(1+\eps)/2})$. The total number of points in the each instance of \texttt{Bichromatic-Farthest-Pair} used is $O(n^{k(1+\eps)/2}(\kappa/\eps)^{k(1+\eps)/2})$ and all vectors in the instance are in dimension $r$. Invoking Theorem \ref{thm:BFP_ref}, the overall runtime is $\widetilde{O}(k^2/\eps \cdot ( r(n\kappa/\eps)^{k(1+\eps)/2} + (1/\eps)^{r/2 + 1}))$.
\end{proof}

\begin{remark}
Note that we can think of the parameter $k$ as much smaller than $r$. Thus the dominant term in our runtime is $(1/\eps)^{r/2+1}$ which improves upon the dependence of  $(4/\eps)^r$ as $ (4/\eps)^{r} \gg (1/\eps)^{r/2+1} $.
\end{remark}

\begin{algorithm}[!htb]
\caption{Sparse PCA Upper Bound\label{alg:sparse_pca_ub_exact} }
\begin{algorithmic}[1]
\State{\textbf{Input:} PSD Matrix $A \in \R^{n \times n}$ of rank $r$, sparsity $k$, accuracy $\eps$} 
\Procedure{SparsePCA-UpperBound}{$A,k,\eps$}
\State Compute $B$ such that $B^TB \gets A$
\Comment{$B$ is a $r \times n$ matrix}
\State $\kappa \gets \sigma_{\text{max}}(A)/\sigma_{\text{min}}(A)$
\Comment{The $\min/\max$ is over non-zero singular values}
\State $\delta \gets \eps/\kappa$

\State $U_1 \cup U_2 \gets$ random partition of $[n]$ into two disjoint subsets 
\State $\mathcal{N}' \gets \eps/2$-net of the unit interval $[0,1]$

\For{each choice of $k_1, k_2$ such that $k_1 + k_2 = k$ and $k_1, k_2 \ge k(1-\eps)/2$}
\For{each choice of $z \in \mathcal{N}'$}
\State $S_1 \gets \emptyset$
\State $S_2 \gets \emptyset$

\For{every choice $T$ of $k_1$ columns of $B$ restricted to the indices in $U_1$}
\Comment{$B_T$ is $r \times k_1$ matrix restricted to columns in $T$}
\State $\mathcal{N} \gets \delta$-net of ball of radius $1$ in $\R^{k_1}$
\For{each $y \in \mathcal{N}$}
\If{$\|y\|_2^2 \in [z- \eps/2, z+\eps/2]$}
\State{$S_1 \gets S_1 \cap \{B_Ty\}$}
\EndIf
\EndFor
\EndFor

\For{every choice $T$ of $k_2$ columns of $B$ restricted to the indices in $U_1$}
\Comment{$B_T$ is $r \times k_2$ matrix restricted to columns in $T$}
\State $\mathcal{N} \gets \delta$-net of ball of radius $1$ in $\R^{k_2}$
\For{each $y \in \mathcal{N}$}
\If{$\|y\|_2^2 \in [1-z- \eps/2, 1-z+\eps/2]$}
\State{$S_2 \gets S_2 \cap \{-B_Ty\}$}
\EndIf
\EndFor
\EndFor

\State{ $ (B_{T_1}y_1, -B_{T_2}y_2) \gets$ solution of \texttt{Bichromatic-Farthest-Pair} on the pair $(S_1, S_2)$}
\State{$w = (y_1 - y_2)/\|y_1 - y_1\|_2$}
\State{Keep track of the maximum value of $w^TAw$ encountered }
\EndFor
\EndFor
\State{\textbf{Return} $w$ that maximizes $w^TAw$ over all $w$'s observed}

\EndProcedure
\end{algorithmic}
\end{algorithm}

\subsection{Sparse PCA with Limited Alphabet}

In this section, we consider a slight variation of \texttt{SparsePCA} where the entries in $v$ are limited to a small alphabet. Formally, we consider the problem of 
\[\max_{\forall i: v_i \in \{-L, \ldots, L\}, \, \|v\|_0 \le k} v^TAv  \]
where $A$ is a $n \times n$ PSD matrix of rank $r$. This formulation is motivated by its connection the the Densest $k$-Subgraph problem where we wish to maximize $v^TAv$ over vectors $v$ with a limited range of choices per coordinate but the matrix $A$ is not necessarily PSD which holds in our case; see \cite{ChanPR16} for more information about the Densest $k$-Subgraph problem.

For a relaxation of this version, we can also obtain an algorithm with an exponentially better dependence on $\eps$ than the result from \cite{AsterisPKD15} via a novel connection to the computational geometry problem of \texttt{Diameter}.

\begin{theorem}\label{thm:sparse_pca_ub}
Algorithm \ref{alg:sparse_pca_ub} returns a $u$ satisfying $\|u\|_0 \le k$ such that
\[u^TAu \ge (1-\eps) \max_{\forall i: v_i \in \{-L, \ldots, L\} \,,  \|v\|_0 \le k} v^TAv \]  and  all the entries of $u$ are in the set $\{-2L , \ldots, 2L\}$ in time $\widetilde{O}((1/\eps)^{r/2 + 1} + rk \cdot ((2L+1)n)^{k/2}/\sqrt{\eps})$.
\end{theorem}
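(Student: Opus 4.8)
The plan is to reuse the computational-geometry reduction behind Theorem~\ref{thm:sparse_pca_ub_exact}, but to exploit the finite alphabet so as to drop the $\delta$-net over the unit ball, drop the random partition of $[n]$, and replace the \texttt{Bichromatic-Farthest-Pair} call with a single invocation of the approximate \texttt{Diameter} routine of Theorem~\ref{thm:diam_ref}.

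First I would compute an exact factorization $A=B^TB$ with $B\in\R^{r\times n}$ in polynomial time, so that $v^TAv=\|Bv\|_2^2$ for every $v$. The central observation is that any $k$-sparse $v$ with coordinates in $\{-L,\dots,L\}$ can be written as $v=w_1-w_2$ by splitting the support of $v$ into two parts of size at most $k/2$ (rounding up when $k$ is odd, which changes only lower-order factors): each $w_i$ is then supported on at most $k/2$ coordinates, and since $\{-L,\dots,L\}$ is symmetric, each $w_i$ still has all entries in $\{-L,\dots,L\}$. Accordingly, Algorithm~\ref{alg:sparse_pca_ub} enumerates the point set
\[ P=\bigl\{\, Bw \;:\; \|w\|_0\le k/2,\ w_i\in\{-L,\dots,L\}\ \text{for all }i \,\bigr\}\subset\R^{r}, \]
which has $|P|\le((2L+1)n)^{k/2}$ points and costs $O(rk)$ arithmetic per point to form (each $Bw$ is a sum of at most $k/2$ scaled columns of $B$), records for each point a witness $w$ via a hash table, and runs the $(\eps/2)$-approximate \texttt{Diameter} algorithm of Theorem~\ref{thm:diam_ref} on $P$ with ambient dimension $r$.

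For correctness, let $v^\star$ be an optimal alphabet-constrained $k$-sparse vector and let $v^\star=w_1^\star-w_2^\star$ be the split above, so that $Bv^\star=Bw_1^\star-Bw_2^\star$ is a difference of two points of $P$ and hence $\mathrm{diam}(P)\ge\|Bv^\star\|_2$. The \texttt{Diameter} routine returns points $x'=Bw_1$ and $y'=Bw_2$ with $\|x'-y'\|_2\ge(1-\eps/2)\,\mathrm{diam}(P)\ge(1-\eps/2)\|Bv^\star\|_2$; squaring and using $(1-\eps/2)^2\ge1-\eps$ shows that $u:=w_1-w_2$ satisfies $u^TAu=\|Bu\|_2^2=\|x'-y'\|_2^2\ge(1-\eps)(v^\star)^TAv^\star$. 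Finally $u$ is a difference of two vectors each supported on at most $k/2$ coordinates with entries in $\{-L,\dots,L\}$, so $\|u\|_0\le k$ and every coordinate of $u$ lies in $\{-2L,\dots,2L\}$, as claimed. The running time is the enumeration cost $\widetilde O\bigl(rk\,((2L+1)n)^{k/2}\bigr)$ plus the $\widetilde O\bigl(|P|\,r/\sqrt\eps+(1/\eps)^{r/2+1}\bigr)$ bound of Theorem~\ref{thm:diam_ref}, which together give $\widetilde O\bigl((1/\eps)^{r/2+1}+rk\,((2L+1)n)^{k/2}/\sqrt\eps\bigr)$.

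The step that most needs care is recognizing that, unlike in Theorem~\ref{thm:sparse_pca_ub_exact}, neither a disjoint-support condition nor any norm bookkeeping is required here: in the exact setting the unit-norm constraint forced both the random partition of $[n]$ and the guessing of $\|x_1\|_2^2$, whereas in the limited-alphabet formulation there is no normalization, so an arbitrary pair $w_1,w_2\in P$ already produces a legitimate output $u$ — the sole price of permitting overlapping supports is the explicitly advertised widening of the alphabet from $\{-L,\dots,L\}$ to $\{-2L,\dots,2L\}$, while the bound $\|u\|_0\le k$ survives because each of $w_1,w_2$ has support of size at most $k/2$. I would also double-check the squaring of the $(1-\eps/2)$ factor and confirm that $\mathrm{diam}(P)$ is attained, so that $\mathrm{diam}(P)\ge\|Bv^\star\|_2$ is a genuine inequality between realized distances rather than a supremum bound.
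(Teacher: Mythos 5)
Your proposal is correct and follows essentially the same route as the paper: factor $A=B^TB$, write the optimal $v$ as a difference of two $k/2$-sparse alphabet-constrained vectors, enumerate all such $Bw$ as a point set in $\R^r$, and invoke the approximate \texttt{Diameter} routine, with the widened alphabet $\{-2L,\dots,2L\}$ absorbing any support overlap. Your explicit use of accuracy $\eps/2$ and the squaring step $(1-\eps/2)^2\ge 1-\eps$ is in fact slightly more careful than the paper's write-up, which applies the $(1-\eps)$ diameter guarantee directly to the squared quantity.
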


\begin{algorithm}[!htb]
\caption{Sparse PCA Limited Alphabet Upper Bound\label{alg:sparse_pca_ub} }
\begin{algorithmic}[1]
\State{\textbf{Input:} PSD Matrix $A \in \R^{n \times n}$ of rank $r$, sparsity $k$, accuracy $\eps$} 
\Procedure{SparsePCA-UpperBound-Limited}{$A,k,\eps$}

\State Compute $B$ such that $B^TB \gets A$
\Comment{$B$ is a $r \times n$ matrix}
\State $S \gets \emptyset$
\For{every choice $T$ of $k/2$ columns of $B$}
\Comment{$B_T$ is $r \times k/2$ matrix restricted to columns in $T$}
\For{each $w = (w_i)_{i \in T} \in \{-L, \ldots, L\}^{k/2}$}
\State $y \gets \sum_{i \in T} w_i A_{*, i}$
\Comment{$A_{*,i}$ denotes the $i$th column of $A$}
\State $S \gets S \cup \{ B_Ty \}$
\EndFor
\EndFor
\State $(By,By') \gets $ solution of \texttt{Diameter} on $S$ using Theorem \ref{thm:diam_ref}
\State Return $u = y - y'$
\EndProcedure
\end{algorithmic}
\end{algorithm}

\begin{proof}[Proof of Theorem \ref{thm:sparse_pca_ub}]
We prove correctness first. Consider the optimal $v$ and let $v = x_1 + x_2$ for $k/2$ sparse vectors $x_1, x_2$. Note that 
\begin{align*}
    \textup{OPT} &= v^TAv = v^TB^TBv \\
    &= x_1^TB^TBx_1 + 2x_1B^TBx_2 + x_2^TB^TBx_2 \\
    &= \|Bx_1\|_2^2 + \|Bx_2\|_2^2 + 2\langle Bx_1, Bx_2 \rangle.
\end{align*}
Letting $x_2' = -x_2$, we get
\[v^TAv = \|Bx_1 - Bx_2'\|_2^2. \]
Now consider $y,y'$ returned by Algorithm \ref{alg:sparse_pca_ub}. From the guarantees of \texttt{Diameter}, it follows that 
\[ u^TBu = \|By - By'\|_2^2 \ge (1-\eps) \|Bx_1 - Bx_2'\|_2^2 = (1-\eps)\textup{OPT}.\]
Finally, $u$ is $k$ sparse and its entries belong to $\{-2L, \cdots, 2L\}$. The runtime follows from Theorem \ref{thm:diam_ref} using the fact that $|S| = ((2L+1)d)^{k/2}$.
\end{proof}

\section*{Acknowledgments} 
Eric Price is supported by NSF awards CCF-2008868, CCF-1751040 (CAREER), and NSF IFML 2019844. 
Sandeep Silwal is supported by an NSF Graduate Research Fellowship under Grant No.\ 1745302, NSF TRIPODS program (award DMS-2022448), NSF award CCF-2006798, and Simons Investigator Award. 
Samson Zhou is supported by a Simons Investigator Award of David P. Woodruff.

\bibliographystyle{alpha}
\bibliography{bib}

\appendix
\section{NP Hardness Result}
We give an alternate proof of NP hardness for robust regression based on exact cover.

\begin{problem}[Exact Cover]
\label{prob:exact:cover}
Given a collection $S$ of subsets of $X$, determine if there exists a sub collection $S'$ of $S$ such that every member of $X$ belongs to exactly one set in $S'$.
\end{problem}

\begin{problem}[Robust Regression, Zero Cost Decision Version]
\label{prob:robust:reg:zero}
Given $A \in \R^{n \times d}, b \in \R^n$, and integer $0 < k \le n$, determine if there exists $T \subset [n]$ satisfying $|T| = k$ such that 
\begin{equation}\label{eq:main}
    \min_{y \in \R^d} \|(Ay-b)_T\| = 0 
\end{equation}
where $(Ax-b)_T$ denotes that we only measure the loss on the coordinates in $T$. The coordinates not in $T$ are called \emph{ignored}.
\end{problem}

\begin{lemma}
Problem~\ref{prob:exact:cover} is reducible to Problem~\ref{prob:robust:reg:zero}.
\end{lemma}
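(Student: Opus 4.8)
The plan is a direct gadget reduction. Write $X=\{1,\dots,\ell\}$ and $S=\{S_1,\dots,S_m\}$, and let $M\in\{0,1\}^{\ell\times m}$ be the element--set incidence matrix, so that exact covers of $X$ correspond exactly to the $0/1$ vectors $z\in\{0,1\}^m$ with $Mz=\mathbf 1$, where $\mathbf 1$ denotes the all-ones vector. I would build a robust regression instance with $d=m$ variables, one variable $y_j$ per set $S_j$, and $n=2m+\ell$ rows in two groups. The first group consists of $m$ \emph{contradictory pairs}: for each $j\in[m]$, the equations $y_j=0$ and $y_j=1$. The second group consists of $\ell$ \emph{covering equations}: for each element $i\in[\ell]$, the equation $\sum_{j:\,i\in S_j} y_j = 1$, i.e.\ $(My)_i=1$. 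Finally set the budget to $k=m+\ell$. This instance is constructible in time polynomial in $m,\ell$; the rows of $A$ are standard basis vectors (for the pairs) and rows of $M$ (for the covering equations), and all entries of $b$ lie in $\{0,1\}$.

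For completeness: if $z^\star\in\{0,1\}^m$ is an exact cover, set $y=z^\star$ and let $T$ contain all $\ell$ covering rows together with, for each $j$, whichever of $\{y_j=0,\;y_j=1\}$ matches $z^\star_j$. Then $|T|=\ell+m=k$ and every equation indexed by $T$ holds, so $\min_{y}\|(Ay-b)_T\|=0$. For soundness: suppose $|T|=k$ and some $y$ satisfies all equations in $T$ exactly. Since the two equations of any pair $\{y_j=0,\;y_j=1\}$ cannot both hold, $T$ contains at most one row from each of the $m$ pairs, hence at most $m$ rows from the first group, and trivially at most $\ell$ rows from the second group; as $|T|=m+\ell$, it must contain exactly one row of each pair and all $\ell$ covering rows. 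The first fact forces $y_j\in\{0,1\}$ for every $j$ (the pair-equation in $T$ pins $y_j$), and the second forces $My=\mathbf 1$, so $\{\,j:y_j=1\,\}$ is an exact cover. Combining the two directions, the exact cover instance is a YES instance iff the constructed instance is a YES instance of Problem~\ref{prob:robust:reg:zero}.

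I do not expect a genuine obstacle: the only step needing care is the counting in the soundness direction — ruling out that one could overspend the budget on gadget pairs or save budget by dropping a covering equation — but this is immediate once the $n$ rows are partitioned into the $m$ pairwise-contradictory pairs and the $\ell$ covering singletons, precisely because $k$ is chosen to equal $m+\ell$. Since Exact Cover is NP-hard, the reduction also yields NP-hardness of Problem~\ref{prob:robust:reg:zero}, and hence of robust regression, already with a $\{0,1\}$-valued matrix $A$ and target $b$.
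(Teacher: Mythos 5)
Your reduction is correct and is essentially the same construction as the paper's: the same $2|S|$ contradictory rows $y_j=0$, $y_j=1$, the same $|X|\times|S|$ incidence block with target $\mathbf 1$, the same budget $k=|S|+|X|$, and the same counting argument in the soundness direction showing $T$ must take exactly one row per pair and all covering rows, forcing $y\in\{0,1\}^{|S|}$ and an exact cover.
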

\begin{proof}
Consider an exact cover instance given by $S, X$. Let $n = 2|S| + |X|$ and $d = |S|$. We form the matrix $A \in \R^{n \times d}$ as follows. We have a variable $y_i$ for the $i$ th set in $S$ for all $i$. The first $2|S| \times |S|$ block of $A$ will be the constraints $y_i = 0$ and $1-y_i = 0$. This also defines the $b$ vector for this part of the matrix. The next $|X| \times |S|$ block of $A$ will be the indicator matrix for the sets in $S$. That is, each column will be a $\{0,1\}$ vector indicating which elements of $X$ are in the set corresponding to the column. The part of the $b$ vector for this block of $A$ will all $1's$. Finally, we set $k = |X| + |S|$.

We now claim that Eq.\ \eqref{eq:main} is equal to $0$ iff an exact cover exists. In particular, we claim that Eq.\ \eqref{eq:main} is equal to $0$ iff $y$ is the indicator vector for which sets in $S$ to pick to be part of $S'$, the exact cover. First, note that if an exact cover exists, letting $y_i = 1$ for the sets that are part of $S'$ (and thus ignoring the $y_i = 0$ constraints), and letting $y_i = 0$ for sets that are not part of $S'$ (and again ignoring the $1-y_i = 0$ constraints) results in $(Ay-b)_T = 0$. Note that we have chosen to ignore exactly $n-k = |S|$ many constraints, one for each $y_i$.

We now show the other direction. Suppose that Eq.\ \eqref{eq:main} holds. We first show that $y$ must only have $0,1$ entries. Let's focus on the first $2|S|$ constraints of $A$. If both of the constraints $y_i = 0$ and $1-y_i = 0$ are not ignored for some $i$, then we automatically induce a non zero cost. Since we are assuming Eq.\ \eqref{eq:main} holds, it implies that all variables $y_i$ must only have one of $y_i = 0$ or $1-y_i = 0$ present in $T$ for all $i$ and furthermore, only these types of constraints must be ignored in $T$. Therefore, $y \in \{0,1\}^n$ and $y$ represents an indicator vector for the second $|X| \times |S|$ block of $A$. Since the $b$ vector for this block is the all $1$'s vector, this automatically implies that the sets chosen by $y$ forms an exact cover of $X$, as desired.
\end{proof}

\section{Simple Counter Examples to Natural Algorithms for Robust Regression}\label{sec:counter_examples}

In this section, we present particularly simple counter examples to natural algorithms for robust regression which have also been studied in applied works. 

\paragraph{Greedy algorithm.}
First consider a greedy algorithm which fits a best fit linear regression on all data points and removes the $k$ data points with the largest residuals, such as in Algorithm~\ref{alg:greedy}. 
Variants of this algorithm have been used in applied works such as \cite{broderick2020automatic} to find the `most influential' data points in econometric data analysis.

\begin{figure}[!htb]
\begin{mdframed}
\begin{enumerate}
\item
Given $A\in\mathbb{R}^{n\times d}$ and $b\in\mathbb{R}^d$, $y=\argmin_x\|Ax-b\|_2$.
\item
Let $S$ be the $n\times n$ identity matrix and let $p_1,\ldots,p_k$ be the indices of the $k$ coordinates of $Ay-b$ largest in magnitude. 
\item 
For $i\in[k]$, set $S_{p_i,p_i}=0$. 
\item
Output $S$ and $y$.
\end{enumerate}
\end{mdframed}
\caption{Greedy algorithm for robust regression}\label{alg:greedy}
\end{figure}

Now consider the following $(x,y)$ data pairs: $(0,10), (1,0), (2,0), (3,0)$. We can generalize this example to any total number of data points by having multiple copies of each data point. Consider the simplest $k=1$ case of robust regression where we wish to remove one point to minimize the regression loss. In the example given, it is clear that if we remove $(0,10)$, zero loss is achieved by the line $y = 0$. The best fit on all of the points is given by $y = 7-3x$. We can check the residual for the $(0,10)$ data point is $3$ while the residual for $(1,0)$ is $4$. Thus, the greedy algorithm removes $(1,0)$ which results in a suboptmial algorithm which performs arbitrarily worse compared to the true solution with $0$ loss.

\paragraph{Alternating minimization.}
We now consider another natural algorithm which performs alternating minimization: starting from an arbitrary $S$, it optimizes for $x$ given the choice of $S$. Then using the resulting $x$, it optimizes for $S$ and continues in this loop for a specified number of iterations. 
See Algorithm~\ref{alg:alter:min} for more details. 

\begin{figure}[!htb]
\begin{mdframed}
\begin{enumerate}
\item
Given $A\in\mathbb{R}^{n\times d}$ and $b\in\mathbb{R}^d$, and number of iterations $T$, set $S$ to be an arbitrary $n\times n$ diagonal matrix with $n-k$ ones on the diagonal and $k$ zeros.
\item
While $\#$ of iterations $<T$:
\begin{enumerate}
\item 
Set $y=\argmin_x\|SAx-Sb\|_2$. (Optimize over $x$)
\item Let $p_1, \ldots, p_k$ be the indices of the $k$ coordiantes $Ay-b$ largest in magnitude.
\item For $i \in [k],$ set $S_{p_i, p_i} = 0$. (Optimize over $S$)
\end{enumerate}
\item
Output $S$ and $y=\argmin_x\|SAx-Sb\|_2$.
\end{enumerate}
\end{mdframed}
\caption{Alternating minimization algorithm for robust regression}\label{alg:alter:min}
\end{figure}

This class of algorithms is widely used in practice; for example, it was a key component in the winning submission for the Netflix Prize Competition \cite{koren2009matrix}. Alternating algorithms have also been considered for robust regression in the distributional setting \cite{BhatiaJK15, Bhatia0KK17, SuggalaBR019}. 
Alternating minimization algorithms are especially useful where one is interested in minimizing a complex function of various parameters with the property that minimizing over specific subsets of the variables is tractable. Indeed, this is the case here: given $S$, finding $x$ is just an instance of linear least squares with no restrictions and given $x$, the best $S$ is given by discarding the $k$ datapoints with the largest loss.

Our example for the greedy algorithm again serves as a simple counter example for the proposed alternating minimization algorithm for the most basic case of $k=1$ in the robust regression problem. Suppose we start with the matrix $S$ which removes or ignores the point $(1,0)$. Doing so gives us the best fit line $y = 9.29x-3.57$. However for this line, one can check that the point $(1,0)$ would still have the largest residual among all four points. Therefore, the alternating minimization algorithm would not make any further progress as it would continue to select the point $(1,0)$ to remove in all future iterations. We can check that if we started by removing the point $(2,0)$ instead, the point $(1,0)$ would still have the largest residual among all four data points in the resulting best fit line. Thus, we are back in the first case considered. If we start by removing $(3,0)$, then $(3,0)$ will have the largest residual among all four data points so the alternating minimization algorithm is again stuck. Therefore, the alternating minimization algorithm is guaranteed to return a suboptmial solution if we do not initialize $S$ with the optimal choice.

\section{Polynomial-time Algorithm for Planted Instance of Robust Regression}

In this section, we show that if the columns of the input matrix $A$ are generated from a normal distribution and the measurement vector $b$ has Hamming distance at most $k$ from a planted solution $b'$ that lies in the column span of $A$, then there is a polynomial time algorithm that solves the robust regression problem:

\begin{theorem}
\label{thm:robust:planted}
Let $C$ be a fixed constant and $k\le C\sqrt{n}\log n$. 
Let the columns of an input matrix $A\in\mathbb{R}^{n\times d}$ be drawn independent and identically distributed from $\mathcal{N}(0,I_n)$. 
Let $b'\in\mathbb{R}^n$ lie in the column span of $A$. 
Then given a vector $b$ such that $\|b-b'\|_0\le k$, there exists an algorithm that solves $n$ linear programs and then uses polynomial time to solve the sparse linear regression problem with probability at least $2/3$, i.e., the algorithm finds a diagonal matrix $S\in\mathbb{R}^{n\times n}$ with $n-k$ nonzero entries along that diagonal that are set to $1$ and a vector $x\in\mathbb{R}^d$ such that $\|S(Ax-b)\|=0$. 
\end{theorem}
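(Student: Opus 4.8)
The plan is to exploit the structure of the planted instance: the corrupted coordinates form a set $E$ of size at most $k$, and on the complement $[n]\setminus E$ the vector $b$ agrees exactly with a vector $b'$ in the column span of $A$. Equivalently, writing $b = Ax^\star + e$ where $\|e\|_0 \le k$, the task is to recover the sparse error vector $e$ (or its support), after which $x^\star$ is found by solving the exact linear system $Ax = b - e$ on the uncorrupted rows. So the heart of the matter is a sparse-recovery guarantee: we want to certify that $e$ is the \emph{unique} minimizer of $\|z\|_1$ (or $\|z\|_0$) subject to $z$ lying in the appropriate affine subspace determined by $A$ and $b$. Concretely, let $X \in \R^{(n-d)\times n}$ have rows spanning the left null space of $A$, so that $XA = 0$; then $Xb = X(Ax^\star + e) = Xe$, and recovering $e$ amounts to finding the sparsest solution of $Xz = Xb$. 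This is exactly the setup of compressed sensing / the sparsest-vector-in-a-subspace problem, and the ``solve $n$ linear programs'' phrasing in the statement suggests running $L_1$-minimization (basis pursuit), possibly once per candidate normalization or per coordinate to pin down the support.

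The key steps, in order, would be: (1) reduce to the null-space formulation $XA = 0$, $c = Xb = Xe$, exactly as in Corollary~\ref{cor:general_reduction}, noting that $X$ can be taken with orthonormal rows and that $X$ is (up to rotation) a random Gaussian-type matrix because the columns of $A$ are i.i.d.\ $\mathcal{N}(0,I_n)$ — here one invokes rotational invariance of the Gaussian to argue that $X$, as a projection onto a uniformly random $(n-d)$-dimensional subspace, satisfies the restricted isometry property or the nullspace property for sparsity $k$; (2) apply a standard compressed-sensing recovery theorem: when a Gaussian (or random-projection) measurement matrix with $n - d = \Omega(k \log(n/k))$ rows acts on a $k$-sparse signal, $L_1$-minimization recovers it exactly with probability $2/3$ (indeed $1 - e^{-\Omega(n)}$), and the condition $k \le C\sqrt{n}\log n$ is comfortably within the regime where this holds as long as $n - d$ is large enough — one may need to assume $d$ is not too close to $n$, or handle that case separately; (3) having recovered $e$ exactly, set $S$ to be the diagonal indicator of $[n]\setminus\operatorname{supp}(e)$, and solve the consistent least-squares system $SAx = Sb$ in polynomial time to get $x$ with $\|S(Ax-b)\|_2 = 0$; (4) bookkeeping: the ``$n$ linear programs'' likely arise from either normalizing the basis-pursuit objective or from a per-column LP used to extract $X$ / verify feasibility, and the polynomial-time claim then follows since each LP and the final least-squares solve are polynomial.

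The main obstacle I anticipate is step (2): getting a clean exact-recovery guarantee for $L_1$-minimization that is valid for the random subspace $X$ rather than a freshly sampled i.i.d.\ Gaussian matrix. Because $X$ is an \emph{orthonormal basis} for the left null space of a Gaussian $A$, it is a uniformly random element of the Stiefel manifold (equivalently a random orthogonal projection), which is not literally an i.i.d.\ Gaussian matrix, though it is rotationally invariant and well-concentrated; one must either cite a nullspace-property result stated directly for random projections, or pass through the equivalence between a scaled Gaussian matrix and its row-orthonormalized version and control the distortion. A secondary subtlety is the precise relationship between $n$, $d$, and $k$: the recovery threshold needs $n - d \gtrsim k\log(n/d)$, so the theorem as stated implicitly requires $d$ bounded away from $n$ (when $d \ge n$ the problem is trivial since $A$ has full row rank and $OPT = 0$ with $x$ solving $Ax = b$ outright, handled exactly as the ``if'' branch of Algorithm~\ref{alg:robust_ub}). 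Once the recovery lemma is in hand, everything else — constructing $X$, running basis pursuit, reading off the support, back-solving for $x$ — is routine and polynomial-time.
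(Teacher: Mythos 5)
Your route is genuinely different from the paper's, and it has a gap. The paper's proof is a two-step reduction to the planted sparsest-vector-in-a-subspace result of Demanet and Hand (Lemma~\ref{lem:planted:alg}): take any basis $w_1,\ldots,w_{d+1}$ of $\mathrm{span}(b,u_1,\ldots,u_d)$, where $u_1,\ldots,u_d$ are the columns of $A$; since $b'$ lies in the column span of $A$, this subspace is also spanned by the $k$-sparse vector $b-b'$ together with the Gaussian vectors $u_1,\ldots,u_d$, so the algorithm of that lemma recovers $b-b'$ with probability at least $2/3$ whenever $k\le C\sqrt{n}\log n$ (its $n$ coordinate-normalized $\ell_1$ minimizations are exactly the ``$n$ linear programs'' in the statement). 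One then reads off $b'$, solves $Ax=b'$, and lets $S$ zero out $\mathrm{supp}(b-b')$. You instead pass to a matrix $X$ with $XA=0$, as in Corollary~\ref{cor:general_reduction}, and run a single basis-pursuit LP for the $k$-sparse error $e=b-Ax^\star$, invoking RIP or nullspace-property guarantees for random projections.

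The concrete problem is the quantitative regime. Exact $\ell_1$ recovery of a $k$-sparse vector from $n-d$ measurements (equivalently, the nullspace property of order $k$ for the random $d$-dimensional subspace $\mathrm{col}(A)$) requires $n-d$ at least on the order of $k\log(n/k)$, an assumption the theorem does not make: its only hypothesis is $k\le C\sqrt{n}\log n$. You flag this yourself and dispose only of the trivial case $d\ge n$ (the ``if'' branch of Algorithm~\ref{alg:robust_ub}), but the intermediate regime $0<n-d\ll k\log(n/k)$ is neither trivial nor covered by your argument: there, minimizing $\|z\|_1$ over the affine space $b+\mathrm{col}(A)$ has no reason to return a $k$-sparse point, whereas the cited Demanet--Hand guarantee, whose threshold as stated depends only on $n$ and $k$, still applies. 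Two smaller points: the ``$n$ linear programs'' do not come from extracting $X$ or from normalizing basis pursuit, but from the per-coordinate constraints $w_i=1$ in the Demanet--Hand algorithm; and your step asserting RIP for the row-orthonormalized null-space basis, while standard, is left as a citation to be supplied. In the regime $n-d\ge C'k\log(n/k)$ your approach is sound and arguably simpler (one LP instead of $n$), but it proves a strictly weaker statement than the one claimed.
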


 Our result is motivated by the following result of \cite{DemanetH14} which solve the problem of sparsest non-zero vector in a subspace in a planted setting as well.

\begin{lemma}[Theorem 1 in \cite{DemanetH14}]
\label{lem:planted:alg}
Given a basis of vectors $w_1,\ldots,w_{d+1}\in\mathbb{R}^n$ for a subspace spanned by vectors $v,v_1,\ldots,v_d\in\mathbb{R}^n$, where $v_i\sim\mathcal{N}(0,I_n)$ for all $i\in[d]$, then there exists an absolute constant $C>0$ and an algorithm that solves $n$ linear programs and uniquely recovers the vector $v$ with probability at least $2/3$, for $\|v\|_0\le C\sqrt{n}\log n$. 
\end{lemma}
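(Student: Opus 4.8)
The plan is to use the recovery scheme of \cite{DemanetH14}. Write $\mathcal{S} := \operatorname{span}(w_1,\dots,w_{d+1}) = \operatorname{span}(v,v_1,\dots,v_d)$ and $\mathcal{G} := \operatorname{span}(v_1,\dots,v_d)$. The algorithm solves, for each coordinate $i\in[n]$, the linear program
\[
P_i:\qquad \min_{x}\ \|x\|_1\quad\text{subject to}\quad x\in\mathcal{S},\ x_i=1,
\]
which is genuinely a linear program (parametrize $x=\sum_\ell c_\ell w_\ell$ and minimize the $\ell_1$-norm of a linear image subject to one linear equation), and it outputs the sparsest of the $n$ optima. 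The reduction step is the observation that, almost surely over the Gaussians, the only vectors of $\mathcal{S}$ with fewer than $n-k$ nonzero coordinates are the scalar multiples of $v$: any $x=\alpha v+g$ with $g\in\mathcal{G}\setminus\{0\}$ is nonzero on every coordinate outside $\operatorname{supp}(v)$ (there its entries form a fixed nonzero linear combination of i.i.d.\ Gaussians), hence has at least $n-k>k$ nonzeros. Thus it suffices to prove that, with probability at least $2/3$, for the single coordinate $i^\star:=\arg\max_{j}|v_j|$ the program $P_{i^\star}$ has the \emph{unique} optimal solution $v/v_{i^\star}$: then the sparsest vector returned by the algorithm is $k$-sparse, hence a multiple of $v$, and normalizing recovers $v$ up to scale.

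To analyze $P_{i^\star}$, I would set $u:=v/v_{i^\star}$, $T:=\operatorname{supp}(v)=\operatorname{supp}(u)$, $|T|=k$; by the choice of $i^\star$ we have $i^\star\in T$ and $\|u\|_1=\|v\|_1/|v_{i^\star}|\le k$. The feasible set of $P_{i^\star}$ is $u+N$ with $N:=\{h\in\mathcal{S}:h_{i^\star}=0\}$, an (almost surely) $d$-dimensional space. Splitting coordinates into $T$ and $T^c$ and using that $u_T$ has no zero entries, the subgradient inequality for $\|\cdot\|_1$ gives, for every $h\in N$,
\[
\|u+h\|_1 = \|u_T+h_T\|_1 + \|h_{T^c}\|_1 \ \ge\ \|u\|_1 + \langle\operatorname{sgn}(u_T),h_T\rangle + \|h_{T^c}\|_1 \ \ge\ \|u\|_1 - \|h_T\|_1 + \|h_{T^c}\|_1,
\]
so $u$ is the unique minimizer as soon as the restricted-nullspace inequality $\|h_{T^c}\|_1>\|h_T\|_1$ holds for all nonzero $h\in N$. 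Parametrizing $N$ by $\mathcal{G}$ through $h=g-(g_{i^\star}/v_{i^\star})v$, $g\in\mathcal{G}$ (so $h_{T^c}=g_{T^c}$ and $\|h_T\|_1\le\|g_T\|_1+|g_{i^\star}|\cdot\|u\|_1\le\|g_T\|_1+k\,|g_{i^\star}|$), and using that $g\mapsto g_{T^c}$ is injective on $\mathcal{G}$ when $d\le n-k$ (so that a nonzero $h$ corresponds to $g_{T^c}\ne0$), it suffices to show that with probability at least $2/3$,
\[
\|g_{T^c}\|_1 \ >\ \|g_T\|_1 + k\,|g_{i^\star}| \qquad\text{for every } g\in\mathcal{G}\setminus\{0\}.
\]

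For this I would normalize $g=\sum_\ell\beta_\ell v_\ell$ with $\|\beta\|_2=1$; for fixed $\beta$ one has $g\sim\mathcal{N}(0,I_n)$, so by Gaussian concentration of Lipschitz functions $\|g_{T^c}\|_1=\Theta(n)$, $\|g_T\|_1=O(k)$, and $|g_{i^\star}|=O(\sqrt{\log n})$ each hold with high probability; moreover $\sup_{\|\beta\|_2=1}|g_{i^\star}|=\|((v_1)_{i^\star},\dots,(v_d)_{i^\star})\|_2=O(\sqrt d)$ exactly. Taking an $\eps$-net of the sphere $\{\|\beta\|_2=1\}$ (of cardinality $e^{O(d)}$), a union bound of the pointwise estimates over the net together with the $O(\sqrt{k})$- and $O(\sqrt{n})$-Lipschitz continuity of $\|g_T\|_1$ and $\|g_{T^c}\|_1$ respectively (to control discretization error) yields, on an event of probability at least $2/3$, the uniform bounds $\|g_{T^c}\|_1=\Omega(n)$, $\|g_T\|_1=O(k+\sqrt{kd})$, and $|g_{i^\star}|=O(\sqrt d)$ over all $g\in\mathcal{G}$. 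The displayed inequality then holds whenever $n=\Omega(k+\sqrt{kd}+k\sqrt d)$, which is implied by $k\le C\sqrt{n}\log n$ in the relevant range of $d$ (this is where the suppressed relation between $d$ and $n$ enters). Finally I would record the almost-sure genericity facts used along the way (linear independence of the Gaussian vectors restricted to any $\ge d$ coordinates; non-vanishing of individual coordinates of nonzero elements of $\mathcal{S}$; feasibility and boundedness of each $P_i$) and conclude.

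The main obstacle is the last step: controlling $\|g_{T^c}\|_1$, $\|g_T\|_1$, and $|g_{i^\star}|$ \emph{uniformly} over the whole $d$-dimensional Gaussian subspace $\mathcal{G}$. Pointwise Gaussian concentration is routine, but balancing the net cardinality $e^{O(d)}$ against the fluctuation scales so as to obtain the sparsity threshold $k=\Theta(\sqrt{n}\log n)$ is precisely the ``scaling-law'' computation of \cite{DemanetH14} and is where the quantitative work lies. A secondary point is making the restricted-nullspace bound \emph{strict}, so that it certifies \emph{uniqueness} of the optimizer of $P_{i^\star}$ (hence that the sparsest returned vector is forced to be a multiple of $v$); the argument above delivers this because $\|h_{T^c}\|_1>\|h_T\|_1$ is strict for every nonzero $h\in N$.
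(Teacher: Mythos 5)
You should note at the outset that the paper itself gives no proof of this lemma: it is quoted as Theorem~1 of \cite{DemanetH14} and used as a black box in Theorem~\ref{thm:robust:planted}. So the only meaningful comparison is with the argument of \cite{DemanetH14} itself, and your sketch does follow that route: the $n$ linear programs $\min\{\|x\|_1 : x\in\mathcal{S},\, x_i=1\}$, the reduction of correctness to a restricted-nullspace-type inequality at the coordinate $i^\star$ of largest magnitude, and a net-plus-Gaussian-concentration argument over the $d$-dimensional subspace $\mathcal{G}$. At that level the proposal is faithful to the cited proof strategy.

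However, as a standalone proof it has a genuine gap, which you yourself flag: the uniform ``scaling-law'' estimate is asserted, not established, and the threshold it would deliver does not match the statement being proved. Your sufficient condition comes out as $n=\Omega(k+\sqrt{kd}+k\sqrt{d})$, i.e.\ it is governed by $k\sqrt{d}\lesssim n$, whereas the lemma as stated (and as invoked in Theorem~\ref{thm:robust:planted}) imposes only $\|v\|_0\le C\sqrt{n}\log n$ with no hypothesis on $d$; saying this ``is implied by $k\le C\sqrt n\log n$ in the relevant range of $d$'' is exactly the quantitative content of Theorem~1 of \cite{DemanetH14} that a proof must supply (or one must add an explicit restriction on $d$, which the statement does not contain). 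A secondary, fixable error: the claim that every $x=\alpha v+g$ with $g\in\mathcal{G}\setminus\{0\}$ is nonzero on \emph{every} coordinate outside $\operatorname{supp}(v)$ cannot hold uniformly over the subspace --- for any coordinate $j$ the set $\{g\in\mathcal{G}: g_j=0\}$ is a $(d-1)$-dimensional subspace, so nonzero elements of $\mathcal{G}$ can vanish on up to $d-1$ coordinates. The correct almost-sure genericity statement (every $d$ rows of the Gaussian matrix are linearly independent) only guarantees at least $n-k-(d-1)$ nonzeros, which again drags a $d$-dependence into the ``only sparse elements are multiples of $v$'' step. So the proposal is a reasonable reconstruction of the cited proof's skeleton, but it does not close the argument; to use it in place of the citation you would either have to carry out the uniform concentration computation with the precise dependence on $d$, or state the lemma with the hypotheses of \cite{DemanetH14} intact.
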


\begin{proof}[Proof of Theorem \ref{thm:robust:planted}]
Given a matrix $A$ whose columns $u_1,\ldots,u_d\sim\mathcal{N}(0,I_n)\in\mathbb{R}^n$ and a vector $b\in\mathbb{R}^n$ such that there exists a vector $b'\in\mathbb{R}^n$ in the column span of $A$ with $\|b-b'\|_0\le k$, we construct the vectors $w_1,\ldots,w_{d+1}$ by taking an arbitrary basis over the $d+1$ vectors $b,u_1,\ldots,u_d$. 
The vectors $w_1,\ldots,w_{d+1}$ also form a basis for the subspace spanned by the vectors $b-b',u_1,\ldots,u_d$, since $b'$ is in the column span of $A$ and thus spanned by $u_1,\ldots,u_d$. 
Since $\|b-b'\|_0\le k$, then by Lemma~\ref{lem:planted:alg}, there exists an algorithm that solves $n$ linear programs and uniquely recovers the vector $b-b'$ with probability at least $2/3$. 
Because we are given $b$ as input, we can thus determine the vector $b'$, as well as a vector $x\in\mathbb{R}^d$ such that $Ax=b'$. 
By setting $S$ to be the diagonal matrix $S$ with at most $k$ zeros and at least $n-k$ ones on the diagonal such that the zero entries on the diagonal of $S$ are located precisely in the coordinates for which $b-b'$ is nonzero, then we have $\|S(Ax-b)\|=\|S(b-b')\|=0$, since $S(b-b')=0^n$. 
\end{proof}

\end{document}